\documentclass[10pt, a4paper, copyright, gdm]{google}

\usepackage[numbers, sort&compress]{natbib}
\uselogo{}

\title{Where MLLMs Fail and Why: Causal Task Decomposition for Capability Failure Diagnosis}

\correspondingauthor{xiahu@google.com}

\usepackage{hyperref}
\usepackage{url}
\usepackage{amsmath, amssymb, amsthm}
\usepackage{booktabs}
\usepackage{enumitem}
\usepackage{mathtools}    % for \coloneqq
\usepackage{microtype}
\usepackage{multirow}
\usepackage{graphicx}
\usepackage{subcaption}
\usepackage[capitalize,nameinlink]{cleveref} % must come after hyperref/amsthm
\usepackage{xspace}
\usepackage{tabularx}
\usepackage{xurl}

\newtheorem{theorem}{Theorem}[section]
\newtheorem{proposition}[theorem]{Proposition}

\newtheorem{definition}[theorem]{Definition}
\newtheorem{assumption}[theorem]{Assumption}
\theoremstyle{remark}
\newtheorem{remark}[theorem]{Remark}
\newtheorem{property}[theorem]{Property}

\newcommand{\ourdataset}{CADET\xspace}

\newcommand{\acg}[1]{\textcolor{red}{[#1]}}
\newcommand{\nop}[1]{}

\reportnumber{} % Leave blank if n/a

\renewcommand{\today}{2026-09-30}

\author[1]{Xia Hu}
\author[1]{Brian Potetz}
\author[2]{Chun-Ta Lu}
\author[1]{Huanfen Yao}
\author[1,3]{Leonidas Guibas}
\author[1]{Zhicheng Wang}
\author[1]{Howard Zhou}
\author[1]{Pengfei Xing}
\author[1]{Andrew Gallagher}

\affil[1]{\thepa{}{}}   
\affil[2]{Google Research}
\affil[3]{Stanford University}

\begin{abstract}
End-to-end accuracy on compositional tasks records how often MLLMs fail, but cannot distinguish whether a failure reflects an intrinsic deficit in the targeted capability or a cascading error from an upstream prerequisite.
We propose a causal decomposition framework that isolates these two failure modes through controlled interventions on the prerequisite dependencies of each task.
Our capability metrics (NC, IC, RC) score each task under unassisted, correct, or incorrect prerequisites to diagnose where failures arise; 
contribution metrics (N-Score, S-Score), adapted from probabilities of causation, quantify each prerequisite's necessity and sufficiency to determine why.
We instantiate the framework in CADET, a diagnostic benchmark of 10 composite tasks decomposed into 46 unit tasks with over 33,000 human-annotated questions spanning perception, spatial, temporal, and cognitive categories.
Diagnosing frontier MLLMs with our framework uncovers systematic patterns that end-to-end accuracy obscures. 
Capability-wise, supplying correct prerequisites eliminates 54\% of errors on cognitive tasks, lifting them from weakest to above spatial and temporal.
Prerequisite-wise, causal contributions are concentrated in a few critical prerequisites, and supplying the single most important one alone captures 84\% of the gain from supplying all prerequisites.
\end{abstract}

\begin{document}

\maketitle

\section{Introduction}
\label{sec:intro}

Multimodal large language models~(MLLMs) are increasingly evaluated on compositional tasks requiring multiple interdependent capabilities, from perceiving objects and estimating spatial layouts to tracking over time and reasoning~\citep{yue2024mmmu, yang2025thinking}. 
Such tasks are typically scored by a single end-to-end accuracy, which records how often a model fails but not where its failures originate.
The score conflates an intrinsic deficit, where the model lacks the targeted capability, with a cascading error, where it possesses that capability but fails on an upstream prerequisite. 
In Fig~\ref{fig:teaser}, a model failing on multi-view sequential may lack temporal reasoning or merely misjudge the camera viewpoint on which ordering depends, yielding the same error but opposite capability diagnoses. 
More broadly, low accuracy on high-level cognitive tasks is naturally read as a reasoning deficit, a reading that holds only if the perceptual and spatial steps beneath it are sound, and growing evidence suggests they often are not~\citep{fu2024blink,tong2024eyes}.

A common response examines the constituent steps directly, by decomposing the task into separately scored sub-questions~\citep{press2023measuring, zhou2025reasoning} or inspecting the reasoning trace for the first erroneous step~\citep{lightman2024let}. 
Both localize failure more finely than an aggregate score, yet observing untreated generation alone
\nop{\acg{acg: can you define or clarify ``natural generation''? Does this mean the model output, and / or thinking traces?, perhaps consider ``unmanipulated generation'' or ``untreated generation''}good point, changed to untreated} 
cannot disentangle the two failure modes.
% for two reasons. 
First, a model's generated reasoning trace or answers to isolated sub-questions need not reflect what it relies on to solve the full task~\citep{turpin2023language, chen2025reasoning}. 
Second, even a faithful readout reveals only co-occurrence: a prerequisite failing alongside the final task neither shows how much final failure is attributable to it, nor answers the counterfactual had it been solved correctly.
% of what would happen if it were solved correctly. 
Indeed, the outcome need not hinge on the weakest prerequisite:
in Fig~\ref{fig:teaser}, standalone accuracy is lowest for task-5, yet final performance turns on task-2. 
Input-level interventions do move beyond observation, supplying ground-truth intermediates~\citep{lu2024mathvista} or injecting false premises~\citep{guan2024hallusionbench}, but each tests a single condition\nop{in isolation}, without a dependency structure linking sub-tasks or a measure of each prerequisite's contribution. 
What is missing is an evaluation setting every prerequisite's state within a shared, explicit dependency structure, 
so that the same controlled contrasts apply across capabilities and models, and each prerequisite's contribution is measured rather than inferred.

We propose a causal decomposition framework that turns MLLM evaluation from observation into a controlled experiment.
% Specifically, we decompose each complex task into unit tasks, each testing a single constituent capability, and formalize their task-level prerequisite dependencies as a structural causal model (SCM), establishing a shared dependency structure across models using only standard input–output prompting. 
% Over this SCM, we introduce a ternary intervention scheme that sets each prerequisite to a correct, incorrect, or natural (unassisted) state; 
% contrasting all three states separates the benefit of factual correctness from the conditioning effect of supplying an explicit prerequisite answer, which binary oracle-versus-natural injection cannot distinguish. 
We decompose each task into unit tasks with explicit prerequisite dependencies, and intervene on each by supplying a correct answer, an incorrect answer, or nothing.
% -- the incorrect state, absent in binary oracle injection, isolates the effect of upstream error from upstream absence
To diagnose where failures arise, we define \emph{natural}, \emph{intrinsic}, and \emph{resilience} capability (NC, IC, RC) under these three prerequisite states, isolating a model's performance on a target task from the influence of its modeled prerequisites.
To explore why the overall outcome fails, we introduce the $N$-Score and $S$-Score, which measure each prerequisite's causal necessity and sufficiency for the final outcome: the fraction of success lost when it alone is degraded, and the fraction of failure recovered when it alone is supplied. 
These metrics adapt classical probabilities of causation~\citep{tian2000probabilities, pearl1999probabilities} to the LLM setting without assuming monotonicity, providing valid lower bounds on the probabilities of necessity and sufficiency.

To instantiate this framework, we introduce \ourdataset, a multimodal diagnostic benchmark in which each complex task is decomposed via an explicit structural causal model~(SCM)\nop{\acg{acg: (SCM)}} into unit tasks that each target a single capability, and every unit-task question is annotated with both a verified correct answer and a plausible incorrect answer to enable three-state interventions. 
The benchmark spans 10 tasks, yielding 46 unit tasks and over 33,000 unit-task questions across four capability categories: perception, spatial, temporal, and cognitive. 
All questions and annotations are newly created by human annotators to mitigate data contamination and assure benchmark quality.

Evaluation on 12 frontier MLLMs exposes diagnostic patterns that end-to-end accuracy obscures. 
At capability level, cognitive tasks scores lowest naturally ($\text{NC}{=}0.41$) yet rises to second under correct prerequisites~($\text{IC}{=}0.72$), eliminating $54\%$ of its natural errors; whereas spatial and temporal still retain $72$--$75\%$ of their error, locating the more persistent capability gaps there. 
Injecting incorrect prerequisites degrades all models on perception and spatial ($\text{RC}{<}\text{NC}$) yet improves cognitive in 10 of 12 models. 
Turning to causal contributions, $N$- and $S$-Scores reveal effective bottleneck sparsity, where a few critical prerequisites drive each task, and that a corrupted prerequisite
costs far more than an omitted one.
% is far more destructive than omitting them. 
% ($N(1,0){>}N(1,\varnothing)$), especially for cognitive prerequisites.
The single highest-$S$ prerequisite alone captures $84\%$ full-prerequisite gain, and rarely coincides with the lowest-$\text{NC}$ prerequisite. 
Model-wise exhibits distinct diagnostic profiles, especially, the top four achieve close intrinsic capabilities ($\text{IC}$) despite differing in $\text{NC}$, and pair lower degradation loss ($N$) with higher supply recovery ($S$) than others.
% lower prerequisite vulnerability and higher recoverability than the remaining models.
Together, our work
% framework and \ourdataset 
moves MLLM evaluation beyond scoring how often models fail to diagnosing where and why they fail.

\begin{figure}[t]
    \centering
    \includegraphics[width=0.9\linewidth]{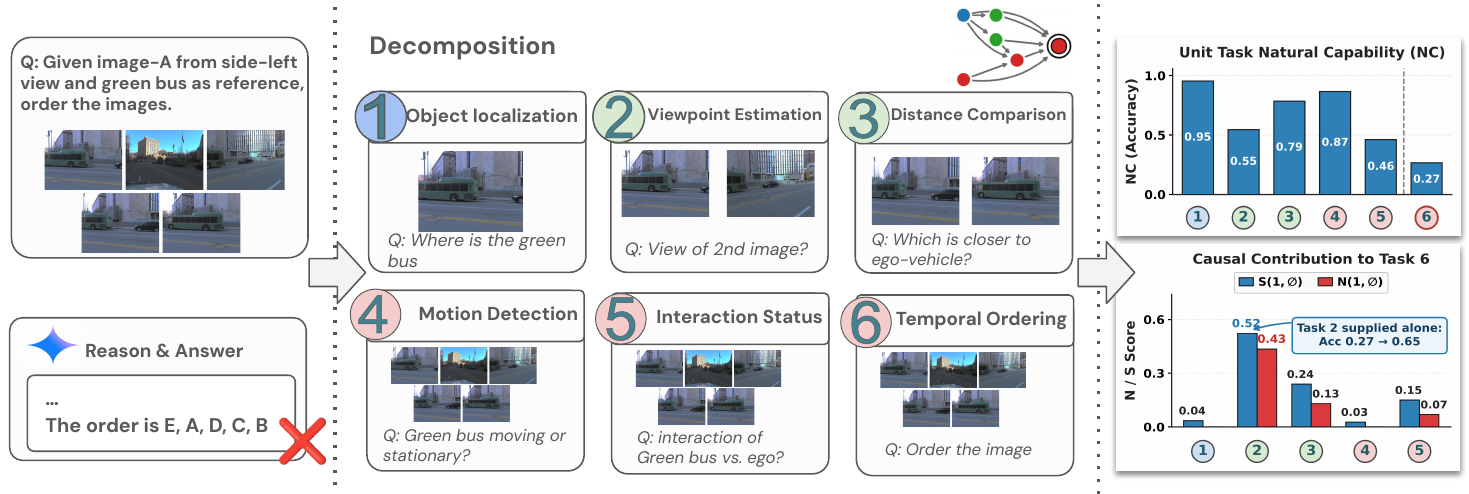}
    \caption{An example \ourdataset task (\textit{left}) decomposed into 6 unit tasks formalized as an SCM (\textit{middle}). \textit{Right}: natural capability (NC) of each unit task (\textit{top}), and their causal contribution ($N$/$S(1,\varnothing)$; \textit{bottom}) to final task-6, where supplying unit task-2 alone raises final accuracy from 0.27 to 0.65.}
    \label{fig:teaser}
\end{figure}

\section{Related Work}
\label{sec:related_work}

\textit{Multimodal Evaluation and Task Decomposition. }
Existing benchmarks evaluate MLLMs across broad capability taxonomies~\citep{yue2024mmmu, liu2024mmbench,zhou2025mathcheck}, 
while specialized suites target individual skills such as perception and spatial~\citep{fu2024blink,yang2025thinking}
or vary visual and textual inputs to test modality reliance~\citep{tong2024eyes,zhang2024mathverse}. 
Beyond evaluating tasks as a whole, one line of work decomposes complex problems into sub-questions and compares sub-task accuracy with overall performance~\citep{press2023measuring, zhou2025reasoning}. 
Another inspects a model's generated reasoning steps to locate where errors first occur~\citep{lightman2024let,zhang2025agent},
though recent studies show that stated chains-of-thought do not always reflect the computation behind the final answer~\citep{turpin2023language,chen2025reasoning}. 
Both read out the model's unassisted behavior, linking sub-task and final correctness by co-occurrence; we instead set each prerequisite to a known correct or incorrect state.

\noindent\textit{Causal Interventions for Model Diagnosis.} 
Causal interventions are used to diagnose MLLMs and LLMs 
by modifying specific components at inference time to measure their effect on outputs, such as patching internal activations \citep{meng2022rome,wu2023das} or perturbing visual and textual modalities~\citep{li2025treble,hasic2026cma}. 
For multi-step reasoning, recent studies apply step-level interventions to execution traces, editing intermediate outputs or re-executing trajectories to identify failure-inducing steps~\citep{ma2026dover,wang2026chief,bonagiri2026causalflow,shah2026car}. 
At task-input level, prior studies provide ground-truth intermediate results to isolate downstream reasoning~\citep{lu2024mathvista,shao2024visual} or inject false premises to test robustness under misleading context~\citep{guan2024hallusionbench,ming2025faitheval}. 
% Across these directions, interventions either rely on dependency structures tied to model internals and single-run trajectories, or test single input conditions in isolation. We instead apply ternary interventions to task-defined prerequisite capabilities over a shared per-task causal graph to diagnose MLLMs on a common basis.
These interventions either operate on model internals or single-run trajectories, or test one input condition in isolation; 
ours act on a task-defined causal graph shared across models.

\noindent\textit{Quantifying Prerequisite Contributions.}
Attributing MLLM behavior to individual components often involves distinguishing whether a factor is necessary to prevent failure or sufficient to drive success, a distinction formalized by counterfactual necessity and sufficiency~\citep{pearl1999probabilities,tian2000probabilities} and adopted in explainable AI for feature-level attribution~\citep{watson2021local,galhotra2021explaining}. 
In LLMs, recent studies adapt these causal concepts to test whether responses on synthetic logic problems conform to the ground-truth rules of the queries~\citep{gonzalez2024does}, or to add missing and prune redundant chain-of-thought steps~\citep{yu2026causal}. 
Both operate on the model's own outputs; we instead attribute final-task outcomes to prerequisite via necessity and sufficiency.
% We formulate necessity- and sufficiency-based diagnostic metrics over prerequisite unit tasks to pinpoint capability bottlenecks and high-leverage repair targets across MLLMs.

\section{Causal Decomposition Framework}
\label{sec:decomp}

% This section formalizes the framework:
% % outlined in Sec~\ref{sec:intro}:
% we represent task dependencies as structural causal model~(Sec~\ref{subsec:scm}), define ternary interventions tailored to LLM evaluation~(Sec~\ref{subsec:intervention}), derive capability metrics isolating intrinsic proficiency~(Sec~\ref{subsec:capability}) and contribution metrics quantifying prerequisite's causal role~(Sec~\ref{subsec:ns_score}).

\subsection{Task Decomposition via Structural Causal Models}
\label{subsec:scm}

Each evaluation task is decomposed into unit tasks testing a single constituent capability, with directed dependencies where some tasks rely on the outcomes of others. 
% Such dependencies cause upstream outcomes to causally propagate downstream, making it essential to model the dependency structure explicitly to disentangle cascading influences from those intrinsic to each capability. 
We represent these dependencies as a structural causal model~(SCM)~\citep{pearl2009causality, peters2017elements}, leveraging the established framework of causal interventions for the analysis below.

\begin{definition}[Task Decomposition SCM]
\label{def:scm}
Consider an evaluation task decomposed into $n$ constituent capabilities. 
A \emph{task decomposition} is a structural causal model (SCM) over unit task variables $X_1, X_2 \ldots, X_n$, where $X_i$ denotes the outcome of unit task $i$, given by the structural assignment:
\begin{equation}
\small
  X_i \;:=\; f_i\!\bigl(\mathrm{Pa}(X_i),\; N_i\bigr),
  \quad i = 1, \ldots, n,
\label{eq:scm}
\end{equation}
where $f_i$ is the structural function determining $X_i$,
$\mathrm{Pa}(X_i) \subseteq \{X_1, \ldots, X_n\}$ is the parents set of $X_i$,
and $N_i$ is a noise variable capturing unmodeled factors such as the model's sampling stochasticity.
\end{definition}

Root nodes (those with $\mathrm{Pa}(X_i) {=} \emptyset$) represent foundational capabilities that can be assessed independently, 
while the final node $Y {:=} X_n$ corresponds to the overall task outcome.

\subsection{Causal Intervention Design}
\label{subsec:intervention}

% Under the dependency structure of SCM, performance on any downstream task conflates two distinct factors: the intrinsic capability and the cascading influence propagated from its upstream prerequisites. 
To disentangle intrinsic capability from cascading upstream influence, and quantify each capability's causal role, we adopt the interventionist framework of causal inference~\citep{pearl1999probabilities,pearl2009causality,rubin1974estimating} and introduce a ternary intervention scheme tailored to LLM evaluation.

\begin{definition}[Ternary Intervention]
\label{def:ternary}
For unit task $X_i$, we define an intervention $A_i {=} a$, $a {\in} \{1, 0, \varnothing\}$:
(i) $A_i {=} 1$ denotes injecting the correct answer for $X_i$ into the prompt;
(ii) $A_i {=} 0$ denotes injecting an incorrect answer for $X_i$;
(iii) $A_i {=} \varnothing$ denotes the {natural state} where $X_i$ is not injected.
% \begin{itemize}[leftmargin=2em,itemsep=1pt,topsep=0pt,parsep=0pt,partopsep=0pt]
%     \item $A_i = 1$ denotes injecting the correct answer for $X_i$ into the prompt;
%     \item $A_i = 0$ denotes injecting an incorrect answer for $X_i$ into the prompt;
%     \item $A_i = \varnothing$ denotes the {natural state} where $X_i$ is not injected.
% \end{itemize}
\end{definition}

\noindent\textbf{Notation.} All probabilities are induced by a fixed model $\mathcal{M}$, whose
subscript we suppress. We write $\mathcal{A} {:=} \{1, 0, \varnothing\}$ for the ternary
intervention space. For a unit task set $S$ and $a {\in} \mathcal{A}$, we write
$do(\mathbf{A}_{S} {=} a)$ for the homogeneous intervention placing every $X_j {\in} S$ in state $a$, 
and abbreviate 
$\mathbf{A}_{-i} {:=} \mathbf{A}_{\mathrm{Pa}(Y) {\setminus} \{X_i\}}$.

Notably, the distinction between $A_i {=} \varnothing$, $A_i {=} 0$ is critical:
the former preserves the model's natural execution path by evaluating it unassisted,
where the latter imposes an active counterfactual intervention testing reasoning under a false premise.
Since LLM evaluation operates through input manipulation rather than passive observation, this ternary formulation emerges as the natural space of interventions, allowing us to systematically evaluate model behavior across each pairwise contrast of states.

\begin{assumption}[Exogeneity under Intervention-Only Identification]
\label{assum:interv}
In LLM evaluation setting, the state of a unit task $X_i \in \{1, 0\}$ admits no faithful observational readout and is identified only through the intervention $A_i {=} a$.
Consequently, probabilities conditioning on  $X_i {=} x$ for $x \in \{1, 0\}$ are identified with the corresponding interventional probabilities:
\begin{equation}
\small
  P(\cdot \mid X_i = x) \;=\; P(\cdot \mid \operatorname{do}(A_i = x)).
\end{equation}
\end{assumption}

\begin{remark}[Justification]
\label{remark:exogeneity}
Assumption~\ref{assum:interv} 
follows from the LLM evaluation setting:
% is not a modeling choice but a consequence of the LLM evaluation setting: 
the state of $X_i$ admits no faithful observational readout~\citep{turpin2023language,chen2025reasoning},
thus a reliable mechanism to place $X_i$ in a determinate state $x {\in} \{1, 0\}$ is to inject the corresponding answer into the prompt, 
making conditioning on $X_i {=} x$ and intervening $\operatorname{do}(A_i = x)$ operationally identical. 
This condition coincides with the \emph{exogeneity} assumption $P(y_x) {=} P(y \mid x)$ of \citet{pearl1999probabilities,tian2000probabilities}, but with an inverted justification: 
classical exogeneity invokes the absence of confounders to treat observations as interventions, whereas ours follows from the absence of a faithful observational channel making intervention the reliable state-assignment mechanism.
\end{remark}

\subsection{Capability Metrics}
\label{subsec:capability}

With the ternary intervention mechanism established, 
we can now place the prerequisites $\mathrm{Pa}(X_i)$ of any target task $X_i$ into controlled states and measure how the model's performance on $X_i$ responds. 
% A single evaluation under the natural state $A = \varnothing$ captures only the model's unassisted outcome, leaving its intrinsic proficiency on $X_i$ confounded with upstream influence.
By additionally evaluating under $\mathrm{do}(\mathbf{A}_{\mathrm{Pa}(X_i)} {=} 1)$ and $\mathrm{do}(\mathbf{A}_{\mathrm{Pa}(X_i)} {=} 0)$, 
we isolate the model's capability on $X_i$ when its declared prerequisites are correctly provided versus actively corrupted.

\begin{definition}[Causal Capability]
\label{def:cap}
Let $X_i$ be any task node in the SCM with prerequisite dependencies
$\mathrm{Pa}(X_i)$. For an intervention state $a \in \mathcal{A}$, the
\textbf{causal capability} of $\mathcal{M}$ on $X_i$ is
\begin{equation}
\small
\mathrm{Cap}(X_i; a) := P\big(X_i {=} 1 \mid do(\mathbf{A}_{\mathrm{Pa}(X_i)} {=} a)\big),
\end{equation}
i.e., the probability that $\mathcal{M}$ solves $X_i$ when all of its prerequisites are
placed in state $a$. This yields three complementary metrics:
(i) \textbf{natural capability} $\mathrm{NC}(X_i) := \mathrm{Cap}(X_i; \varnothing)$, the natural performance where no prerequisite is injected;
(ii) \textbf{intrinsic capability} $\mathrm{IC}(X_i) := \mathrm{Cap}(X_i; 1)$, the performance isolated from declared prerequisites;
% isolated from upstream dependencies; 
and (iii) \textbf{resilience capability} $\mathrm{RC}(X_i) := \mathrm{Cap}(X_i; 0)$, the robustness under corrupted prerequisite context. 
\end{definition}

For root nodes ($\mathrm{Pa}(X_i) {=} \emptyset$), no upstream intervention applies and only $\mathrm{NC}(X_i)$ is defined. 
For non-root nodes, the three metrics disentangle a model's proficiency on $X_i$ from the influence of modeled prerequisites. 
Specifically, comparing $\mathrm{IC}(X_i)$ with $\mathrm{NC}(X_i)$ separates two failure modes: 
a large gap $\mathrm{IC}(X_i) {-} \mathrm{NC}(X_i)$ indicates that the model can solve $X_i$ given correct prerequisites, pointing to upstream cascading errors as the primary bottleneck; 
a small gap with low $\mathrm{IC}(X_i)$ suggests an intrinsic deficit on $X_i$ even when all modeled prerequisites are satisfied.
% Finally, $\mathrm{RC}(X_i)$ measures performance under actively corrupted prerequisites, quantifying resilience to upstream misinformation.
Finally, comparing $\mathrm{RC}(X_i)$ with $\mathrm{NC}(X_i)$ shows whether false prerequisites mislead the model below its natural baseline, or still help guide its reasoning despite being factually wrong.

\subsection{Contribution Metrics: N-Score and S-Score}
\label{subsec:ns_score}

Building on node-level capability diagnostics, understanding how upstream unit tasks across the SCM affect the final outcome $Y$ requires quantifying their causal contributions. 
To formalize this causal attribution, we draw upon the \emph{probabilities of causation} framework~\citep{pearl1999probabilities,tian2000probabilities}, which evaluates counterfactual causal influence through the Probability of Necessity ($\mathrm{PN}$) and Probability of Sufficiency ($\mathrm{PS}$). 
We do not assume monotonicity: in LLM reasoning, injecting incorrect answer ($A_i {=} 0$) can help more than no injection ($A_i {=} \varnothing$), so $\mathrm{RC}$ may exceed $\mathrm{NC}$~(Sec~\ref{subsec:exp_cap}).

% While capability metrics diagnose the model's localized proficiency on specific nodes, they do not quantify how much each capability causally contributes to the overall task outcome.
% To measure this, we adopt the \emph{probabilities of causation} framework~\citep{pearl1999probabilities, tian2000probabilities}, which formalizes causal attribution along two complementary dimensions: necessity and sufficiency.

% In standard causal inference applications, probabilities of causation are often point-identified under a strict monotonicity assumption (i.e., treatment never harms the subject). However, in the context of LLM reasoning, this assumption is frequently violated: injecting an incorrect context might still provide beneficial structural hints compared to providing no context at all. We explicitly reject the monotonicity assumption, which implies that we can only obtain conservative lower bounds rather than exact values.

\begin{property}[Absence of Monotonicity]
\label{assum:non_monotonicity}
Under the LLM evaluation setting, the causal effect of intervening on a unit task $X_i$ via $A_i$ on downstream outcome task $Y$ is not assumed to satisfy monotonicity. That is, the natural ordering below is not required to hold in general:
\begin{equation}
\small
P(Y {=} 1 \mid do(A_i {=} 0)) \le P(Y {=} 1) \le P(Y {=} 1 \mid do(A_i {=} 1)).
\end{equation}
\end{property}

Without monotonicity, exact $\mathrm{PN}$ and $\mathrm{PS}$ values cannot be determined from interventional probabilities alone, 
so we introduce the \textbf{N-Score} and \textbf{S-Score} as observable lower bounds to capture the causal contribution of each prerequisite $X_i \in \mathrm{Pa}(Y)$ to the final outcome $Y$:

\begin{definition}[N-Score]
\label{def:nscore}
Let $Y$ be a target outcome task with prerequisite dependencies $\mathrm{Pa}(Y)$, let
$X_i \in \mathrm{Pa}(Y)$, and let $(a, a')$ be a pair of distinct states in $\mathcal{A}$.
The \textbf{N-Score} of $X_i$ on $Y$ is
\begin{equation}
\footnotesize
\mathrm{N}_{(a,a')}(X_i, Y) := \max\left\{0,\;
\frac{P\big(Y{=}1 \mid do(\mathbf{A}_{\mathrm{Pa}(Y)} {=} a)\big)
    - P\big(Y{=}1 \mid do(A_i {=} a', \mathbf{A}_{-i} {=} a)\big)}
     {P\big(Y{=}1 \mid do(\mathbf{A}_{\mathrm{Pa}(Y)} {=} a)\big)}\right\},
\end{equation}
measuring the fraction of $Y$'s success under state $a$ harmed\nop{\acg{acg: replace with harmed?}} by degrading $X_i$ alone from $a$ to $a'$.
\end{definition}

\begin{definition}[S-Score]
\label{def:sscore}
Under the setting of Definition~\ref{def:nscore}, the \textbf{S-Score} of $X_i$ on $Y$ is
\begin{equation}
\footnotesize
\mathrm{S}_{(a,a')}(X_i, Y) := \max\left\{0,\;
\frac{P\big(Y{=}1 \mid do(A_i {=} a, \mathbf{A}_{-i} {=} a')\big)
    - P\big(Y{=}1 \mid do(\mathbf{A}_{\mathrm{Pa}(Y)} {=} a')\big)}
     {1 - P\big(Y{=}1 \mid do(\mathbf{A}_{\mathrm{Pa}(Y)} {=} a')\big)}\right\},
\end{equation}
measuring the fraction of $Y$'s failure under state $a'$ recovered by upgrading $X_i$ alone from $a'$ to $a$.
\end{definition}

\begin{theorem}
\label{thm:pn_ps_bounds}
Under Assumption~\ref{assum:interv} and Property~\ref{assum:non_monotonicity}, N-Score and S-Score provide valid lower bounds on the Probability of Necessity ($\mathrm{PN}$) and Probability of Sufficiency ($\mathrm{PS}$).
\end{theorem}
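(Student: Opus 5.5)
The plan is to reduce both claims to the classical Tian--Pearl lower bounds on the probabilities of causation and then show that the N-Score and S-Score coincide with, or are dominated by, those bounds.

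\textbf{Counterfactual setup.} Fix $X_i \in \mathrm{Pa}(Y)$ and a pair of states $(a,a')$. Hold the remaining prerequisites fixed at $\mathbf{A}_{-i}{=}a$ for the N-Score and at $\mathbf{A}_{-i}{=}a'$ for the S-Score; this context is common to all terms. Within that context, write $Y_x$ for the potential outcome under $A_i{=}x$, with $x \in \{a,a'\}$, so that $P(Y_a{=}1)$ and $P(Y_{a'}{=}1)$ are exactly the interventional quantities appearing in the definitions. The probabilities of causation are then defined as
\begin{equation*}
\mathrm{PN}=P(Y_{a'}{=}0 \mid A_i{=}a, Y{=}1), \qquad \mathrm{PS}=P(Y_a{=}1 \mid A_i{=}a', Y{=}0).
\end{equation*}
Assumption~\ref{assum:interv} lets us replace conditioning on the state of $X_i$ by the intervention $do(A_i{=}\cdot)$, so consistency gives $P(Y{=}1\mid A_i{=}a)=P(Y_a{=}1)$. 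This is the exogeneity condition of \citet{tian2000probabilities}.

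\textbf{Lower bound for PN.} By Bayes' rule, $\mathrm{PN}=P(Y_a{=}1, Y_{a'}{=}0)/P(Y_a{=}1)$. Apply the Fr\'echet inequality $P(E\cap F)\ge P(E)+P(F)-1$ with $E=\{Y_a{=}1\}$ and $F=\{Y_{a'}{=}0\}$. This gives $P(Y_a{=}1,Y_{a'}{=}0)\ge P(Y_a{=}1)-P(Y_{a'}{=}1)$. Combined with nonnegativity, we get
\begin{equation*}
\mathrm{PN}\ge \max\{0,\,[P(Y_a{=}1)-P(Y_{a'}{=}1)]/P(Y_a{=}1)\}.
\end{equation*}
The right-hand side is exactly $\mathrm{N}_{(a,a')}(X_i,Y)$ once we identify $P(Y_a{=}1)=P(Y{=}1\mid do(\mathbf{A}_{\mathrm{Pa}(Y)}{=}a))$ and $P(Y_{a'}{=}1)=P(Y{=}1\mid do(A_i{=}a',\mathbf{A}_{-i}{=}a))$.

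\textbf{Lower bound for PS.} Symmetrically, $\mathrm{PS}=P(Y_a{=}1,Y_{a'}{=}0)/P(Y_{a'}{=}0)$. The same Fr\'echet bound, now taken in the context $\mathbf{A}_{-i}{=}a'$, yields $\mathrm{PS}\ge\max\{0,[P(Y_a{=}1)-P(Y_{a'}{=}1)]/(1-P(Y_{a'}{=}1))\}$, which is $\mathrm{S}_{(a,a')}(X_i,Y)$. At no point is monotonicity invoked: Property~\ref{assum:non_monotonicity} only explains why we obtain a bound rather than an equality. Under monotonicity the joint probability $P(Y_a{=}1,Y_{a'}{=}0)$ would equal the difference exactly, and the bound would be tight. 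I would note this explicitly, and note the degenerate cases where a denominator equals $0$, where the score is set to $0$ by convention.

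\textbf{Main obstacle.} The delicate step is justifying that the ternary states, especially $\varnothing$, behave as legitimate treatment levels for the counterfactual variables. The natural state is not an assignment of $X_i$'s value, so Assumption~\ref{assum:interv} covers only $x\in\{1,0\}$. I would handle this by defining PN and PS directly in terms of the potential outcomes $Y_{a},Y_{a'}$ indexed by the intervention $A_i$ rather than by $X_i$. For $\varnothing$, $P(Y_\varnothing)$ is simply the observational distribution, and consistency then holds trivially. The remaining care is making sure the other parents are held fixed in the same state in both terms of each score, which the definitions guarantee.
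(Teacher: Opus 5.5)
Your proposal is correct and follows essentially the same route as the paper's appendix proof. Within the submodel where $\mathbf{A}_{-i}$ is held fixed, you use consistency and exogeneity (Assumption~\ref{assum:interv}) to reduce $\mathrm{PN}$ and $\mathrm{PS}$ to $P(Y_a{=}1,Y_{a'}{=}0)$ divided by $P(Y_a{=}1)$ or by $1-P(Y_{a'}{=}1)$, and then apply the Fr\'echet lower bound without invoking monotonicity. The only difference is cosmetic: the paper proves the bounds for an arbitrary configuration $\mathbf{c}$ of the remaining prerequisites and then specializes to $\mathbf{c}=a$ for the N-Score and $\mathbf{c}=a'$ for the S-Score, whereas you work directly in these homogeneous contexts.
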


Specifically, {N-Score} quantifies \textit{causal necessity}
by measuring how much of the model's success on $Y$ depends on $X_i$. 
% by reporting the fraction of success under state $a$ lost when $X_i$ alone is degraded to $a'$.
A high N-Score pinpoints critical performance bottlenecks where one degraded task breaks overall success even when all other prerequisites remain at $a$. 
Conversely, {S-Score} quantifies \textit{causal sufficiency} 
by measuring how much of the model's failure on $Y$ can be overcome through $X_i$.
% by reporting the fraction of failure under state $a'$ recovered when $X_i$ alone is upgraded to $a$.
A high S-Score identifies high-leverage prerequisites where supplying a single task rescues overall success even when other prerequisites remain at $a'$. 
Setting the other prerequisites $\mathbf{A}_{-i}$ asymmetrically to $a$ for N-Score and $a'$ for S-Score aligns with realistic evaluation 
by testing single-task vulnerability under favorable prerequisite conditions and single-task rescue under degraded ones,
making the two metrics complementary rather than directly convertible~\citep{pearl1999probabilities}. 
% as under a shared background state. 
We provide the general definitions of N-Score, S-Score under arbitrary $\mathbf{A}_{-i}$ and theoretical discussions in Appendix.

\begin{table}[t]
    \centering
    \small
    \caption{Operational semantics of N-Score, S-Score across 4 ordered state contrasts $(a, a')$.}
    \label{tab:causal_contrasts}
    \resizebox{0.95\linewidth}{!}{
    \begin{tabular}{c p{0.48\textwidth} p{0.48\textwidth}}
    \toprule
    \textbf{Contrast} & \textbf{N-Score}~(Success drop from $\cdots$) & \textbf{S-Score}~(Failure recovery from $\cdots$) \\
    \midrule
    $(1, 0)$ & corrupting $X_i$ into a wrong answer {in oracle env}. & fixing $X_i$ into the correct answer {in all-wrong env}. \\
    \addlinespace[1pt]
    $(1, \varnothing)$ & leaving $X_i$ unassisted {in oracle env}. & providing the correct $X_i$ alone {in natural env}. \\
    \addlinespace[1pt]
    $(\varnothing, 0)$ & injecting a wrong answer for $X_i$ {in natural env}. & removing a wrong answer for $X_i$ {in all-wrong env}. \\
    \addlinespace[1pt]
    $(0, \varnothing)$ & removing $X_i$'s wrong answer {in all-wrong env}. & injecting a wrong $X_i$ as a scaffold {in natural env}. \\
    \bottomrule
    \end{tabular}
    }
\end{table}

% The two scores are evaluated from opposite baselines by design: necessity is assessed from
% the treatment context $\mathbf{A}_{-i} = a$, where $Y$ is likely to succeed, while
% sufficiency is assessed from the control context $\mathbf{A}_{-i} = a'$, where $Y$ is
% likely to fail. We discuss this coupling and its role in reducing the
% $|\mathcal{A}|^{|\mathrm{Pa}(Y)|}$ context space in Section~\ref{sec:controlling}.
% %
% The N-Score quantifies the minimum probability that a task that succeeded under $X_i = x$ would have failed under $X_i = x'$---a conservative measure of the \emph{necessity} of state $x$. The S-Score quantifies the minimum probability that a task that failed under $X_i = x'$ would have succeeded under $X_i = x$---a conservative measure of the \emph{sufficiency} of state $x$. Both are \emph{lower bounds} computable from experimentally observable marginal probabilities $p_x$ and $p_{x'}$ alone, without requiring the joint distribution $P(Y_x, Y_{x'})$.

% Following the multi-valued treatment framework of \citet{li2024probabilities}, we reduce the ternary treatment to pairwise binary contrasts. For states $x, x' \in \{1, 0, \varnothing\}$ with $x \neq x'$, each pair $(x, x')$ defines a binary causal comparison with distinct semantics:

In practice, we evaluate N-Score and S-Score across four ordered state contrasts $(a, a')$ over $\mathcal{A} {=} \{1, 0, \varnothing\}$: $(1, 0)$, $(1, \varnothing)$, $(\varnothing, 0)$, and $(0, \varnothing)$. 
As summarized in Tab~\ref{tab:causal_contrasts}, the first two contrasts 
%anchor on the correct state ($a=1$) to 
measure both degradation from prerequisite corruption or omission and recovery through targeted correction or repair, whereas the latter two evaluate both directions between $\varnothing$ and $0$ to capture either misinformation harm or structural scaffolding under Property~\ref{assum:non_monotonicity}.
Together, our node-level capability metrics ($\text{NC}, \text{IC}, \text{RC}$) and edge-level contribution metrics ($\text{N-Score}, \text{S-Score}$) form a causal diagnostic framework for complex multimodal tasks, which we instantiate in Sec~\ref{sec:benchmark}.

\section{Benchmark}
\label{sec:benchmark}

To apply the causal decomposition framework to frontier MLLMs and uncover where and why they fail, 
we construct \ourdataset, a multimodal benchmark where complex tasks are decomposed into unit tasks connected via SCM and every prerequisite admits controlled interventions~(details in Appendix).

\noindent\textbf{Benchmark Overview.}
\ourdataset comprises 10 multimodal evaluation tasks across over 1,100 instances, decomposed into 46 atomic unit tasks and over 33,000 unit-task questions, 
with both overall and their constituent unit tasks categorized into four core capabilities: perception, spatial, temporal, and cognitive.
To operationalize our causal framework, each overall task is paired with an explicit SCM defining its prerequisite structure, 
and every unit task is annotated with both a correct and an incorrect answer, along with visual annotations on the images where applicable, to support the ternary intervention scheme ($A_i {\in} \{1, 0, \varnothing\}$) at every node. 
The multimodal inputs span single, multi-images and videos, while the outputs use structured formats (multiple-choice, integer, or list) for 43 of the 46 tasks and free-form text for the remaining three.

\begin{figure}[t]
    \centering
    \includegraphics[width=0.8\linewidth]{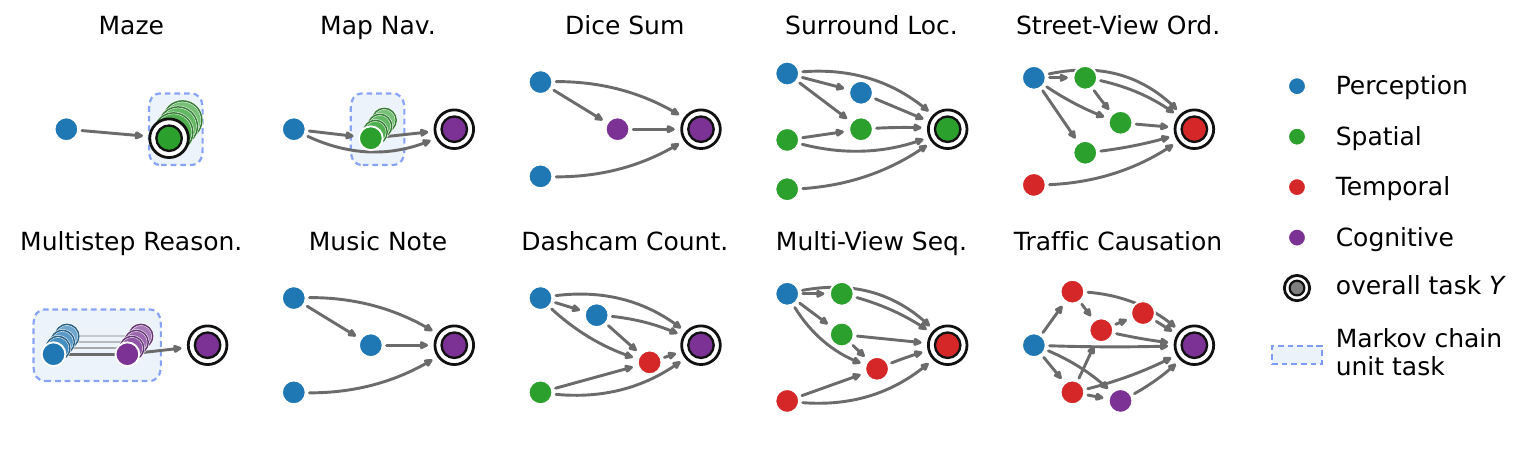}
    \caption{Overview of SCMs corresponding to 10 tasks in \ourdataset.}
    \label{fig:scm_gallery}
\end{figure}

\noindent\textbf{Task Design and SCM Structure.}
Tasks are drawn primarily from real-world scenarios and designed as compositional problems requiring interdependent capabilities on which frontier MLLMs are far from saturation, so that diagnosis targets gaps these models still face.
% We design for broad coverage across input modalities, application domains, and capability compositions, with the four capability categories spanning diverse roles from foundational perceptual inputs to high-level cognitive reasoning. 
We further cover diverse input modalities, application domains, and capability compositions, from foundational perception to high-level cognitive reasoning.
For each task, we formalize the prerequisite dependencies among its unit tasks into an SCM~(Fig~\ref{fig:scm_gallery}). 
These structures range from shallow aggregations (e.g., Dice Sum) to deep multi-path cascades (e.g., Traffic Causation).
In decomposition, we identify unit tasks that involve iterative solving of structurally recurring sub-problems, where each step depends on the previous step's outcome and reasoning depth itself becomes a variable; 
we define these as \textbf{Markov chain unit tasks}.
In the task-level SCM, a Markov chain unit task participates as a single node; for finer-grained analysis, we unfold it into a step-wise chain to track how capabilities evolve with reasoning depth.
Tasks whose primary structure is a Markov chain are analyzed separately in Sec~\ref{subsec:exp_markov}.

\noindent\textbf{Data Construction and Quality Assurance.}
All evaluation content, including questions, answers, and unit-task annotations, is newly created by an experienced rater pool to mitigate data contamination; 
source images and videos are drawn from publicly available datasets with permissive licenses~(e.g., \citet{wilson2023argoverse2}). 
The benchmark is constructed in two stages: raters first generate overall task questions and ground-truth answers under predefined task definitions. 
Based on each task's SCM, we decompose its unit tasks into standardized atomic annotation tasks, such as bounding box localization and semantic labeling. 
Raters provide the corresponding visual annotations and ground-truth labels, which are then converted into unit-task questions and correct answers via structured templates, ensuring consistent alignment between all evaluation items and the underlying causal structure. 
Incorrect answers are randomly sampled from the valid answer space rather than manually crafted, providing a standardized, model-independent perturbation.
% an unbiased estimate of the effect of misinformation. 
To ensure benchmark quality, each annotation across both stages is independently verified by a quality-check rater who returns deficient items for iterative revision until acceptance. 
Beyond per-item verification, instances flagged\nop{by annotators} as ambiguous or un-annotatable are removed during final assembly to ensure reliable causal diagnosis.

\section{Experiments and Findings}
\label{sec:exp}

\subsection{Experiment Setting}
\label{sec:exp:setup}

\textbf{Evaluated Models.}
We evaluate 12 frontier MLLMs across six model families, covering both proprietary series: Gemini~\citep{gemini3pro2025}, GPT~\citep{gpt54}, Grok~\citep{grok4modelcard2025}; and open-weight series: Gemma~\citep{gemma4}, Qwen~\citep{qwen3.5}, Kimi~\citep{team2026kimi25}.
This selection encompasses different model families at varying capability scales, enabling both cross-family and within-family diagnostic comparisons. 
All models are queried through public APIs with default parameter setting.
% and maximum supported context length.
Thinking mode is set to high or left at its default if not-configurable. 
% For models that support configurable reasoning effort, the thinking mode is set to high or left at its default active state.

\noindent\textbf{Intervention Implementation.}
To operationalize the ternary intervention, for each prerequisite unit task $X_i$, we inject its intervened state~($A_i {\in} \{1,0\}$) into prompt as a structured JSON question-answer pair. 
For $A_i {=} 1$, alongside the correct textual answer, we render corresponding visual annotations (e.g., bounding box, path) 
% onto the image 
whenever the prerequisite's ground truth directly represents spatial information in the scene. 
For $A_i {=} 0$, incorrect answer is injected via text without modifying the image, to prevent shifting the visual referent of the query and ensure we test reasoning under a false premise about the intended target. 
Under natural state $A_i {=} \varnothing$, no prerequisite information is provided for $X_i$.

\noindent\textbf{Evaluation Protocol.}
We report the capability metrics (NC, IC, RC) for each unit task, and the contribution metrics (N-Score, S-Score) on the overall task across four state contrasts: $(1, 0)$, $(1, \varnothing)$, $(\varnothing, 0)$, and $(0, \varnothing)$. 
Model predictions are evaluated via exact match for structured outputs, with Gemini 3.6 Flash serving as LLM judger~\citep{zheng2023judging} for free-form text tasks and format mismatches. 
Full experimental settings are provided in Appendix.

%%%%%%%%%%%%%%%%%%%%%%%%%%%%%%%%%%%%%%%%%%%%%%%%%%%%%%%%%%%%%%%%%%%%%%%%
\subsection{Capability Diagnostics}
\label{subsec:exp_cap}

\begin{figure}[t]
    \centering
    \includegraphics[width=\linewidth]{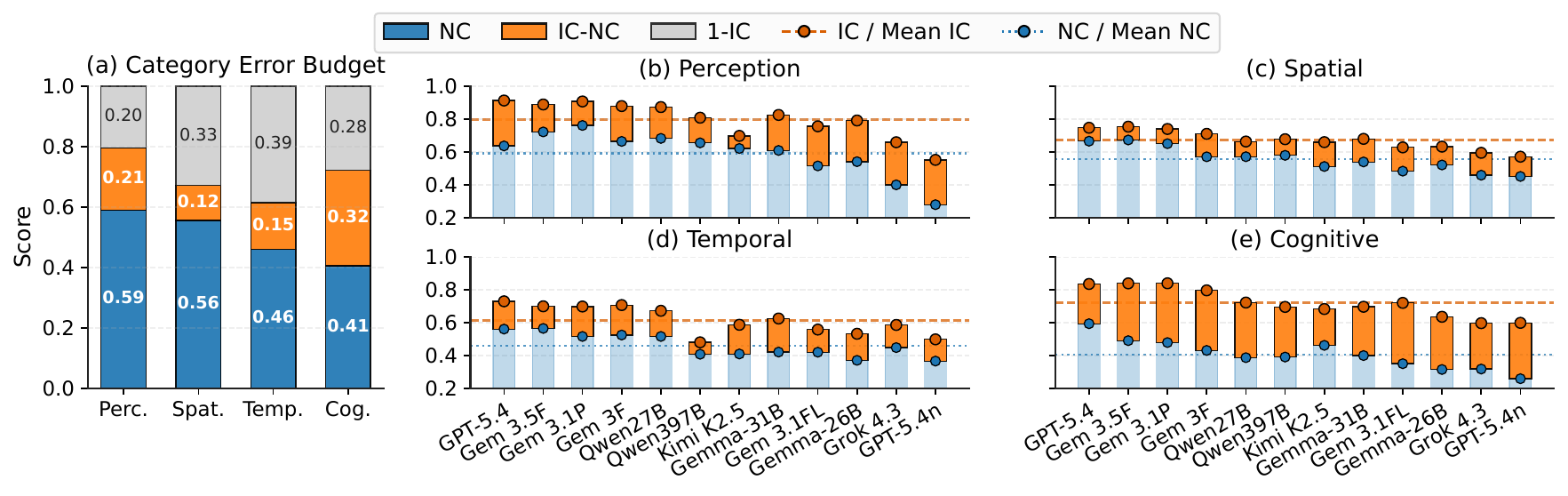}
    \caption{Natural(NC) and intrinsic(IC) capabilities on non-root unit tasks: \textit{(a)} category-level NC and IC averaged over 12 models; \textit{(b--e)} per-model NC and IC within each category.\nop{\acg{acg: if you have space please lower the subplot labels a bit, e.g., (d) Temporal is almost touching graph (b). - xia: I know lol i adjusted this to save space}}}
    \label{fig:finding_cap_combined_design3}
% \end{figure}
% \begin{figure}[t]
%     \centering
    \begin{minipage}[t]{0.57\linewidth}
        \vspace{0pt}%
        \centering
        \includegraphics[width=0.9\linewidth]{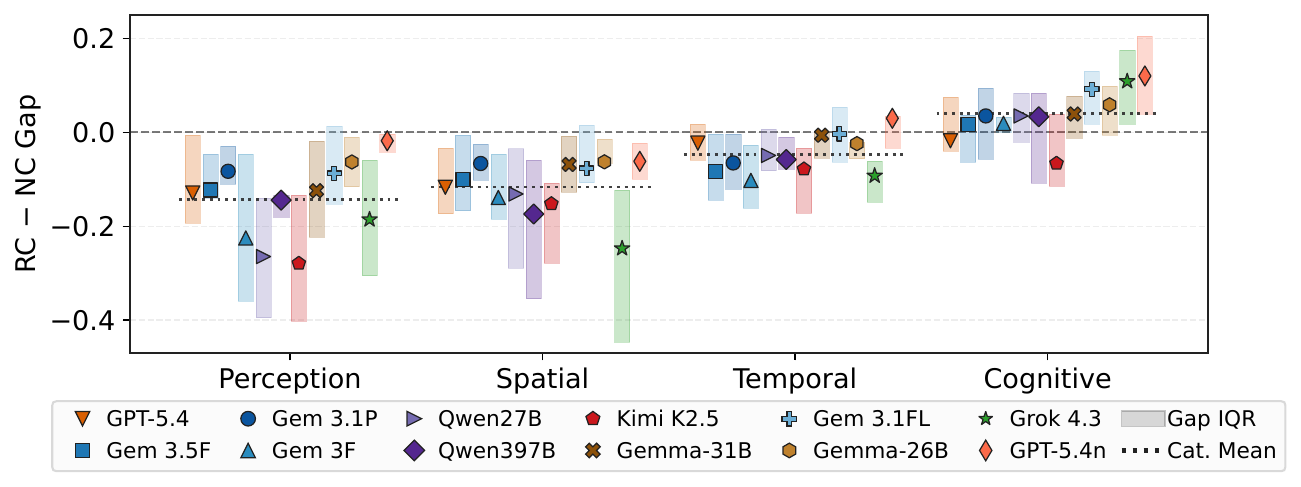} 
        \caption{Per-model $\text{RC}{-}\text{NC}$ gap across categories on non-root unit tasks.}
        \label{fig:finding_gap_rc_nc}
    \end{minipage}
    \hspace{0.02\linewidth}%
    \begin{minipage}[t]{0.38\linewidth}
        \vspace{0pt}%
        \centering
        \includegraphics[width=0.9\linewidth]{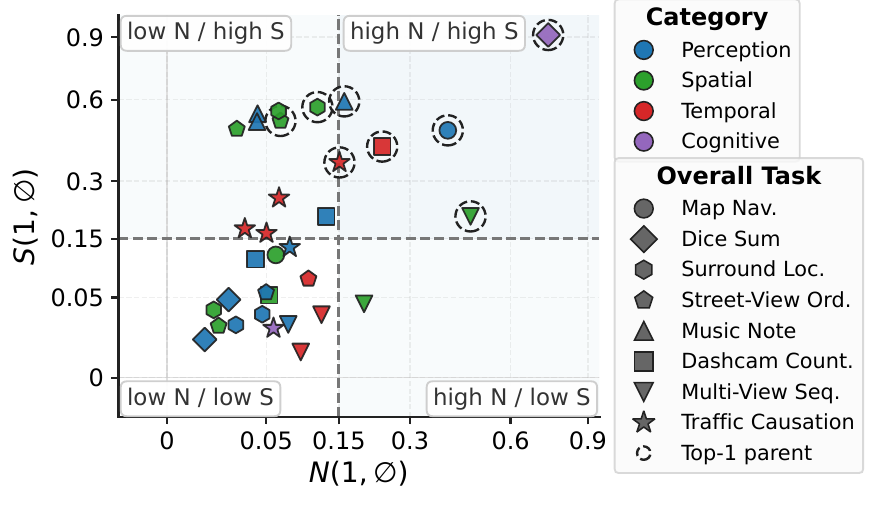}
        \caption{Prerequisite causal contributions $N$/$S(1,\varnothing)$ to overall task $Y$.}
        \label{fig:finding_contribution_quadrant_1_empty}
    \end{minipage}
\end{figure}

\textbf{Cascading errors reorder the apparent capability ranking.} 
Under end-to-end evaluation (NC), accuracy falls across perception, spatial, temporal, and cognitive tasks~(0.59, 0.56, 0.46, 0.41; Fig~\ref{fig:finding_cap_combined_design3}(a)), 
a pattern that would attribute the largest deficits to high-level reasoning. 
Supplying correct prerequisites (IC) reorders the categories: cognitive reasoning moves to second (0.72), above spatial (0.67) and temporal (0.61), behind only perception (0.80). 
Decomposing the total error into a cascading component $\mathrm{IC}{-}\mathrm{NC}$ and a residual component $1{-}\mathrm{IC}$ accounts for this shift, 
as cascading errors constitute 54\% of cognitive failures (0.32 of 0.59).
% and 2.7$\times$ the spatial cascading component (0.12). 
Spatial and temporal reasoning show the opposite composition, with residual errors~(0.33, 0.39) covering 75\% and 72\% of their failures and small gains from correct prerequisites~(0.12, 0.15), locating the more persistent intrinsic capability gaps at these two categories rather than at cognition.

\noindent\textbf{Frontier models share similar intrinsic ceilings but differ in cascading errors}~(Fig~\ref{fig:finding_cap_combined_design3}(b-e)).
Supplying correct prerequisites (IC) brings the top 4 SOTA models to nearly identical ceilings across categories (0.88-0.91 in perception, 0.80-0.83 in cognition), tracing their end-to-end (NC) gaps to cascading errors ($\mathrm{IC}{-}\mathrm{NC}$): 
GPT-5.4 and Gemini family trade places, with larger cascading loss for 
GPT-5.4 in perception~(0.28 vs. 0.15--0.22) and for Gemini in cognition~(0.35--0.37 vs. 0.24).
Across all 12 models, perception exhibits the widest spread in residual errors ($1{-}\mathrm{IC}$: 0.09--0.45), spatial shows uniformly high residual errors (0.25--0.43), and cognitive carries uniformly large cascading errors ($\mathrm{IC}{-}\mathrm{NC}$: 0.22--0.37).
Model-wise, close NC scores mask divergent error compositions: Qwen3.5 397B and Kimi K2.5 match neighboring models in NC yet carry substantially higher residual errors ($1{-}\mathrm{IC}$) in different categories.

\noindent\textbf{Incorrect prerequisites harm perception and spatial but mildly aid cognition.}
The $\text{RC}{-}\text{NC}$ gap rises across the hierarchy~(Fig~\ref{fig:finding_gap_rc_nc}):
% (P$-0.14$, S$-0.12$, T$-0.05$, C$+0.04$; 
all 12 models fall below zero in perception and spatial, while 10 of 12 turn positive in cognition. 
The gap also varies widely across models within each category ($-0.28$ to $+0.12$).
The positive cognitive gaps directly instantiate the non-monotonicity formalized in Property~\ref{assum:non_monotonicity}: incorrect prerequisites do not uniformly degrade performance.

% \paragraph{Finding 4: Non-monotonicity --- incorrect context (RC) produces scaffolding for Cognitive (RC $\geq$ NC) but toxicity for Perception/Spatial (RC $\ll$ NC).}
% Cognitive: most models RC $\geq$ NC (wrong answers still provide structural reasoning cues); Perception/Spatial: all models RC $<$ NC (wrong facts directly poison).
% Consistent across 12 models, clusters by category not SCM complexity; Kimi K2.5 anomalous.

%%%%%%%%%%%%%%%%%%%%%%%%%%%%%%%%%%%%%%%%%%%%%%%%%%%%%%%%%%%%%%%%%%%%%%%%
\subsection{Causal Contribution Analysis}
\label{sec:exp:contribution}

% TODO: Write introductory text.
% Core scores: N(1,0), N(1,empty), S(0,empty)
% N(1,0): necessity of correctness
% N(1,empty): value of providing correct information
% S(0,empty): benefit of wrong information (non-monotonicity diagnostic)

\textbf{Causal contributions exhibit effective bottleneck sparsity.}
Evaluating prerequisite contributions via $N(1, \varnothing)$ and $S(1, \varnothing)$ (Fig~\ref{fig:finding_contribution_quadrant_1_empty}) across 8 non-Markov tasks shows that causal influence is highly uneven across parents, with each task driven by one or a few critical prerequisites.
These critical parents act either as core bottlenecks that cause large oracle drops when omitted and strong natural recovery when provided (high $N$/high $S$), or as compensable levers whose omission is offset by other oracle prerequisites yet still rescue natural failures alone (low $N$/high $S$).
By contrast, 
the remaining prerequisites have individually small effects 
% nearly half of the prerequisites have minimal standalone impact 
(low $N$/low $S$), while conjunctive bottlenecks that hurt oracle performance without aiding natural recovery are nearly absent (high $N$/low $S$).
% \paragraph{Finding-1: on model-wise everage, in figure4, all task have a critical parent, in either high N high S; high N low S; low N high S. } notably, high N measures the sentivitity in oracle env; high S measures the fliping success in natural env. 

\begin{figure}[t]
    \centering
    \includegraphics[width=0.9\linewidth]{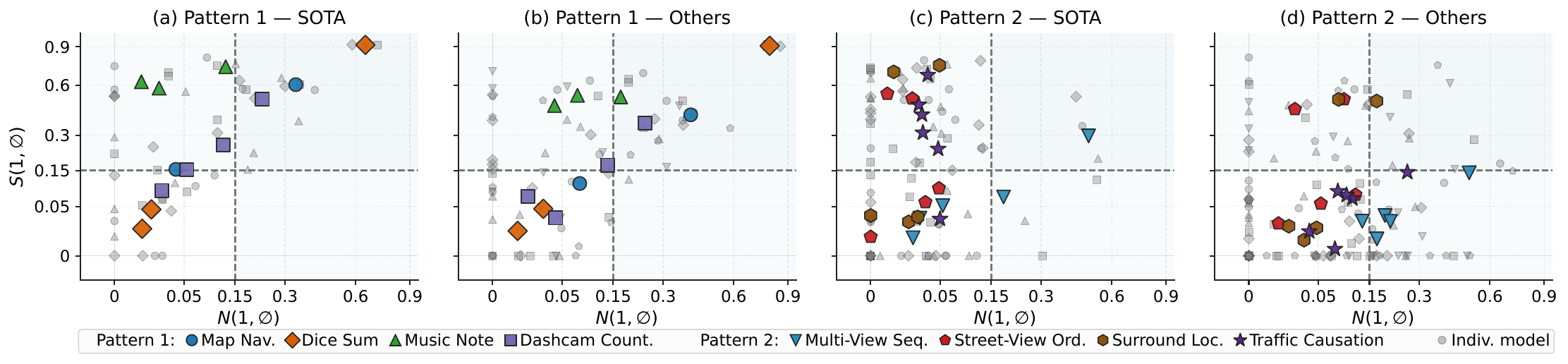} 
    \caption{$N/S(1,\varnothing)$ across Pattern~1(\textit{a,b}) and Pattern~2 (\textit{c,d}) tasks for top-4 SOTA (\textit{a,c}) and remaining 8 Others (\textit{b,d}) models ($sqrt({\cdot})$-scaled axes, similar in Fig~\ref{fig:finding_contribution_quadrant_1_empty}).}
    \label{fig:finding_contrib_4panel_quadrants}
\end{figure}

\noindent\textbf{Disaggregating by model reveals two distinct criticality patterns.}
Expanding to individual models (Fig~\ref{fig:finding_contrib_4panel_quadrants}) reveals two task patterns and a broad split between top-4 (SOTA) and remaining (Others) models.
Under Pattern 1 (Fig~\ref{fig:finding_contrib_4panel_quadrants}a,b), both groups exhibit critical prerequisites with high single-point recovery capacity (high $S$), with Others shifting moderately toward higher $N$ and lower $S$.
Under Pattern 2 (Fig~\ref{fig:finding_contrib_4panel_quadrants}c,d), SOTA keeps prerequisites at high $S$ with low oracle omission sensitivity (low $N$), whereas Others shift substantially toward higher $N$ and lower $S$, losing effective single-point recoverability while suffering larger oracle drops from single-task omission.
Across both patterns, SOTA maintains low oracle vulnerability and often has multiple high-$S$ prerequisites per task that each could rescue natural failures alone;
while Others drift rightward and downward, moderately in Pattern 1 and severely in Pattern 2, reflecting higher oracle sensitivity and weaker single-task recovery.
Structurally, the prerequisite count $|\mathrm{Pa}(Y)|$ averages 3.0 in Pattern 1 and 5.25 in Pattern 2.
% 
% \paragraph{Finding-2: model x task. in figure 5}
% pattern 1: on 4 tasks, all models show same patterns (either high N high S or low N high S). demonstrating the same critical parent unit task; 
% pattern2: others v.s. 4 sota, higher N and lower S. 
% pattern 1: avg k < pattern 2 avg k 
% :: maybe an ablation study exp: group redundant parents, measure and report 

\begin{figure}[t]
    \centering
        \begin{minipage}[t]{0.34\linewidth}
        \vspace{0pt}%
        \centering
        \includegraphics[width=0.95\linewidth]{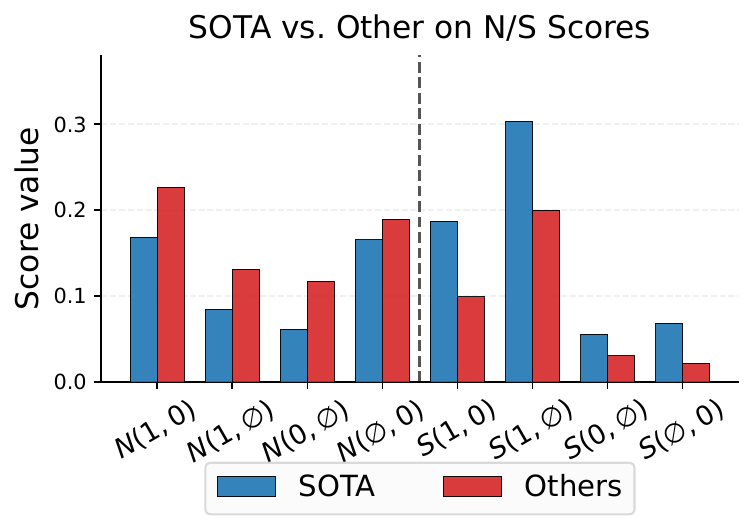}
        \caption{$N$/$S$-Scores across the four state contrasts for SOTA vs.\ Others models.}
        \label{fig:finding_contrib_ns_sota_vs_others}
    \end{minipage}
    \hfill
    \begin{minipage}[t]{0.65\linewidth}
        \vspace{0pt}
        \centering
        \includegraphics[width=0.9\linewidth]{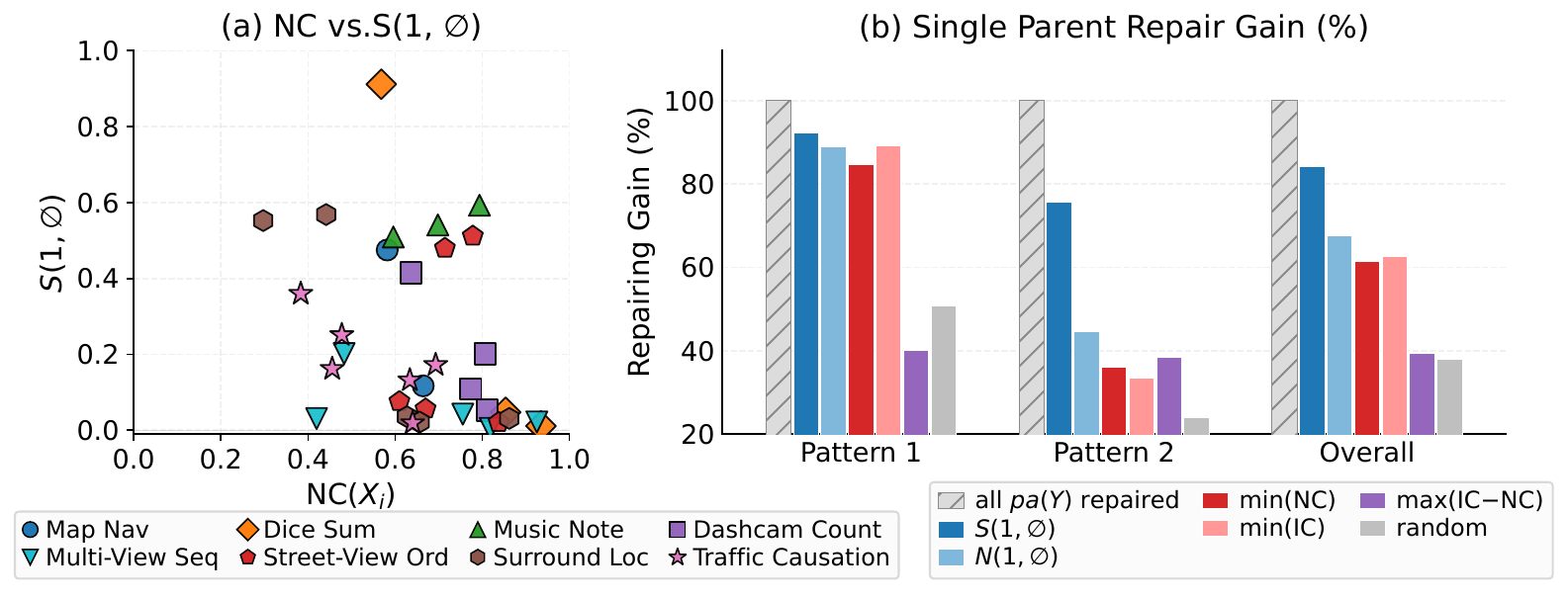}
        \caption{\textit{(a)} $\text{NC}(X_i)$ vs.\ $S(1,\varnothing)$; \textit{(b)} percentage of full all-prerequisite gain($\text{IC}{-}\text{NC}$) recovered by single-parent supply across selection criteria.}
        \label{fig:finding_contrib_repair_gain_s1empty}
    \end{minipage}
    % \hfill
\end{figure}
\begin{figure}[t]
    \centering
    \begin{minipage}[t]{0.28\linewidth}
        \vspace{0pt}%
        \centering
        \includegraphics[width=0.85\linewidth]{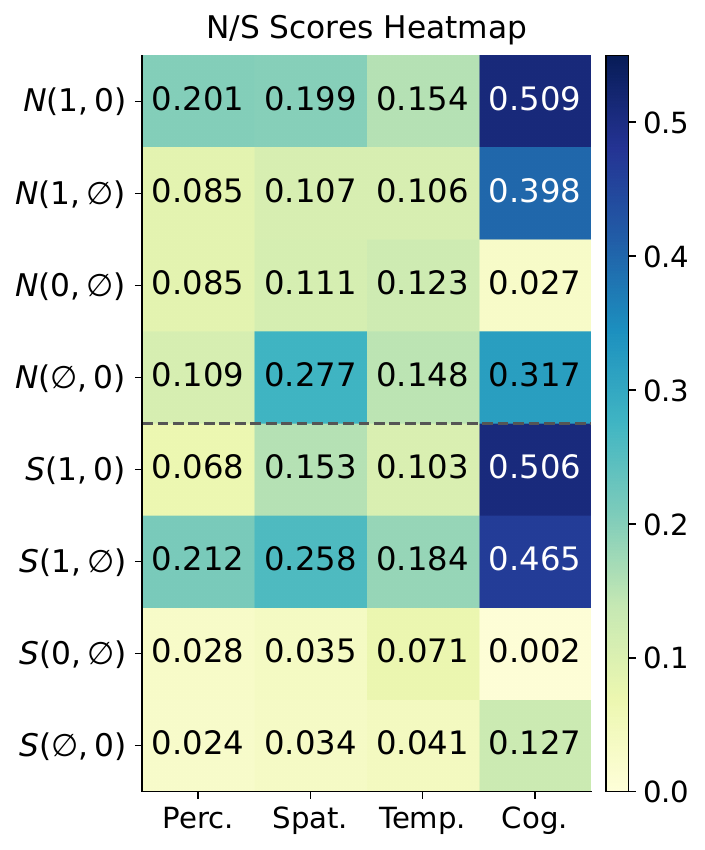}
        \caption{Category-level prerequisite $N$/$S$-Scores across 4 state contrasts.}
        \label{fig:finding_contrib_ns_heatmap}
    \end{minipage}
    \hspace{0.01\linewidth}%
    % Left figure
    \begin{minipage}[t]{0.7\linewidth}
        \vspace{0pt}%
        \centering
        \includegraphics[width=\linewidth]{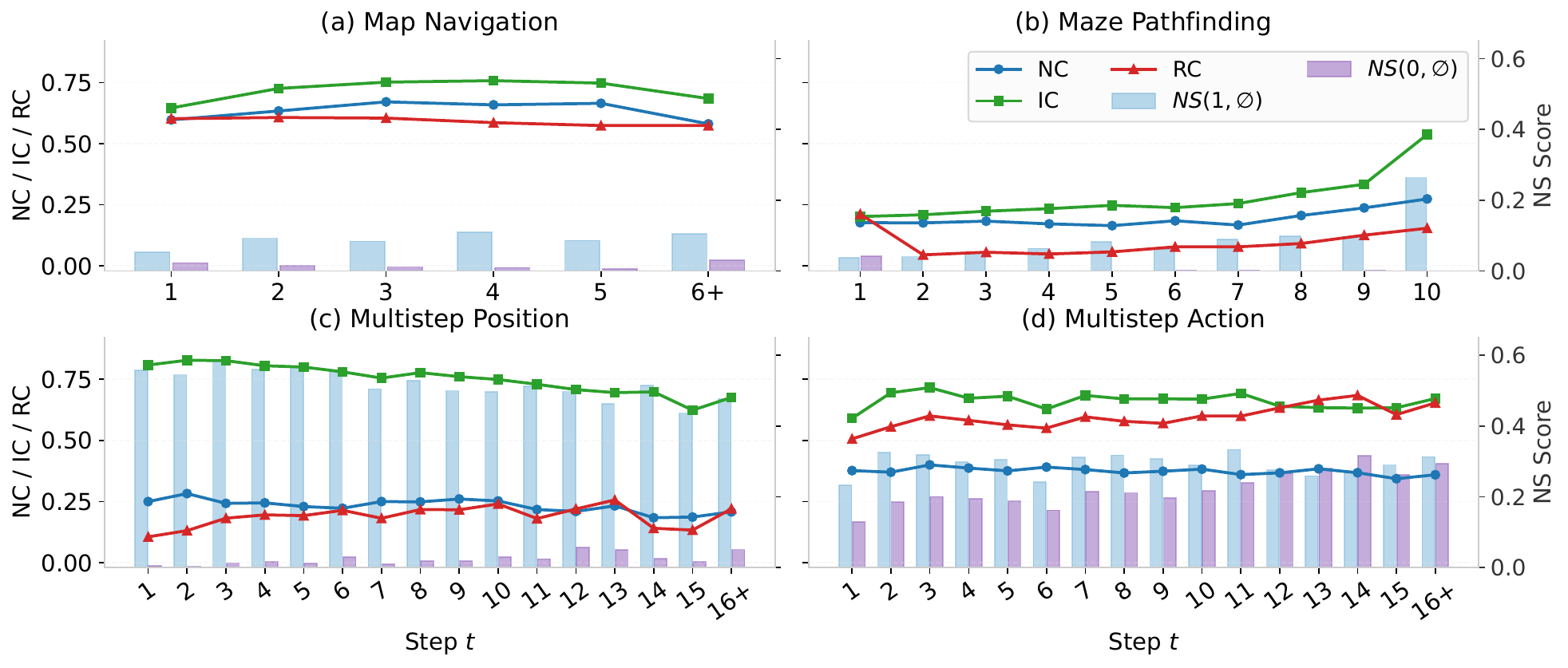}
        \caption{Step-wise capabilities (NC, IC, RC) and causal contributions ($NS(1,\varnothing)$, $NS(0,\varnothing)$) along step $t$ across Markov chain unit tasks.}
        \label{fig:finding_markov_causal_profiles}
    \end{minipage}
\end{figure}

\noindent\textbf{A single supplied prerequisites recovers most full-prerequisite gains.}
Supplying only the single most sufficient prerequisite ($\max S(1, \varnothing)$) recovers 84\% of the full all-prerequisite gain ($\mathrm{IC} {-} \mathrm{NC}$) overall (Fig~\ref{fig:finding_contrib_repair_gain_s1empty}b), showing that one targeted intervention already accounts for most of the attainable gain.
This leverage is not readable from standalone accuracy: $S(1, \varnothing)$ does not track $\mathrm{NC}(X_i)$ across prerequisites (Fig~\ref{fig:finding_contrib_repair_gain_s1empty}a).
To test whether cheaper node-level signals could substitute for $S(1, \varnothing)$ in identify the same prerequisite, 
we repeat single-parent repair under alternative criteria (including the weakest prerequisite $\min(\mathrm{NC})$; Fig~\ref{fig:finding_contrib_repair_gain_s1empty}b), none of which recovers this single-parent optimum overall.
The shortfall of $\mathrm{NC}$ stays around 8 pp in Pattern~1 tasks, where models share a single dominant bottleneck;
but widens sharply in Pattern~2 tasks with denser multi-parent dependencies, where $S(1, \varnothing)$-chosen parent sustains 76\% of the full gain while $\mathrm{NC}$-based only 36\%.
% 
% \paragraph{Finding-3: which unit task to repair? }
% figure 6. repair gain of since unit task (max S1,e) obtains more than $80\%$ of fix all 
% NC based repair: 65\% v.s. 86\%

% \begin{figure}[t]
%     \centering
%     \includegraphics[width=\linewidth]{figures/fig_contrib_finding4_ternary_profiles.pdf}
%     \caption{Caption}
%     \label{fig:contrib_finding4_ternary_profiles}
% \end{figure}

\noindent\textbf{Incorrect prerequisites hurt far more than omission.}
Extending to full ternary contrast space (Fig~\ref{fig:finding_contrib_ns_sota_vs_others},\ref{fig:finding_contrib_ns_heatmap}), injecting incorrect prerequisites is consistently more destructive than leaving them unassisted ($N(1,0) {>} N(1,\varnothing)$ across all categories), with minor gains from incorrect answers on temporal~($S(0,\varnothing){=}0.071$).
Unlike cognitive target tasks benefiting from incorrect prerequisites (Sec~\ref{subsec:exp_cap}), the two cognitive prerequisites are highly sensitive to correctness: corrupting or omitting them causes the steepest overall task drops ($N(1,0), N(1,\varnothing)$) with no gain from incorrect answers ($S(0,\varnothing)$), while fixing or removing them drives strong recovery ($S(1,0), S(\varnothing,0)$). 
SOTA models~(Fig~\ref{fig:finding_contrib_ns_sota_vs_others}) pair uniformly lower $N$ with higher $S$ than Others across all four contrasts, 
% nearly doubling recovery in all-wrong contexts ($S(1,0)$), 
showing stronger robustness to single-prerequisite degradation with superior single-point recovery even under corruption.

\subsection{Markov Chain Unit Task Analysis}
\label{subsec:exp_markov}

We analyze the Markov chain unit tasks~(Sec~\ref{sec:benchmark}),
in which each $X_t$ depends solely on its predecessor $X_{t-1}$.
Under this single-parent structure, N, S-Score converge into
a single NS-Score~(${NS}(a,a'){=}{N}{(a,a')}{\cdot}\mathrm{Cap}(X_t;a)$; Appendix).
Fig~\ref{fig:finding_markov_causal_profiles} reports
$\mathrm{NC}$, $\mathrm{IC}$, $\mathrm{RC}$ and
% ${NS}(1,\varnothing)$, ${NS}(0,\varnothing)$
$NS$ score along step~$t$.
% , revealing three findings.
%
First, capability degrades nonlinearly with a single-step effective horizon: 
in goal-directed tasks, $\mathrm{IC}$ and ${NS}(1, \varnothing)$ remain flat until one step from the exit (Fig~\ref{fig:finding_markov_causal_profiles}b, $t{=}10$); 
symmetrically, in process-cumulative tasks where $\mathrm{NC}$ collapses to a flat floor immediately at $t{=}1$, $\mathrm{IC}$ and ${NS}(1, \varnothing)$ peak at the first step (Fig~\ref{fig:finding_markov_causal_profiles}c) and gradually decline with depth.
Second, alternating Position (Fig~\ref{fig:finding_markov_causal_profiles}c) and Action (Fig~\ref{fig:finding_markov_causal_profiles}d) nodes within the same chain respond divergently to incorrect prerequisites: 
${NS}(0, \varnothing) {\approx} 0$ for spatial tracking versus substantial for action planning, consistent with the finding in Sec~\ref{subsec:exp_cap} that incorrect prerequisites benefit cognitive reasoning but not perception.
Third, as depth $t$ grows, Position continues to require factual correctness (${NS}(0, \varnothing) \le 0.06 \ll \text{NS}(1, \varnothing)$), whereas in cognitive Action, ${NS}(0, \varnothing)$ steadily rises from $0.12$ ($t{=}1$) to $0.30$ ($t{\ge}14$) to match $\text{NS}(1, \varnothing)$ ($\text{RC} {\approx} \text{IC}$), showing that explicit prerequisite conditioning, regardless of correctness, increasingly accounts for cognitive recovery at deeper horizons.

\section{Conclusion}
\label{sec:conclusion}

We introduced a causal decomposition framework and the CADET benchmark, applying controlled ternary interventions over explicit prerequisite structures to diagnose where and why MLLMs fail. 
Our capability metrics (NC, IC, RC) separate cascading influence from intrinsic deficits, showing that supplying correct prerequisites substantially lifts cognitive tasks while spatial and temporal reasoning retain persistent intrinsic gaps, and that incorrect prerequisites often still aid cognition. 
The $N$-, $S$-Scores reveal effective bottleneck sparsity, where a few critical prerequisites drive each task, and supplying the single most sufficient one recovers most full-prerequisite gain yet rarely coincides with the least accurate prerequisite. 
Across models, top ones exhibit lower degradation alongside stronger single-point recovery. 
More broadly, our diagnostic framework can inform future improvements, from inference-time CoT guidance, query rewriting to training and multi-step system design.

% \section*{Acknowledgments}
% ...

\bibliography{iclr2027_conference} % consider renaming the .bib to reference.bib

\newpage
\appendix
\clearpage

\section{Details and Proofs of Causal Decomposition Framework}

% more general definition of n-score, s-score 

In this section, we provide the general definitions of N-Score and S-Score under arbitrary prerequisite states, followed by the theoretical proofs and properties of our causal decomposition framework.
% \todo{add a sentence pointing to classic work, and add citations}

\textbf{The do-operator.}
We adopt the standard do-operator~(Section~\ref{subsec:intervention}; \citet{pearl2009causality}). For $a \in \{1, 0\}$,
$do(A_i {=} a)$ replaces the structural assignment of $X_i$ in \eqref{eq:scm} with
$X_i := a$ while keeping all other assignments unchanged, and $do(A_i {=} \varnothing)$ leaves
the assignment of $X_i$ intact; interventions on multiple unit tasks apply these operations
jointly. $P(\cdot \mid do(\cdot))$ denotes the distribution induced by the resulting SCM.

\subsection{General Definition of N-Score, S-Score}

In Section~\ref{subsec:ns_score} (Definitions~\ref{def:nscore} and~\ref{def:sscore}), when quantifying the causal contribution of a prerequisite unit task $X_i \in \mathrm{Pa}(Y)$ to the target outcome $Y$ across an ordered contrast $(a, a')$, we specifically set the remaining prerequisites $\mathbf{A}_{-i} := \mathbf{A}_{\mathrm{Pa}(Y) \setminus \{X_i\}}$ to the homogeneous state $a$ for N-Score and $a'$ for S-Score. Here, we generalize this formulation by allowing the remaining prerequisites to take any arbitrary intervention state configuration. 

Specifically, let $\mathbf{c} \in \mathcal{A}^{|\mathrm{Pa}(Y) \setminus \{X_i\}|}$ denote an arbitrary intervention state configuration assigned to $\mathbf{A}_{-i}$, denoted as $do(\mathbf{A}_{-i} = \mathbf{c})$, which sets the intervention state of each prerequisite $X_j \in \mathrm{Pa}(Y) {\setminus} \{X_i\}$ to $A_j = c_j$ with $c_j \in \mathcal{A}$. Following the notation in Section~3.2, when all unit tasks in $\mathbf{A}_{-i}$ are placed in a single homogeneous state $c \in \mathcal{A}$ (i.e., $c_j = c$ for all $X_j \in \mathrm{Pa}(Y) {\setminus} \{X_i\}$), we write $do(\mathbf{A}_{-i} = c)$ without boldface.
Based on this general state configuration $\mathbf{c}$ for the remaining prerequisites, we define the general N-Score and S-Score as follows.
\begin{definition}[General N-Score]
\label{def:general_n_score}
Let $Y$ be a target outcome task with prerequisite dependencies $\mathrm{Pa}(Y)$, let $X_i \in \mathrm{Pa}(Y)$, and let $(a, a')$ be an ordered pair of distinct states in $\mathcal{A}$. For any intervention state configuration $\mathbf{c} \in \mathcal{A}^{|\mathrm{Pa}(Y) \setminus \{X_i\}|}$ assigned to the remaining prerequisites $\mathbf{A}_{-i}$, the general N-Score of $X_i$ on $Y$ is defined as:
\begin{equation}
\small
\label{eq:general_n_score}
    \mathrm{N}_{(a,a')}(X_i, Y \mid \mathbf{c}) := \max \left\{ 0, \, \frac{P\big(Y{=}1 \mid do(A_i{=}a, \mathbf{A}_{-i}{=}\mathbf{c})\big) - P\big(Y{=}1 \mid do(A_i{=}a', \mathbf{A}_{-i}{=}\mathbf{c})\big)}{P\big(Y{=}1 \mid do(A_i{=}a, \mathbf{A}_{-i}{=}\mathbf{c})\big)} \right\},
\end{equation}
measuring the fraction of $Y$'s success under $do(A_i{=}a, \mathbf{A}_{-i}{=}\mathbf{c})$ that is lost when $X_i$ alone is degraded from $a$ to $a'$.
\end{definition}

\begin{definition}[General S-Score]
\label{def:general_s_score}
Under the setting of Definition~\ref{def:general_n_score}, for any intervention state configuration $\mathbf{c} \in \mathcal{A}^{|\mathrm{Pa}(Y) \setminus \{X_i\}|}$ assigned to $\mathbf{A}_{-i}$, the general S-Score of $X_i$ on $Y$ is defined as:
% {\small
\begin{equation}
\small
\label{eq:general_s_score}
    \mathrm{S}_{(a,a')}(X_i, Y \mid \mathbf{c}) := \max \left\{ 0, \, \frac{P\big(Y{=}1 \mid do(A_i{=}a, \mathbf{A}_{-i}{=}\mathbf{c})\big) - P\big(Y{=}1 \mid do(A_i{=}a', \mathbf{A}_{-i}{=}\mathbf{c})\big)}{1 - P\big(Y{=}1 \mid do(A_i{=}a', \mathbf{A}_{-i}{=}\mathbf{c})\big)} \right\},
\end{equation}
% }
measuring the fraction of $Y$'s failure under $do(A_i{=}a', \mathbf{A}_{-i}{=}\mathbf{c})$ that is recovered when $X_i$ alone is upgraded from $a'$ to $a$.
\end{definition}
Under these general definitions, the N-Score (Definition~3.7) and S-Score (Definition~3.8) introduced in Section~3.4 correspond to the special cases where every element of $\mathbf{c}$ is set to the homogeneous state $c_j = a$ (i.e., $do(\mathbf{A}_{-i} = a)$) for N-Score and $c_j = a'$ (i.e., $do(\mathbf{A}_{-i} = a')$) for S-Score, respectively.

\subsection{Lower Bound Guarantee of N-Score and S-Score}
\label{app:lower_bound_proofs}

This section provides the formal statement and proof of Theorem~\ref{thm:pn_ps_bounds} in Section~\ref{subsec:ns_score}. We establish the lower-bound guarantee for the general N-Score and S-Score (Definitions~\ref{def:general_n_score} and~\ref{def:general_s_score}) under an arbitrary prerequisite configuration $\mathbf{c}$ (Theorems~\ref{thm:general_n_score_bound} and~\ref{thm:general_s_score_bound}), where Theorem~\ref{thm:pn_ps_bounds} corresponds to the special cases $\mathbf{c}=a$ for N-Score and $\mathbf{c}=a'$ for S-Score.

\begin{theorem}
\label{thm:general_n_score_bound}
Under Assumption~\ref{assum:interv} (Exogeneity), for any prerequisite unit task $X_i \in \mathrm{Pa}(Y)$, any ordered contrast $(a, a')$ of distinct states in $\mathcal{A}$, and any intervention state configuration $\mathbf{c} \in \mathcal{A}^{|\mathrm{Pa}(Y)\setminus\{X_i\}|}$ assigned to the remaining prerequisites $\mathbf{A}_{-i}$, the general N-Score (Definition~\ref{def:general_n_score}) provides a valid lower bound on the Probability of Necessity ($\mathrm{PN}$) under prerequisite context $do(\mathbf{A}_{-i}{=}\mathbf{c})$:
\begin{equation}
\small
    \mathrm{PN}_{(a,a')}(X_i, Y \mid \mathbf{c}) \ge \mathrm{N}_{(a,a')}(X_i, Y \mid \mathbf{c})
\end{equation}
In particular, setting every element of $\mathbf{c}$ to the homogeneous state $a$ (i.e., $do(\mathbf{A}_{-i}{=}a)$) directly establishes the N-Score lower bound in Theorem~\ref{thm:pn_ps_bounds}.
\end{theorem}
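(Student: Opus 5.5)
We follow the Tian--Pearl route for bounding probabilities of causation, working throughout inside the fixed context $do(\mathbf{A}_{-i}{=}\mathbf{c})$. First we fix the definition of $\mathrm{PN}$ for the contrast $(a,a')$. Write $Y_a$ and $Y_{a'}$ for the potential outcomes of $Y$ under $do(A_i{=}a,\mathbf{A}_{-i}{=}\mathbf{c})$ and $do(A_i{=}a',\mathbf{A}_{-i}{=}\mathbf{c})$. Both are defined on the same exogenous noise $(N_1,\dots,N_n)$ of the SCM in Definition~\ref{def:scm}, so their joint distribution exists even though only the marginals are observable. Define
\[
\mathrm{PN}_{(a,a')}(X_i,Y\mid\mathbf{c}) := P\big(Y_{a'}{=}0 \mid X_i{=}a,\, Y{=}1\big),
\]
the probability that success observed with $X_i$ in state $a$ would have turned into failure had $X_i$ been in state $a'$. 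Write $p_a := P(Y_a{=}1)$ and $p_{a'} := P(Y_{a'}{=}1)$, and assume $p_a>0$ so that the N-Score is well defined.

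\textbf{Step 1: reduce to a counterfactual ratio.} By Assumption~\ref{assum:interv}, conditioning on $X_i{=}a$ coincides with intervening $do(A_i{=}a)$. For $a=\varnothing$ we read ``$X_i{=}\varnothing$'' as the natural regime, in which the intervention is trivial. Hence the event $\{X_i{=}a, Y{=}1\}$ in context $\mathbf{c}$ is the event $\{Y_a{=}1\}$, and
\[
\mathrm{PN} = \frac{P(Y_a{=}1,\,Y_{a'}{=}0)}{p_a}.
\]

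\textbf{Step 2: Fréchet bound on the numerator.} Partition $\{Y_a{=}1\}$ according to the value of $Y_{a'}$:
\[
P(Y_a{=}1,Y_{a'}{=}0) = p_a - P(Y_a{=}1,Y_{a'}{=}1).
\]
Since $P(Y_a{=}1,Y_{a'}{=}1)\le p_{a'}$, the numerator is at least $p_a-p_{a'}$. It is also at least $0$. Dividing by $p_a$ gives
\[
\mathrm{PN}\ge \max\{0,(p_a-p_{a'})/p_a\},
\]
which is exactly Definition~\ref{def:general_n_score}. No ordering between $p_a$ and $p_{a'}$ is used anywhere, so the argument is consistent with Property~\ref{assum:non_monotonicity}. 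This is also why we obtain only a bound and not the equality that would hold under monotonicity.

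\textbf{Step 3: specialize.} Setting $\mathbf{c}=a$ gives the N-Score half of Theorem~\ref{thm:pn_ps_bounds}.

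The same template handles the S-Score. Define
\[
\mathrm{PS} := P\big(Y_a{=}1 \mid X_i{=}a',\,Y{=}0\big) = \frac{P(Y_a{=}1,Y_{a'}{=}0)}{1-p_{a'}}.
\]
The numerator bound from Step 2 applies unchanged, which gives Definition~\ref{def:general_s_score}.

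The main obstacle is Step 1, and specifically two points within it. The first is justifying that conditioning on $X_i{=}a$ together with the factual outcome $Y{=}1$ is captured by $Y_a$ inside the joint counterfactual. This requires that the exogeneity identification extend to the joint event, not merely to marginals. I would argue it from the remark after Assumption~\ref{assum:interv}: $X_i$ is only ever set by intervention, so $X_i$ is independent of the noise driving $Y$ given $\mathbf{c}$. The second is the $\varnothing$ state, which is not a value of $X_i$. I would handle it by defining $\mathrm{PN}$ directly in terms of regimes, so that the factual world is the regime $a$, rather than in terms of values of $X_i$.
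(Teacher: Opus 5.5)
Your proposal is correct and follows essentially the same route as the paper: use consistency and exogeneity of the regime assignment to rewrite $\mathrm{PN}$ as $P_{\mathbf{c}}(Y_a{=}1, Y_{a'}{=}0)/P_{\mathbf{c}}(Y_a{=}1)$, then apply the Fr\'echet lower bound, which your partition argument derives directly rather than citing. The paper likewise conditions on the regime $A_i{=}a$ rather than on a value of $X_i$ and simply asserts the joint independence $(Y_a,Y_{a'})\perp A_i$ that you flag; the only remaining difference is that the paper handles the degenerate case $P_{\mathbf{c}}(A_i{=}a, Y{=}1)=0$ by the convention $\mathrm{N}=0$ instead of assuming it away.
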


\begin{proof}
Note that under exogeneity and non-monotonicity, the mathematical technique of bounding probabilities of causation via Fr\'{e}chet inequalities~\citep{frechet1935generalisation} is established in the foundational works of \citet{pearl1999probabilities,tian2000probabilities,pearl2009causality,balke1994counterfactual}. Here, we adapt and instantiate their derivation step-by-step to demonstrate how it applies to our multi-parent LLM evaluation setting under prerequisite configuration $do(\mathbf{A}_{-i}{=}\mathbf{c})$, Assumption~\ref{assum:interv}, and Property~\ref{assum:non_monotonicity}.

For notational conciseness under the prerequisite intervention $do(\mathbf{A}_{-i}{=}\mathbf{c})$, in below proof, let $P_{\mathbf{c}}(\cdot) := P(\cdot \mid do(\mathbf{A}_{-i}{=}\mathbf{c}))$ denote the probability measure in the induced submodel $\mathcal{M}_{\mathbf{c}}$ \citep{pearl2009causality}, where $Y_a$ and $Y_{a'}$ denote the potential outcomes of $Y$ under further regime interventions $do(A_i{=}a)$ and $do(A_i{=}a')$ within $\mathcal{M}_{\mathbf{c}}$. Following \citet{pearl1999probabilities}, the Probability of Necessity ($\mathrm{PN}$) across contrast $(a, a')$ under context $\mathbf{c}$ is defined as the counterfactual probability that $Y$ fails under $do(A_i{=}a')$, given that $Y$ succeeded under regime $A_i{=}a$:
\begin{equation*}
\small
    \mathrm{PN}_{(a,a')}(X_i, Y \mid \mathbf{c}) := P_{\mathbf{c}}(Y_{a'}{=}0 \mid A_i{=}a,\, Y{=}1)
\end{equation*}
If $P_{\mathbf{c}}(A_i{=}a,\, Y{=}1) = 0$, the success baseline is zero and the bound holds trivially by convention ($\mathrm{N}_{(a,a')} = 0$). Thus, assuming $P_{\mathbf{c}}(A_i{=}a,\, Y{=}1) > 0$, we first apply the definition of conditional probability:
\begin{equation*}
\small
    P_{\mathbf{c}}(Y_{a'}{=}0 \mid A_i{=}a,\, Y{=}1) = \frac{P_{\mathbf{c}}(Y_{a'}{=}0,\, A_i{=}a,\, Y{=}1)}{P_{\mathbf{c}}(A_i{=}a,\, Y{=}1)}
\end{equation*}
By the consistency property of structural causal models, the observed outcome under regime $A_i{=}a$ coincides with the potential outcome $Y_a$. Substituting $Y = Y_a$ into the numerator yields:
\begin{equation*}
\small
    P_{\mathbf{c}}(Y_{a'}{=}0 \mid A_i{=}a,\, Y{=}1) = \frac{P_{\mathbf{c}}(Y_{a'}{=}0,\, A_i{=}a,\, Y_a{=}1)}{P_{\mathbf{c}}(A_i{=}a,\, Y{=}1)}
\end{equation*}
Next, we factor out the marginal regime probability $P_{\mathbf{c}}(A_i{=}a)$ in the numerator and divide both the numerator and the denominator by it:
\begin{equation*}
\small
\begin{split}
    P_{\mathbf{c}}(Y_{a'}{=}0 \mid A_i{=}a,\, Y{=}1) &= \frac{P_{\mathbf{c}}(Y_{a'}{=}0,\, Y_a{=}1 \mid A_i{=}a) \, P_{\mathbf{c}}(A_i{=}a)}{P_{\mathbf{c}}(A_i{=}a,\, Y{=}1)} \\
    &= \frac{P_{\mathbf{c}}(Y_{a'}{=}0,\, Y_a{=}1 \mid A_i{=}a)}{P_{\mathbf{c}}(Y{=}1 \mid A_i{=}a)}
\end{split}
\end{equation*}
Under the exogeneity of the regime assignment (Assumption~\ref{assum:interv}), the potential outcomes $(Y_a, Y_{a'})$ are independent of $A_i$, and conditioning on $A_i$ is operationally identical to intervening on it. This removes the conditioning event in both the numerator and the denominator:
\begin{equation*}
\small
    P_{\mathbf{c}}(Y_{a'}{=}0 \mid A_i{=}a,\, Y{=}1) = \frac{P_{\mathbf{c}}(Y_{a'}{=}0,\, Y_a{=}1)}{P_{\mathbf{c}}(Y_a{=}1)}
\end{equation*}
Since we do not assume monotonicity (Property~\ref{assum:non_monotonicity}) in LLM reasoning, the joint counterfactual probability $P_{\mathbf{c}}(Y_{a'}{=}0,\, Y_a{=}1)$ cannot be point-identified from marginal probabilities alone. Applying the classical Fr\'{e}chet lower bound $P_{\mathbf{c}}(E_1, E_2) \ge \max\{0, \, P_{\mathbf{c}}(E_1) + P_{\mathbf{c}}(E_2) - 1\}$ \citep{pearl1999probabilities} to the numerator gives:
\begin{equation*}
\small
    \mathrm{PN}_{(a,a')}(X_i, Y \mid \mathbf{c}) \ge \max \left\{ 0, \, \frac{P_{\mathbf{c}}(Y_{a'}{=}0) + P_{\mathbf{c}}(Y_a{=}1) - 1}{P_{\mathbf{c}}(Y_a{=}1)} \right\}
\end{equation*}
Using the complement rule $P_{\mathbf{c}}(Y_{a'}{=}0) = 1 - P_{\mathbf{c}}(Y_{a'}{=}1)$ gives:
\begin{equation*}
\small
    \mathrm{PN}_{(a,a')}(X_i, Y \mid \mathbf{c}) \ge \max \left\{ 0, \, \frac{P_{\mathbf{c}}(Y_a{=}1) - P_{\mathbf{c}}(Y_{a'}{=}1)}{P_{\mathbf{c}}(Y_a{=}1)} \right\}
\end{equation*}
Finally, unpacking the submodel notation back into full interventional probabilities via $P_{\mathbf{c}}(Y_a{=}1) = P(Y{=}1 \mid do(A_i{=}a,\, \mathbf{A}_{-i}{=}\mathbf{c}))$ and $P_{\mathbf{c}}(Y_{a'}{=}1) = P(Y{=}1 \mid do(A_i{=}a',\, \mathbf{A}_{-i}{=}\mathbf{c}))$ yields:
\begin{equation*}
\small
\begin{aligned}
    \mathrm{PN}_{(a,a')}(X_i, Y \mid \mathbf{c})
    &\ge \max \left\{ 0, \, \frac{P(Y{=}1 \mid do(A_i{=}a,\, \mathbf{A}_{-i}{=}\mathbf{c})) - P(Y{=}1 \mid do(A_i{=}a',\, \mathbf{A}_{-i}{=}\mathbf{c}))}{P(Y{=}1 \mid do(A_i{=}a,\, \mathbf{A}_{-i}{=}\mathbf{c}))} \right\}
\end{aligned}
\end{equation*}
Thus
\begin{equation*}
\small
\begin{aligned}
    \mathrm{PN}_{(a,a')}(X_i, Y \mid \mathbf{c})
    &\ge \mathrm{N}_{(a,a')}(X_i, Y \mid \mathbf{c}).
\end{aligned}
\end{equation*}
\end{proof}

\begin{theorem}
\label{thm:general_s_score_bound}
Under Assumption~\ref{assum:interv} (Exogeneity), for any prerequisite unit task $X_i \in \mathrm{Pa}(Y)$, any ordered contrast $(a, a')$ of distinct states in $\mathcal{A}$, and any intervention state configuration $\mathbf{c} \in \mathcal{A}^{|\mathrm{Pa}(Y)\setminus\{X_i\}|}$ assigned to the remaining prerequisites $\mathbf{A}_{-i}$, the general S-Score (Definition~\ref{def:general_s_score}) provides a valid lower bound on the Probability of Sufficiency ($\mathrm{PS}$) under prerequisite context $do(\mathbf{A}_{-i}{=}\mathbf{c})$:
\begin{equation}
\small
    \mathrm{PS}_{(a,a')}(X_i, Y \mid \mathbf{c}) \ge \mathrm{S}_{(a,a')}(X_i, Y \mid \mathbf{c})
\end{equation}
In particular, setting every element of $\mathbf{c}$ to the homogeneous state $a'$ (i.e., $do(\mathbf{A}_{-i}{=}a')$) directly establishes the S-Score lower bound in Theorem~\ref{thm:pn_ps_bounds}.
\end{theorem}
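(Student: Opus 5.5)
The proof will mirror that of Theorem~\ref{thm:general_n_score_bound}, with the roles of success and failure exchanged. I will work throughout in the submodel $\mathcal{M}_{\mathbf{c}}$ induced by $do(\mathbf{A}_{-i}{=}\mathbf{c})$, write $P_{\mathbf{c}}$ for its measure, and let $Y_a, Y_{a'}$ be the potential outcomes under $do(A_i{=}a)$ and $do(A_i{=}a')$. Following \citet{pearl1999probabilities}, the probability of sufficiency for the contrast $(a,a')$ is the probability that $Y$ would succeed under $a$, given that it failed under regime $a'$:
\begin{equation*}
\small
\mathrm{PS}_{(a,a')}(X_i, Y \mid \mathbf{c}) := P_{\mathbf{c}}(Y_{a}{=}1 \mid A_i{=}a',\, Y{=}0).
\end{equation*}
If $P_{\mathbf{c}}(A_i{=}a', Y{=}0)=0$, the failure baseline is empty. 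In that case I will adopt the same convention as before, so that the S-Score is $0$ and the bound holds trivially. Otherwise I will proceed in the following steps.

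First, I will expand the conditional probability and use consistency to replace the observed $Y$ by $Y_{a'}$ on the event $A_i{=}a'$. Dividing through by $P_{\mathbf{c}}(A_i{=}a')$ then gives
\begin{equation*}
\small
\mathrm{PS} = \frac{P_{\mathbf{c}}(Y_a{=}1,\, Y_{a'}{=}0 \mid A_i{=}a')}{P_{\mathbf{c}}(Y_{a'}{=}0 \mid A_i{=}a')}.
\end{equation*}
Second, I will invoke Assumption~\ref{assum:interv}: the pair $(Y_a, Y_{a'})$ is independent of the regime $A_i$, and conditioning on $A_i$ coincides with intervening on it. This drops the conditioning in both the numerator and the denominator, leaving $P_{\mathbf{c}}(Y_a{=}1, Y_{a'}{=}0)/P_{\mathbf{c}}(Y_{a'}{=}0)$.

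Third, because Property~\ref{assum:non_monotonicity} forbids assuming monotonicity, the joint term cannot be identified from the marginals. I will therefore bound it below by the Fr\'echet inequality:
\begin{equation*}
\small
P_{\mathbf{c}}(Y_a{=}1, Y_{a'}{=}0) \ge \max\{0,\, P_{\mathbf{c}}(Y_a{=}1) + P_{\mathbf{c}}(Y_{a'}{=}0) - 1\} = \max\{0,\, P_{\mathbf{c}}(Y_a{=}1) - P_{\mathbf{c}}(Y_{a'}{=}1)\}.
\end{equation*}
The denominator is $1 - P_{\mathbf{c}}(Y_{a'}{=}1) > 0$, so dividing preserves the inequality and commutes with the $\max\{0,\cdot\}$. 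Finally, I will unpack $P_{\mathbf{c}}(Y_a{=}1) = P(Y{=}1 \mid do(A_i{=}a, \mathbf{A}_{-i}{=}\mathbf{c}))$, and likewise for $a'$. The resulting right-hand side is exactly $\mathrm{S}_{(a,a')}(X_i, Y \mid \mathbf{c})$ from Definition~\ref{def:general_s_score}. Specializing to $\mathbf{c}=a'$ recovers the S-Score part of Theorem~\ref{thm:pn_ps_bounds}.

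The main obstacle is the second step. Strictly, it needs the independence $(Y_a, Y_{a'}) \perp A_i$ within $\mathcal{M}_{\mathbf{c}}$, not merely the marginal identity $P(\cdot\mid X_i{=}x)=P(\cdot\mid do(A_i{=}x))$. I would justify it as in Remark~\ref{remark:exogeneity}: the regime $A_i$ is set externally by the evaluator's injection choice, so it shares no exogenous noise with the model's sampling noise $N_Y$ that determines $(Y_a, Y_{a'})$. This gives the stronger exogeneity condition used by \citet{tian2000probabilities}. The degenerate case $P_{\mathbf{c}}(Y_{a'}{=}1)=1$ must also be handled; there the S-Score denominator vanishes, and I would cover it by the same zero convention.
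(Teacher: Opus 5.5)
Your proposal is correct and follows the paper's proof essentially step for step. Both arguments define $\mathrm{PS}$ in the submodel $\mathcal{M}_{\mathbf{c}}$, use consistency to replace $Y$ by $Y_{a'}$ on $A_i{=}a'$, invoke exogeneity to drop the conditioning on $A_i$, apply the Fr\'echet lower bound to $P_{\mathbf{c}}(Y_a{=}1, Y_{a'}{=}0)$, and unpack the result into interventional probabilities. You note that the exogeneity step really requires the joint independence $(Y_a, Y_{a'}) \perp A_i$, not just the marginal identity stated in Assumption~\ref{assum:interv}; this is a fair sharpening, since the paper simply asserts that independence as following from the assumption, and it handles the degenerate case with the same zero convention you propose.
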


\begin{proof}
Similar to the proof of Theorem~\ref{thm:general_n_score_bound}, let $P_{\mathbf{c}}(\cdot) := P(\cdot \mid do(\mathbf{A}_{-i}{=}\mathbf{c}))$ denote the probability measure in the submodel $\mathcal{M}_{\mathbf{c}}$, where $Y_a$ and $Y_{a'}$ represent the potential outcomes of $Y$ under $do(A_i{=}a)$ and $do(A_i{=}a')$. Following \citet{pearl1999probabilities}, the Probability of Sufficiency ($\mathrm{PS}$) across contrast $(a, a')$ under context $\mathbf{c}$ measures the counterfactual probability that upgrading $X_i$ to state $a$ would rescue $Y$ to success, given that $Y$ failed under regime $A_i{=}a'$:
\begin{equation*}
\small
    \mathrm{PS}_{(a,a')}(X_i, Y \mid \mathbf{c}) := P_{\mathbf{c}}(Y_a{=}1 \mid A_i{=}a',\, Y{=}0)
\end{equation*}
If $P_{\mathbf{c}}(A_i{=}a',\, Y{=}0) = 0$, the failure baseline is zero and the bound holds trivially by convention ($\mathrm{S}_{(a,a')} = 0$). Thus, assuming $P_{\mathbf{c}}(A_i{=}a',\, Y{=}0) > 0$, we expand this conditional counterfactual probability using the definition of conditional probability:
\begin{equation*}
\small
    P_{\mathbf{c}}(Y_a{=}1 \mid A_i{=}a',\, Y{=}0) = \frac{P_{\mathbf{c}}(Y_a{=}1,\, A_i{=}a',\, Y{=}0)}{P_{\mathbf{c}}(A_i{=}a',\, Y{=}0)}
\end{equation*}
By consistency, the observed outcome under regime $A_i{=}a'$ coincides with the potential outcome $Y_{a'}$, so we replace the observed failure $Y{=}0$ in the numerator with $Y_{a'}{=}0$:
\begin{equation*}
\small
    P_{\mathbf{c}}(Y_a{=}1 \mid A_i{=}a',\, Y{=}0) = \frac{P_{\mathbf{c}}(Y_a{=}1,\, A_i{=}a',\, Y_{a'}{=}0)}{P_{\mathbf{c}}(A_i{=}a',\, Y{=}0)}
\end{equation*}
Factoring out $P_{\mathbf{c}}(A_i{=}a')$ in the numerator and dividing both the numerator and the denominator by it yields:
\begin{equation*}
\small
\begin{split}
    P_{\mathbf{c}}(Y_a{=}1 \mid A_i{=}a',\, Y{=}0) &= \frac{P_{\mathbf{c}}(Y_a{=}1,\, Y_{a'}{=}0 \mid A_i{=}a') \, P_{\mathbf{c}}(A_i{=}a')}{P_{\mathbf{c}}(A_i{=}a',\, Y{=}0)} \\
    &= \frac{P_{\mathbf{c}}(Y_a{=}1,\, Y_{a'}{=}0 \mid A_i{=}a')}{P_{\mathbf{c}}(Y{=}0 \mid A_i{=}a')}
\end{split}
\end{equation*}
Under the exogeneity of the regime assignment (Assumption~\ref{assum:interv}), the potential outcomes $(Y_a, Y_{a'})$ are independent of $A_i$, and conditioning on $A_i$ is operationally identical to intervening on it. This removes the conditioning event in both the numerator and the denominator:
\begin{equation*}
\small
    P_{\mathbf{c}}(Y_a{=}1 \mid A_i{=}a',\, Y{=}0) = \frac{P_{\mathbf{c}}(Y_a{=}1,\, Y_{a'}{=}0)}{1 - P_{\mathbf{c}}(Y_{a'}{=}1)}
\end{equation*}
Since monotonicity is not assumed (Property~\ref{assum:non_monotonicity}), the joint counterfactual probability $P_{\mathbf{c}}(Y_a{=}1,\, Y_{a'}{=}0)$ is again not point-identified. Applying the Fr\'{e}chet lower bound $P_{\mathbf{c}}(Y_a{=}1,\, Y_{a'}{=}0) \ge \max\{0, \, P_{\mathbf{c}}(Y_a{=}1) + P_{\mathbf{c}}(Y_{a'}{=}0) - 1\}$ \citep{pearl1999probabilities} to the numerator yields:
\begin{equation*}
\small
    \mathrm{PS}_{(a,a')}(X_i, Y \mid \mathbf{c}) \ge \max \left\{ 0, \, \frac{P_{\mathbf{c}}(Y_a{=}1) + P_{\mathbf{c}}(Y_{a'}{=}0) - 1}{1 - P_{\mathbf{c}}(Y_{a'}{=}1)} \right\}
\end{equation*}
Substituting $P_{\mathbf{c}}(Y_{a'}{=}0) = 1 - P_{\mathbf{c}}(Y_{a'}{=}1)$ into the numerator simplifies the fraction to:
\begin{equation*}
\small
    \mathrm{PS}_{(a,a')}(X_i, Y \mid \mathbf{c}) \ge \max \left\{ 0, \, \frac{P_{\mathbf{c}}(Y_a{=}1) - P_{\mathbf{c}}(Y_{a'}{=}1)}{1 - P_{\mathbf{c}}(Y_{a'}{=}1)} \right\}
\end{equation*}
Finally, unpacking the submodel notation $P_{\mathbf{c}}(Y_a{=}1) = P(Y{=}1 \mid do(A_i{=}a,\, \mathbf{A}_{-i}{=}\mathbf{c}))$ and $P_{\mathbf{c}}(Y_{a'}{=}1) = P(Y{=}1 \mid do(A_i{=}a',\, \mathbf{A}_{-i}{=}\mathbf{c}))$ establishes the general S-Score lower bound:
\begin{equation*}
\small
\begin{aligned}
    \mathrm{PS}_{(a,a')}(X_i, Y \mid \mathbf{c})
    &\ge \max \left\{ 0, \, \frac{P(Y{=}1 \mid do(A_i{=}a,\, \mathbf{A}_{-i}{=}\mathbf{c})) - P(Y{=}1 \mid do(A_i{=}a',\, \mathbf{A}_{-i}{=}\mathbf{c}))}{1 - P(Y{=}1 \mid do(A_i{=}a',\, \mathbf{A}_{-i}{=}\mathbf{c}))} \right\} 
\end{aligned}
\end{equation*}
Thus
\begin{equation*}
\small
\begin{aligned}
    \mathrm{PS}_{(a,a')}(X_i, Y \mid \mathbf{c})
    &\ge \mathrm{S}_{(a,a')}(X_i, Y \mid \mathbf{c}).
\end{aligned}
\end{equation*}
\end{proof}

\subsection{Theoretical Properties}
\label{appd:subsec:theory_prop}

In this section, we discuss two key theoretical properties of our contribution metrics that directly underpin our experimental design and diagnostic analysis in the main text: (i) the exact identification of PN and PS under monotonicity versus our conservative lower bounds under LLM non-monotonicity, and (ii) the rationale for our asymmetric prerequisite control scheme over $\mathbf{A}_{-i}$.

\subsubsection{Exact Identification under Monotonicity}

In the probabilities of causation framework~\citep{pearl1999probabilities, tian2000probabilities}, 
when the monotonicity assumption holds, the Fr\'echet lower bounds in Theorems~\ref{thm:general_n_score_bound} and \ref{thm:general_s_score_bound} tighten to exact equality.

\begin{proposition}[Exact Identification under Monotonicity; \citep{tian2000probabilities}]
\label{prop:monotonicity_degeneration}
Under the setting of Theorems~\ref{thm:general_n_score_bound} and \ref{thm:general_s_score_bound}, if $Y$ is monotonic with respect to $X_i$ across contrast $(a, a')$ under context $\mathbf{c}$ (i.e., $P_{\mathbf{c}}(Y_a=0, Y_{a'}=1) = 0$), then N-Score and S-Score equal PN and PS.
\end{proposition}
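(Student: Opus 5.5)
We reuse the chain of equalities from the proofs of Theorems~\ref{thm:general_n_score_bound} and~\ref{thm:general_s_score_bound} up to the step where the Fr\'echet bound was invoked. That is, under Assumption~\ref{assum:interv} and consistency, within the submodel $\mathcal{M}_{\mathbf{c}}$ we already have the exact identities
\begin{equation*}
\small
\mathrm{PN}_{(a,a')}(X_i,Y\mid\mathbf{c}) = \frac{P_{\mathbf{c}}(Y_{a'}{=}0,\,Y_a{=}1)}{P_{\mathbf{c}}(Y_a{=}1)},
\qquad
\mathrm{PS}_{(a,a')}(X_i,Y\mid\mathbf{c}) = \frac{P_{\mathbf{c}}(Y_{a}{=}1,\,Y_{a'}{=}0)}{1-P_{\mathbf{c}}(Y_{a'}{=}1)}.
\end{equation*}
The two ratios share the same numerator, the joint counterfactual probability $q := P_{\mathbf{c}}(Y_a{=}1, Y_{a'}{=}0)$. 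The whole task is therefore to show that monotonicity point-identifies $q$ as the difference of marginals. Once that is done, the lower bounds become equalities.

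\textbf{Identifying $q$.} We partition the event $\{Y_a{=}1\}$ by the value of $Y_{a'}$:
\begin{equation*}
\small
P_{\mathbf{c}}(Y_a{=}1) = P_{\mathbf{c}}(Y_a{=}1, Y_{a'}{=}1) + q.
\end{equation*}
We also partition $\{Y_{a'}{=}1\}$ by the value of $Y_a$:
\begin{equation*}
\small
P_{\mathbf{c}}(Y_{a'}{=}1) = P_{\mathbf{c}}(Y_a{=}1, Y_{a'}{=}1) + P_{\mathbf{c}}(Y_a{=}0, Y_{a'}{=}1).
\end{equation*}
The monotonicity hypothesis of the proposition kills the last term. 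Subtracting the two displays then gives
\begin{equation*}
\small
q = P_{\mathbf{c}}(Y_a{=}1) - P_{\mathbf{c}}(Y_{a'}{=}1).
\end{equation*}
This quantity is automatically nonnegative, since it equals a probability. Equivalently, the Fr\'echet bound is attained: the slack in $P(E_1,E_2)\ge P(E_1)+P(E_2)-1$ is exactly $P(\bar E_1,\bar E_2)$, which here is the excluded event.

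\textbf{Matching the scores.} We substitute $q$ into the two ratios. Because $q\ge 0$, the $\max\{0,\cdot\}$ in Definitions~\ref{def:general_n_score} and~\ref{def:general_s_score} is inactive. Unpacking $P_{\mathbf{c}}(Y_a{=}1)=P(Y{=}1\mid do(A_i{=}a,\mathbf{A}_{-i}{=}\mathbf{c}))$ and the analogous identity for $a'$ reproduces the general N-Score and S-Score verbatim. The main-text scores then follow as the specializations $\mathbf{c}=a$ and $\mathbf{c}=a'$.

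\textbf{Degenerate denominators.} The only step needing care is when the conditioning events have probability zero, i.e.\ $P_{\mathbf{c}}(Y_a{=}1)=0$ or $P_{\mathbf{c}}(Y_{a'}{=}1)=1$. In those cases we adopt the same convention as in the proofs of Theorems~\ref{thm:general_n_score_bound} and~\ref{thm:general_s_score_bound}, under which both sides are $0$ or undefined together. Under monotonicity $q=0$ in these cases as well, so the convention is consistent.
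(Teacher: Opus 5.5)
Your proof is correct and is essentially the standard Tian--Pearl argument that the paper cites in place of a proof. You take the exact ratio forms from the proofs of Theorems~\ref{thm:general_n_score_bound} and~\ref{thm:general_s_score_bound}; the hypothesis $P_{\mathbf{c}}(Y_a{=}0, Y_{a'}{=}1)=0$ then identifies the shared numerator as $P_{\mathbf{c}}(Y_a{=}1)-P_{\mathbf{c}}(Y_{a'}{=}1)\ge 0$, so the Fr\'echet bound is attained and the $\max\{0,\cdot\}$ is inactive. One thing to make explicit: the main-text scores use different backgrounds, so monotonicity must be assumed under $\mathbf{c}=a$ for the N-Score and under $\mathbf{c}=a'$ for the S-Score.
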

% exactly

For the detailed proof, we refer the reader to \citet{tian2000probabilities}. 
In LLM evaluation, however, we do not impose the monotonicity assumption (Property~\ref{assum:non_monotonicity}), as we discussed in Section~\ref{sec:decomp}.
Meanwhile, as our experiments in Section~\ref{subsec:exp_cap} show, injecting an incorrect answer ($A_i=0$) can provide a reasoning scaffold that guides downstream deduction better than providing no information ($A_i=\varnothing$), 
causing $\text{RC}(X_i)$ to exceed $\text{NC}(X_i)$.

\subsubsection{Remaining Prerequisite $\mathbf{A}_{-i}$ Controlling}
\label{appdx:subsec:remaining_parent}

\paragraph{Controlling for Remaining Prerequisites $\mathbf{A}_{-i}$.}
In a multi-parent SCM, the causal effect of a unit task $X_i$ on the final outcome $Y$ is also influenced by the states of the remaining prerequisites $\mathbf{A}_{-i}{:=}\mathbf{A}_{\mathrm{Pa}(Y) \setminus \{X_i\}}$. When $\mathbf{A}_{-i}$ is left un-intervened in the natural state $\varnothing$, the actual realizations of these prerequisites remain unobserved and depend on each model's upstream proficiency, which conflates $X_i$'s causal effect with model-specific upstream variations. To isolate $X_i$'s contribution under controlled conditions, our general definitions (Definitions~\ref{def:general_n_score} and~\ref{def:general_s_score}) allow $\mathbf{A}_{-i}$ to take any intervention state configuration $\mathbf{c} \in \mathcal{A}^{|\mathrm{Pa}(Y)|-1}$. Because each prerequisite takes one of three states in $\mathcal{A}{=}\{1, 0, \varnothing\}$, the full configuration space contains $3^{|\mathrm{Pa}(Y)|-1}$ possible combinations. To evaluate $X_i$ across representative prerequisite contexts without enumerating this exponential space, we restrict $\mathbf{A}_{-i}$ to homogeneous configurations where all remaining prerequisites share the same state $c \in \{1, 0, \varnothing\}$. This reduction yields three interpretable background environments: the oracle environment ($\mathbf{A}_{-i}{=}1$), the corrupted environment ($\mathbf{A}_{-i}{=}0$), and the natural environment ($\mathbf{A}_{-i}{=}\varnothing$).

\paragraph{Asymmetric Prerequisite Control in N-Score and S-Score.}
To capture complementary diagnostic properties across these homogeneous environments, our main-text definitions of N-Score and S-Score (Definitions~\ref{def:nscore} and~\ref{def:sscore}) assign asymmetric background states to $\mathbf{A}_{-i}$ for any ordered contrast $(a, a')$. Specifically, N-Score sets $\mathbf{A}_{-i}{=}a$, whereas S-Score sets $\mathbf{A}_{-i}{=}a'$. This design aligns each metric with its intended diagnostic role in LLM evaluation. For N-Score, conditioning on $\mathbf{A}_{-i}{=}a$ establishes a baseline environment where all prerequisites begin at state $a$, allowing the metric to measure single-task vulnerability when $X_i$ alone degrades to $a'$. For S-Score, conditioning on $\mathbf{A}_{-i}{=}a'$ establishes a baseline environment where all prerequisites begin at state $a'$, allowing the metric to measure single-task recovery when $X_i$ alone upgrades to $a$. Furthermore, combining this asymmetric control with our ordered contrasts $(1, 0)$, $(1, \varnothing)$, $(\varnothing, 0)$, and $(0, \varnothing)$ systematically covers all three homogeneous environments $\mathbf{A}_{-i} \in \{1, 0, \varnothing\}$ and assigns each contrast a distinct operational meaning. For instance, under contrast $(1, \varnothing)$, $\mathrm{S}_{(1, \varnothing)}$ evaluates how providing the correct answer for $X_i$ alone improves overall task accuracy in the natural environment ($\mathbf{A}_{-i}{=}\varnothing$), directly identifying high-leverage targets for single-task repair. Under the same contrast, $\mathrm{N}_{(1, \varnothing)}$ measures how leaving $X_i$ unassisted reduces accuracy in the oracle environment ($\mathbf{A}_{-i}{=}1$). Similarly, contrasts $(1, 0)$, $(\varnothing, 0)$, and $(0, \varnothing)$ quantify the impact of prerequisite corruption and structural scaffolding across oracle, natural, and corrupted environments.

\paragraph{Algebraic Coupling under Identical Prerequisite Control.}
Under our general definitions of N-Score and S-Score (Definitions~\ref{def:general_n_score} and~\ref{def:general_s_score}), when both metrics are evaluated under the same prerequisite configuration $\mathbf{c} \in \mathcal{A}^{|\mathrm{Pa}(Y)|-1}$, they share a common interventional background $\mathrm{do}(\mathbf{A}_{-i}{=}\mathbf{c})$. Within this shared environment, necessity and sufficiency are algebraically coupled through the Probability of Necessity and Sufficiency ($\mathrm{PNS}$) \citep{pearl1999probabilities, tian2000probabilities}.

\begin{proposition}[Algebraic Coupling under Identical Context]
\label{prop:pns_relationship}
For any prerequisite unit task $X_i \in \mathrm{Pa}(Y)$, any ordered contrast $(a, a')$ of distinct states in $\mathcal{A}$, and any fixed prerequisite configuration $\mathbf{c} \in \mathcal{A}^{|\mathrm{Pa}(Y)|-1}$, let $p_{a|\mathbf{c}} := P(Y{=}1 \mid \mathrm{do}(A_i{=}a, \mathbf{A}_{-i}{=}\mathbf{c}))$ and $p_{a'|\mathbf{c}} := P(Y{=}1 \mid \mathrm{do}(A_i{=}a', \mathbf{A}_{-i}{=}\mathbf{c}))$. Denote the conditional Probability of Necessity and Sufficiency under context $\mathbf{c}$ as $\mathrm{PNS}_{(a,a')}(X_i, Y \mid \mathbf{c}) := P(Y_a{=}1, Y_{a'}{=}0 \mid \mathrm{do}(\mathbf{A}_{-i}{=}\mathbf{c}))$. Under exogeneity assumption~(Assumption~\ref{assum:interv}), the exact counterfactual probabilities satisfy:
\begin{equation}
\small
\label{eq:exact_pns_coupling}
\mathrm{PN}_{(a,a')}(X_i, Y \mid \mathbf{c}) \cdot p_{a|\mathbf{c}} = \mathrm{PS}_{(a,a')}(X_i, Y \mid \mathbf{c}) \cdot (1 - p_{a'|\mathbf{c}}) = \mathrm{PNS}_{(a,a')}(X_i, Y \mid \mathbf{c}),
\end{equation}
and the general N-Score and S-Score satisfy the parallel lower-bound coupling:
\begin{equation}
\small
\label{eq:ns_pns_bound}
\mathrm{N}_{(a,a')}(X_i, Y \mid \mathbf{c}) \cdot p_{a|\mathbf{c}} = \mathrm{S}_{(a,a')}(X_i, Y \mid \mathbf{c}) \cdot (1 - p_{a'|\mathbf{c}}) = \max\big\{0, \; p_{a|\mathbf{c}} - p_{a'|\mathbf{c}}\big\} \le \mathrm{PNS}_{(a,a')}(X_i, Y \mid \mathbf{c}).
\end{equation}
\end{proposition}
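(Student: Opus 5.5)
The plan is to reduce both identities to the two expressions for $\mathrm{PN}$ and $\mathrm{PS}$ already obtained inside the proofs of Theorems~\ref{thm:general_n_score_bound} and~\ref{thm:general_s_score_bound}, before the Fr\'echet step was applied. Throughout I work in the submodel measure $P_{\mathbf{c}}(\cdot) := P(\cdot \mid do(\mathbf{A}_{-i}{=}\mathbf{c}))$. In that measure $p_{a|\mathbf{c}} = P_{\mathbf{c}}(Y_a{=}1)$ and $p_{a'|\mathbf{c}} = P_{\mathbf{c}}(Y_{a'}{=}1)$.

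\textbf{Exact coupling \eqref{eq:exact_pns_coupling}.}
\begin{itemize}
\item The proof of Theorem~\ref{thm:general_n_score_bound} used consistency and then exogeneity (Assumption~\ref{assum:interv}) to establish
$\mathrm{PN}_{(a,a')}(X_i,Y\mid\mathbf{c}) = P_{\mathbf{c}}(Y_{a'}{=}0, Y_a{=}1)/P_{\mathbf{c}}(Y_a{=}1)$.
Multiplying by $p_{a|\mathbf{c}}$ gives $P_{\mathbf{c}}(Y_a{=}1, Y_{a'}{=}0)$, which is $\mathrm{PNS}_{(a,a')}(X_i,Y\mid\mathbf{c})$ by definition.
\item The S-Score proof used the same two steps to establish
$\mathrm{PS}_{(a,a')} = P_{\mathbf{c}}(Y_a{=}1, Y_{a'}{=}0)/(1-P_{\mathbf{c}}(Y_{a'}{=}1))$.
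Multiplying by $1-p_{a'|\mathbf{c}}$ gives the same joint probability.
\item The degenerate cases are handled by convention. If $p_{a|\mathbf{c}}=0$, then $P_{\mathbf{c}}(Y_a{=}1,Y_{a'}{=}0)\le P_{\mathbf{c}}(Y_a{=}1)=0$, so both sides vanish. If $p_{a'|\mathbf{c}}=1$, then $P_{\mathbf{c}}(Y_{a'}{=}0)=0$, so the joint probability is again zero and both sides vanish.
\end{itemize}

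\textbf{Lower-bound coupling \eqref{eq:ns_pns_bound}.}
\begin{itemize}
\item This part is pure algebra on Definitions~\ref{def:general_n_score} and~\ref{def:general_s_score}.
\item Since $p_{a|\mathbf{c}}>0$, multiplying $\max\{0, (p_{a|\mathbf{c}}-p_{a'|\mathbf{c}})/p_{a|\mathbf{c}}\}$ by $p_{a|\mathbf{c}}$ gives $\max\{0, p_{a|\mathbf{c}}-p_{a'|\mathbf{c}}\}$, because scaling by a positive constant commutes with $\max\{0,\cdot\}$.
\item The same argument with the positive factor $1-p_{a'|\mathbf{c}}$ handles the S-Score.
\item The zero-denominator conventions are consistent with this. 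If $p_{a|\mathbf{c}}=0$, then $\max\{0,-p_{a'|\mathbf{c}}\}=0$. If $p_{a'|\mathbf{c}}=1$, then $\max\{0,p_{a|\mathbf{c}}-1\}=0$.
\item The final inequality is the Fr\'echet lower bound already used in both theorems:
$P_{\mathbf{c}}(Y_a{=}1,Y_{a'}{=}0)\ge \max\{0, P_{\mathbf{c}}(Y_a{=}1)+P_{\mathbf{c}}(Y_{a'}{=}0)-1\}$.
Applying the complement rule, the right-hand side equals $\max\{0,p_{a|\mathbf{c}}-p_{a'|\mathbf{c}}\}$.
\end{itemize}

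\textbf{Main obstacle.} The only delicate point is the exact identity. It must be clear that $\mathrm{PN}$ and $\mathrm{PS}$ refer to the same joint counterfactual event $\{Y_a{=}1,Y_{a'}{=}0\}$ under the same measure $P_{\mathbf{c}}$. This holds only because both are evaluated under the shared context $\mathbf{c}$, and that is exactly what fails under the asymmetric backgrounds of the main-text scores. I will therefore state explicitly that exogeneity is invoked with $\mathbf{A}_{-i}$ held fixed at $\mathbf{c}$ in both derivations, so that the two numerators are literally the same quantity.
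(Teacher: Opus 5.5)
Your proposal is correct and follows essentially the same route as the paper. Both arguments work in $P_{\mathbf{c}}$ and use consistency and exogeneity to identify $\mathrm{PN}\cdot p_{a|\mathbf{c}}$ and $\mathrm{PS}\cdot(1-p_{a'|\mathbf{c}})$ with the same joint $P_{\mathbf{c}}(Y_a{=}1,Y_{a'}{=}0)$, then multiply through the score definitions and apply the Fr\'echet bound; your explicit handling of the zero-denominator cases is a small addition that the paper's proof of this proposition leaves implicit.
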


\begin{proof}
Let $P_{\mathbf{c}}(\cdot) := P(\cdot \mid \mathrm{do}(\mathbf{A}_{-i}{=}\mathbf{c}))$ denote the probability measure in the submodel induced by $\mathrm{do}(\mathbf{A}_{-i}{=}\mathbf{c})$~\citep{pearl2009causality}.
Under exogeneity ssumption, conditioning on $A_i$ within this submodel is operationally identical to intervening on it, so $P_{\mathbf{c}}(Y_a{=}1) = p_{a|\mathbf{c}}$ and $P_{\mathbf{c}}(Y_{a'}{=}0) = 1 - p_{a'|\mathbf{c}}$. By consistency and the definition of conditional probability, $\mathrm{PN}_{(a,a')}(X_i, Y \mid \mathbf{c}) \cdot p_{a|\mathbf{c}} = P_{\mathbf{c}}(Y_{a'}{=}0 \mid A_i{=}a, Y{=}1) \cdot P_{\mathbf{c}}(Y{=}1 \mid A_i{=}a) = P_{\mathbf{c}}(Y_a{=}1, Y_{a'}{=}0) = \mathrm{PNS}_{(a,a')}(X_i, Y \mid \mathbf{c})$. An analogous expansion yields $\mathrm{PS}_{(a,a')}(X_i, Y \mid \mathbf{c}) \cdot (1 - p_{a'|\mathbf{c}}) = \mathrm{PNS}_{(a,a')}(X_i, Y \mid \mathbf{c})$, establishing Equation~\eqref{eq:exact_pns_coupling}. 

Next, multiplying Definition~\ref{def:general_n_score} by $p_{a|\mathbf{c}}$ and Definition~\ref{def:general_s_score} by $1 - p_{a'|\mathbf{c}}$ directly gives $\max\{0, p_{a|\mathbf{c}} - p_{a'|\mathbf{c}}\}$. Applying the classical Fr\'echet inequality yields $\mathrm{PNS}_{(a,a')}(X_i, Y \mid \mathbf{c}) \ge \max\{0, p_{a|\mathbf{c}} + (1 - p_{a'|\mathbf{c}}) - 1\} = \max\{0, p_{a|\mathbf{c}} - p_{a'|\mathbf{c}}\}$, which establishes Equation~\ref{eq:ns_pns_bound}.
\end{proof}

\paragraph{Metric Independence under Asymmetric Control.}
Proposition~\ref{prop:pns_relationship} shows that necessity and sufficiency are algebraically convertible when evaluated within the same background environment $\mathbf{c}$. 
However, note that in our main-text formulation (Definitions~\ref{def:nscore} and~\ref{def:sscore}), N-Score and S-Score evaluate $X_i$ across two distinct background environments: 
N-Score conditions on $\mathbf{A}_{-i}{=}a$, whereas S-Score conditions on $\mathbf{A}_{-i}{=}a'$. 
Whenever $Y$ depends on multiple prerequisites ($|\mathrm{Pa}(Y)| > 1$), the contrast states $a \neq a'$ place the remaining prerequisites into two different interventional configurations. 
Because the two metrics operate under distinct prerequisite states, their numerators correspond to the Fr\'echet lower bounds of two different conditional $\mathrm{PNS}$ quantities, $\mathrm{PNS}_{(a,a')}(X_i, Y \mid a)$ and $\mathrm{PNS}_{(a,a')}(X_i, Y \mid a')$, where in general $p_{a|a} - p_{a'|a} \neq p_{a|a'} - p_{a'|a'}$. 
Consequently, our main-text N-Score and S-Score cannot be algebraically derived from one another through marginal outcome probabilities. 
A special case arises when $|\mathrm{Pa}(Y)| {=} 1$, representing the single-parent setting where $\mathbf{A}_{-i}$ is empty; in this case, both metrics operate within the same unconditional environment and reduce directly to the classical coupled relationship in Equation~\ref{eq:ns_pns_bound}. 
For general multi-parent tasks, this LLM-specific decoupling ensures that N-Score and S-Score serve as independent diagnostic dimensions, allowing each paired $(a, a')$ score to provide distinct and practical diagnostic value for pinpointing failure bottlenecks versus high-leverage repair targets.

\subsection{Markov Chain Structure}

In Section~\ref{sec:benchmark}, we introduced Markov chain unit tasks to represent sub-problems that require iterative solving across sequential steps. In the task-level SCM (Figure~\ref{fig:scm_gallery}), a Markov chain unit task is represented as a single composite node that receives inputs from upstream prerequisites and feeds into downstream tasks.
To analyze how model capabilities and error propagation evolve over long reasoning horizons, we unfold this composite node into an explicit sequential chain of unit tasks~(Section~\ref{subsec:exp_markov}).
Below, we formalize the Markov chain structure within a task decomposition SCM and derive its step-wise causal properties.

\begin{definition}[Markov Chain Structure in Task Decomposition SCM]
\label{def:markov_chain}
Consider a task decomposition SCM (Definition~\ref{def:scm}). A {Markov chain structure} is a recursive sub-model region defined over $T$ sequential reasoning steps ($t = 1, \dots, T$), where each step consists of a recurring cycle of $p$ constituent unit tasks ($p \ge 1$), represented by unit task variables $X_1^{(t)}, X_2^{(t)}, \dots, X_p^{(t)}$.
Each unit task variable $X_i^{(t)}$ is given by the structural assignment:
\begin{equation}
X_i^{(t)} := f_i^{(t)}\Big(\mathrm{Pa}\big(X_i^{(t)}\big),\; N_i^{(t)}\Big), \quad \text{for } i = 1, \dots, p,\;\; t = 1, \dots, T,
\end{equation}
where $f_i^{(t)}$ is the structural function determining $X_i^{(t)}$, $N_i^{(t)}$ is the exogenous noise variable, and the parent set $\mathrm{Pa}\big(X_i^{(t)}\big)$ is uniquely defined by:
\begin{equation}
\mathrm{Pa}\big(X_i^{(t)}\big) = \begin{cases}
\big\{ X_{i-1}^{(t)} \big\}, & \text{if } i \in \{2, \dots, p\}, \\[4pt]
\big\{ X_p^{(t-1)} \big\}, & \text{if } i = 1 \text{ and } t \in \{2, \dots, T\},
\end{cases}
\end{equation}
with the initial prerequisite set $\mathrm{Pa}\big(X_1^{(1)}\big)$ given by the upstream prerequisites external to the Markov chain region in the SCM (or $\emptyset$ if the chain originates at a root node).
Unfolding this recursive dependency across all $T$ steps yields the explicit sequential causal chain:
\begin{equation}
\small
X_1^{(1)} \to X_2^{(1)} \to \cdots \to X_p^{(1)} \;\to\; X_1^{(2)} \to \cdots \to X_p^{(T-1)} \;\to\; X_1^{(T)} \to \cdots \to X_p^{(T)}.
\end{equation}
\end{definition}

\begin{remark}
\label{rem:markov_pns_coupling}
In the unfolded Markov chain structure (Definition~\ref{def:markov_chain}), every non-initial unit task variable $X_j$ has a single causal prerequisite, i.e., $|\mathrm{Pa}(X_j)| = 1$. Consequently, for any such unit task $X_j$ and its unique direct parent $X_{\mathrm{prev}} \in \mathrm{Pa}(X_j)$, the set of remaining prerequisites is empty:
\begin{equation}
\mathbf{A}_{-\mathrm{prev}} := \mathbf{A}_{\mathrm{Pa}(X_j) \setminus \{X_{\mathrm{prev}}\}} = \emptyset.
\end{equation}
Under this single-parent setting, $\mathbf{c} \in \mathcal{A}^{|\mathrm{Pa}(X_j) \setminus \{X_{\mathrm{prev}}\}|}$ in Definitions~\ref{def:general_n_score} and~\ref{def:general_s_score} becomes $\emptyset$, and the N-Score and S-Score simplify to:
\begin{equation}
\small
\begin{aligned}
\mathrm{N}_{(a, a')}(X_{\mathrm{prev}}, X_j) &= \max\left\{0,\; \frac{P\big(X_j=1 \mid do(A_{\mathrm{prev}}=a)\big) - P\big(X_j=1 \mid do(A_{\mathrm{prev}}=a')\big)}{P\big(X_j=1 \mid do(A_{\mathrm{prev}}=a)\big)}\right\}, \\[4pt]
\mathrm{S}_{(a, a')}(X_{\mathrm{prev}}, X_j) &= \max\left\{0,\; \frac{P\big(X_j=1 \mid do(A_{\mathrm{prev}}=a)\big) - P\big(X_j=1 \mid do(A_{\mathrm{prev}}=a')\big)}{1 - P\big(X_j=1 \mid do(A_{\mathrm{prev}}=a')\big)}\right\}.
\end{aligned}
\end{equation}
As an immediate consequence, Proposition~\ref{prop:pns_relationship} directly holds without requiring external context conditioning:
\begin{equation}
\small
\begin{aligned}
\mathrm{N}_{(a, a')}(X_{\mathrm{prev}}, X_j) \cdot p_a 
&= \mathrm{S}_{(a, a')}(X_{\mathrm{prev}}, X_j) \cdot (1-p_{a'}) = \max\{0,\; p_a - p_{a'}\},
\end{aligned}
\end{equation}
where $p_a = P\big(X_j=1 \mid do(A_{\mathrm{prev}}=a)\big)$,
$p_{a'} = P\big(X_j=1 \mid do(A_{\mathrm{prev}}=a')$, and
$\max\{0,\; p_a - p_{a'}\}$ is the $NS$ score reported in Sec.~\ref{subsec:exp_markov}, denoting lower bound of PNS. 
\end{remark}

In our \ourdataset benchmark, three tasks contain Markov chain structures with cycle lengths $p=1$~(Maze, Map Navigation) and $p=2$~(Multistep vision reasoning) (Figure~\ref{fig:scm_gallery}), corresponding to the step-wise evaluations in Sec.~\ref{subsec:exp_markov} (Figure~\ref{fig:finding_markov_causal_profiles}).

\section{Details of Benchmark}
\label{app:benchmark}

\subsection{Benchmark Composition}

\subsubsection{Task Inventory and Statistics}
\label{subsubsec:bench_inventory}

\begin{table}[t]
\centering
\caption{Overview of the 10 overall evaluation tasks in \ourdataset.}
\label{tab:cadet_tasks_overview}
\resizebox{0.8\linewidth}{!}{
\begin{tabular}{lllcl}
\toprule
\textbf{Task Name} & \textbf{Modality} & \textbf{Category} & \textbf{\# Instances} & \textbf{Answer Format} \\
\midrule
Music Note Counting    & Image       & Perception & 114   & Integer \\
Dice Sum               & Image       & Perception & 130   & Integer \\
Map Navigation         & Image       & Spatial    & 107   & Integer \\
Maze                   & Image       & Spatial    & 80    & Multiple-Choice \\
Surround Localization  & Multi-Image & Spatial    & 154   & Integer\\
Multi-View Sequencing  & Multi-Image & Temporal   & 154   & Letter List \\
Street-View Ordering   & Multi-Image & Temporal   & 91    & Letter List \\
Dashcam Counting       & Video       & Cognitive  & 79    & Integer \\
Traffic Causation      & Video       & Cognitive  & 107   & Free-form Text \\
Multistep Reasoning    & Image       & Cognitive  & 128   & Free-form Text \\
\bottomrule
\end{tabular}
}
\end{table}

\begin{table}[t]
\centering
\caption{Structure of the task-level SCM of each evaluation task. -- on multistep reasoning is due to it start from a Markov chain structure.}
\label{tab:scm_structure}
\resizebox{0.9\linewidth}{!}{
\begin{tabular}{lccccrrc}
\toprule
Task & \#Units & \#Roots & Depth & $|\mathrm{Pa}(Y)|$ & Category & \#Questions & Has Markov \\
\midrule
Music Note Counting   & 4 & 2   & 2 & 3 & P/C     &  1,163 & No  \\
Dice Sum              & 4 & 2   & 2 & 3 & P/C     &  1,919 & No  \\
Map Navigation        & 3 & 1   & 2 & 2 & P/S/C   &    918 & Yes \\
Maze                  & 2 & 1   & 1 & 1 & P/S     &    880 & Yes \\
Surround Localization & 6 & 3   & 2 & 5 & P/S     &  4,928 & No  \\
Multi-View Sequencing & 6 & 2   & 3 & 5 & P/S/T   &  4,158 & No  \\
Street-View Ordering  & 6 & 2   & 3 & 5 & P/S/T   &  3,268 & No  \\
Dashcam Counting      & 5 & 2   & 3 & 4 & P/S/T/C &  6,895 & No  \\
Traffic Causation     & 7 & 1   & 4 & 6 & P/T/C   &  2,974 & No  \\
Multistep Reasoning   & 3 & -- & 2 & 2 & P/C     &  6,828 & Yes \\
\bottomrule
\end{tabular}
}
\end{table}

This section reports the benchmark details following our Section~\ref{sec:benchmark}.
Table~\ref{tab:cadet_tasks_overview} lists the 10 overall evaluation tasks, and
Table~\ref{tab:scm_structure} reports the overview and structure of their task-level SCMs, reported depth is the length of the longest directed path terminating at $Y$.

\subsubsection{Capability Categories}
\label{subsubsec:bench_capability}

We group unit tasks into four categories according to the kind of information the task
requires the model to produce. Each unit task is assigned to the category of the
capability it primarily targets.
\begin{itemize}
\item \textbf{Perception}: reading out the presence, identity, category, or image position
of an object from the visual signal, without relating separate elements to one another.
\item \textbf{Spatial}: relating elements to each other or to the observer, covering
distance, size, orientation, viewpoint, and navigation.
\item \textbf{Temporal}: relating observations across frames or views, covering motion,
ordering, and localization in time.
\item \textbf{Cognitive}: operating on already extracted facts, covering counting,
arithmetic, causal attribution, and multi-step or goal-directed reasoning.
\end{itemize}
The overall task of each SCM is labelled with the same four categories
(Table~\ref{tab:cadet_tasks_overview}). Across the 46 unit tasks, the four categories contain
15, 12, 10, and 9 tasks respectively.

\subsubsection{Per-Task SCM and Unit Task}
\label{subsubsec:bench_scm_spec}

Table~\ref{tab:unit_spec} specifies the unit tasks and SCM structure of every evaluation task, giving for each node its capability, category, answer format, number of questions, and parent set.
Nodes are indexed in a topological order of the task-level SCM.
As a result, \ourdataset contains over 33,000 unit task level questions.

\begin{table}[p]
\centering
\caption{Unit task specification of the 10 evaluation tasks in CADET. $Y$ denotes the
overall task outcome; $\circlearrowright$ marks a Markov chain unit task.}
\label{tab:unit_spec}
\resizebox{0.775\linewidth}{!}{
\begin{tabular}{llclrl}
\toprule
Node & Unit Task & Cat. & Answer Format & \#Q & Parents \\
\midrule
\multicolumn{6}{l}{\textit{Music Note Counting}} \\
$X_1$ & Object Identification        & P & Multiple-choice &   821 & --- \\
$X_2$ & Object Classification        & P & Multiple-choice &   114 & $X_1$ \\
$X_3$ & Completeness Verification    & P & Multiple-choice &   114 & --- \\
$Y$   & Object Counting              & C & Integer         &   114 & $X_1, X_2, X_3$ \\
\midrule
\multicolumn{6}{l}{\textit{Dice Sum}} \\
$X_1$ & Object Detection             & P & Integer         &   130 & --- \\
$X_2$ & 3D Surface Recognition       & P & Multiple-choice &   791 & --- \\
$X_3$ & Fine-grained Counting        & C & Integer         &   868 & $X_2$ \\
$Y$   & Arithmetic Summation         & C & Integer         &   130 & $X_1, X_2, X_3$ \\
\midrule
\multicolumn{6}{l}{\textit{Map Navigation}} \\
$X_1$ & Object Localization          & P & Multiple-choice &   443 & --- \\
$X_2$ & Sequential Navigation $\circlearrowright$ & S & Multiple-choice & 368 & $X_1$ \\
$Y$   & Path-Based Counting          & C & Integer         &   107 & $X_1, X_2$ \\
\midrule
\multicolumn{6}{l}{\textit{Maze}} \\
$X_1$ & Object Localization          & P & Multiple-choice &    80 & --- \\
$Y$   & Sequential Navigation $\circlearrowright$ & S & Multiple-choice & 800 & $X_1$ \\
\midrule
\multicolumn{6}{l}{\textit{Surround Localization}} \\
$X_1$ & Object Identification        & P & Multiple-choice & 1,078 & --- \\
$X_2$ & Front-View Identification    & S & Multiple-choice & 1,232 & --- \\
$X_3$ & Viewpoint Estimation         & S & Multiple-choice & 2,156 & --- \\
$X_4$ & Camera View Identification   & S & Multiple-choice &   154 & $X_1, X_2$ \\
$X_5$ & Object Localization          & P & Multiple-choice &   154 & $X_1$ \\
$Y$   & Clock Position Estimation    & S & Integer         &   154 & $X_1, \dots, X_5$ \\
\midrule
\multicolumn{6}{l}{\textit{Multi-View Sequencing}} \\
$X_1$ & Object Localization        & P & Multiple-choice &   770 & --- \\
$X_2$ & Viewpoint Estimation         & S & Multiple-choice & 1,386 & $X_1$ \\
$X_3$ & Distance Comparison          & S & Multiple-choice & 1,540 & $X_1$ \\
$X_4$ & Motion Detection             & T & Multiple-choice &   154 & --- \\
$X_5$ & Interaction Status Identification   & T & Multiple-choice &   154 & $X_1, X_3, X_4$ \\
$Y$   & Temporal Ordering            & T & Letter list     &   154 & $X_1, \dots, X_5$ \\
\midrule
\multicolumn{6}{l}{\textit{Street-View Ordering}} \\
$X_1$ & Object Localization          & P & Multiple-choice &   455 & --- \\
$X_2$ & Size Comparison              & S & Multiple-choice &   897 & $X_1$ \\
$X_3$ & Distance Comparison          & S & Multiple-choice &   910 & $X_1, X_2$ \\
$X_4$ & Horizontal Position Comparison & S & Multiple-choice & 824 & $X_1$ \\
$X_5$ & Motion Detection             & T & Multiple-choice &    91 & --- \\
$Y$   & Temporal Ordering            & T & Letter list     &    91 & $X_1, \dots, X_5$ \\
\midrule
\multicolumn{6}{l}{\textit{Dashcam Counting}} \\
$X_1$ & Object Classification        & P & Multiple-choice & 2,860 & --- \\
$X_2$ & Spatial Relation             & S & Multiple-choice & 2,237 & --- \\
$X_3$ & Object Re-identification     & P & Multiple-choice & 1,378 & $X_1$ \\
$X_4$ & Motion Classification        & T & Multiple-choice &   341 & $X_1, X_2, X_3$ \\
$Y$   & Temporal Object Counting     & C & Integer         &    79 & $X_1, \dots, X_4$ \\
\midrule
\multicolumn{6}{l}{\textit{Traffic Causation}} \\
$X_1$ & Object Localization          & P & Multiple-choice & 2,290 & --- \\
$X_2$ & State Change Recognition     & T & Multiple-choice &   200 & $X_1$ \\
$X_3$ & Reference-Based Action Recognition & T & Multiple-choice & 153 & $X_1$ \\
$X_4$ & Temporal Localization        & T & Multiple-choice &    93 & $X_2, X_3$ \\
$X_5$ & Temporal Sequencing          & T & Multiple-choice &   105 & $X_4$ \\
$X_6$ & False Correlation Rejection  & C & Multiple-choice &    26 & $X_1, X_3$ \\
$Y$   & Causal Reasoning             & C & Free-form text  &   107 & $X_1, \dots, X_6$ \\
\midrule
\multicolumn{6}{l}{\textit{Multistep Reasoning}} \\
$X_1$ & Position Tracking $\circlearrowright$ & P & Integer list   & 3,350 & $X_2$ \\
$X_2$ & Action Prediction $\circlearrowright$ & C & Free-form text & 3,350 & $X_1$ \\
$Y$   & Multi-step Reasoning         & C & Free-form text  &   128 & $X_1, X_2$ \\
\bottomrule
\end{tabular}
}
\end{table}

\subsection{Benchmark Design Rationale}
\label{appdx:subsec:bench_design_retionale}

\textbf{Task Selection.}
The 10 evaluation tasks in CADET are selected based on three considerations. First, we focus on problems where current frontier MLLMs remain far from saturated under end-to-end evaluation, so that the diagnosis concerns capability gaps that current models still struggle with. Second, we select compositional tasks that require multiple interdependent capabilities to reach the final answer, so that each task admits a meaningful decomposition into unit tasks. Third, we prioritize real-world visual scenarios across single-image, multi-image, and video modalities, and the resulting 46 unit tasks cover perception, spatial, temporal, and cognitive capabilities.

\noindent\textbf{Prerequisite Identification and SCM Structure.}
Each unit task corresponds to a single capability. A capability is included in the decomposition of an evaluation task when it is a required information for answering the overall task question. For example, to tell the number on the top face of a dice, one needs to know where the dice is and which face is its top face; to tell the direction of a pedestrian relative to the ego vehicle, one needs to locate the pedestrian and determine the viewpoint of the camera that observes it. We specify a prerequisite dependency from $X_j$ to $X_i$ when solving $X_i$ requires the information or the capability represented by $X_j$. The same task structure is used for all evaluated models. We discuss the completeness of the specified SCMs in Appendix~\ref{sec:discussion}.

\noindent\textbf{Unit-Task Question Granularity.}
As shown in Tables~\ref{tab:cadet_tasks_overview} and~\ref{tab:unit_spec}, the number of questions of a unit task is not necessarily equal to the number of overall task instances. 
When an overall task is built on multiple images, video frames, or target objects, the capability of a unit task does not necessarily apply at the same granularity, and querying all of them within a single bundled question would turn the unit task into a multi-target aggregation problem. 
In such cases, we formulate separate questions at the granularity at which the capability applies.
For example, when a task instance contains five images, object identification is asked separately for each image; when a task instance contains several target objects, object localization is asked separately for each object.

\subsection{Benchmark Construction}

\subsubsection{Image/Video Source and License}

\begin{table}[t]
\centering
\caption{Source media used in \ourdataset.}
\label{tab:source_media}
\resizebox{0.95\linewidth}{!}{
\setlength{\tabcolsep}{4pt}
\renewcommand{\arraystretch}{1.15}
\newcolumntype{L}[1]{>{\raggedright\arraybackslash}p{#1\linewidth}}
\
\begin{tabular}{@{}L{0.16} L{0.42} L{0.15} L{0.20}@{}}
\toprule
\textbf{Source} & \textbf{Source Media Used} & \textbf{Reference} & \textbf{License} \\
\midrule
Argoverse 2 & Urban driving scenes: multi-camera ring imagery and ego-view video & \citet{wilson2023argoverse2} & CC BY-NC-SA 4.0 \\
ROADWork & Roadwork and construction-zone driving images and videos & \citet{ghosh2025roadwork} & ODC-By v1.0 \\
BabyVision & Language-independent core visual reasoning images & \citet{chen2026babyvision} & MIT \\
MEGA-Bench & Real-world multimodal task images & \citet{chen2025megabench} & Apache-2.0 \\
EMMA & Scientific and mathematical reasoning figures & \citet{hao2025emma} & Apache-2.0 \\
Mutopia Project & Engraved sheet-music scores & Mutopia Website & CC BY licensed$^*$ \\
Wikimedia Commons & Sheet-music and general-purpose images & Wikimedia Website & Common License$^*$ \\
Procedurally generated & Synthetic mazes, navigation maps, dice renderings & -- & Ours (1P) \\
\bottomrule
\end{tabular}
}

\vspace{3pt}
{\scriptsize\raggedright
\textbf{Source URLs.}
Argoverse 2: \nolinkurl{https://www.argoverse.org/av2.html};
ROADWork: \nolinkurl{https://github.com/anuragxel/roadwork-dataset};
BabyVision: \nolinkurl{https://huggingface.co/datasets/UnipatAI/BabyVision};
MEGA-Bench: \nolinkurl{https://github.com/TIGER-AI-Lab/MEGA-Bench};
EMMA: \nolinkurl{https://github.com/EMMA-Bench/EMMA};
Mutopia: \nolinkurl{https://www.mutopiaproject.org/};
Wikimedia Commons: \nolinkurl{https://commons.wikimedia.org/}.
\par

* The Mutopia Project and Wikimedia Commons both host media under a mixture of licenses that are declared individually for each file. For these two sources we applied the site-level CC BY license filter when browsing and retained only those images distributed under a CC BY license.
}
\end{table}

As described in Section~\ref{sec:benchmark}, while all evaluation questions, task decompositions, and unit-task annotations in \ourdataset are newly created to mitigate data contamination, the underlying source images and videos are curated from publicly available datasets and open repositories with permissive licenses. Table~\ref{tab:source_media} summarizes all source media~(image and video) used in \ourdataset, along with their corresponding descriptions, references, and license terms.

\subsubsection{Data Construction and Quality Assurance}

\textbf{Two-Stage Construction and Closed-Form Annotation.}
We construct \ourdataset in two stages following predefined task structures. In the first stage, for each of the 10 overall tasks, we define the task scope, a question-answer template, and detailed task instructions on the selected source images and videos. Raters follow these instructions and templates to write the overall task question and its ground-truth answer. In the second stage, to decompose each overall task into unit tasks without introducing noise from free-form question writing, we design the decomposition step as closed-form annotation and labeling tasks. Depending on the unit task, raters draw bounding boxes or paths on the images and provide labels that follow strict format specifications. Once these annotations are collected, we extract the labeled information and convert it into unit-task questions, correct answers ($A_i = 1$), and rendered visual annotations through structured templates. For the incorrect answers ($A_i = 0$), structured unit tasks use random sampling over plausible candidates; specifically, for numerical tasks, we sample within a bounded local neighborhood around the ground truth (e.g., within $\pm 2$ of a groundtruth value of $5$) rather than drawing distant values (e.g., $20$) that models would trivially rule out. For the remaining three free-form text tasks where random sampling is not applicable, raters write the incorrect answers following instructions designed to minimize subjective bias.

\noindent\textbf{Two-Rater Iterative Verification.}
All annotations are conducted by a pool of over 130 experienced multimodal raters. Across both stages, every instance is processed by two raters in sequence: the first rater completes the annotation, and a second quality-check rater independently inspects the result. If the quality-check rater identifies an inaccurate answer, an imprecise bounding box or path, or a label that violates the required format specification, the instance is returned to the first rater for revision. This review cycle repeats until the quality-check rater accepts the annotation.

\noindent\textbf{Instance Revision and Corner-Case Filtering.}
At every annotation step, raters can flag issues with a candidate instance. When a question requires minor rephrasing or an answer can be refined against clear visual content, raters directly update the item within the review loop; 
when an instance cannot be resolved or cleanly annotated at the unit-task level, raters are instructed to drop it. 
Beyond ambiguities in the overall task, drops in the second stage occur when an intermediate unit task cannot be reliably annotated across frames or views. 
Typical cases include a spatial reference (such as a roadside curb) shifting to the image boundary with insufficient visibility, a target vehicle tracked across views becoming too small or partially occluded in a later frame, or a second vehicle of the same color partially entering the scene and preventing unique target identification. 
Removing such instances ensures that every unit task in the SCM admits a well-defined intervention state. 
Across 1,467 candidate instances dispatched for annotation, 323~(22.0\%) were dropped under these criteria, yielding the final 1,144 instances and over 33,000 unit-task questions in \ourdataset.

\section{Experiments Setup}

\textbf{Evaluated Models.}
We evaluate 12 MLLMs across six model families: Gemini 3.1 Pro~\citep{gemini3pro2025}, Gemini 3.5 Flash~\citep{gemini35flash2026}, Gemini 3 Flash~\citep{gemini3flash}, Gemini 3.1 Flash-Lite~\citep{gemini31flashlite}, GPT-5.4~\citep{gpt54}, GPT-5.4 Nano~\citep{gpt54}, Grok 4.3~\citep{grok4modelcard2025}, Gemma 4 31B~\citep{gemma4}, Gemma 4 26B~\citep{gemma4}, Qwen3.5-397B-A17B~\citep{qwen3.5}, Qwen3.6 27B~\citep{qwen3.6_27b}, and Kimi K2.5~\citep{team2026kimi25}.
The Gemini, GPT, Grok and Gemma models are queried through the official APIs, while the remaining open-weight models are queried via OpenRouter.
All models are run with their default sampling settings, maximum supported output tokens, and thinking mode set to \textit{high} (or left at the default active setting when not configurable).

\noindent\textbf{LLM-as-a-judge details.}
Among the 46 unit tasks, 43 use structured output formats (multiple-choice, integer, or coordinate) and are evaluated via deterministic rule-based parsing and exact match, while the remaining 3 free-form text unit tasks are evaluated using Gemini 3.6 Flash~\citep{gemini36flash2026} by comparing model predictions against ground-truth reference answers. Although these 3 unit tasks do not constrain outputs to option letters or numbers, each still pairs with a concise, specific reference answer (e.g., an atomic action such as ``turn left''), so the judge only checks whether the predicted phrase conveys the same meaning as the reference. For the 43 structured unit tasks, Gemini 3.6 Flash is invoked strictly as a fallback when a response deviates from the required surface format (e.g., outputting ``So the answer is **C**.'' instead of ``C''). Across all evaluation runs on structured unit tasks, deterministic exact match directly resolves $98.03\%$ of predictions, with only $1.97\%$ triggering the LLM-as-a-judge fallback.

\noindent\textbf{Prompts.}
Figures~\ref{fig:prompt_intervention} and~\ref{fig:prompt_judge} present the prompt templates used to assemble intervened prerequisite context and to run the LLM judge, respectively. For active interventions ($A_i \in \{1, 0\}$), the prerequisites $\text{Pa}(X_i)$ are formatted as a structured JSON question--answer block and prepended to the target task query using the same template for both $A_i = 1$ and $A_i = 0$, differing only in the filled prerequisite answer values (along with the corresponding visual annotations rendered on the image when $A_i = 1$ represents spatial ground truth). 
Under the natural state ($A_i = \varnothing$), the prerequisite context block is omitted while the target question and output-format instructions remain the same.

\begin{figure}[t]
  \centering
  \includegraphics[width=\linewidth]{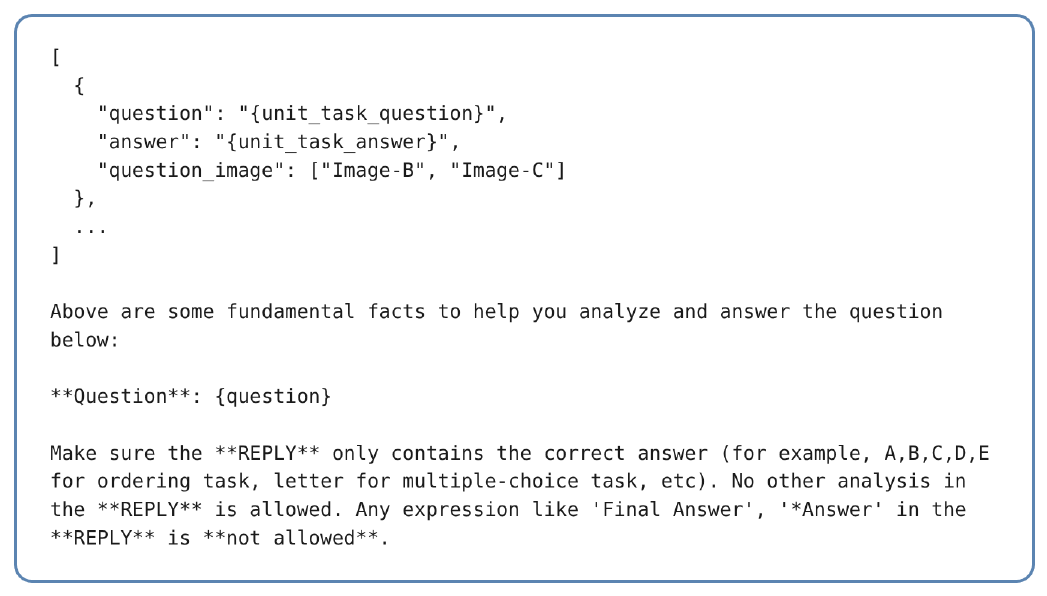}
  \caption{Prompt template for our evaluation under active causal interventions ($A_i \in \{1, 0\}$). Under the natural state ($A_i = \varnothing$), only the target question and last paragraph of output format instructions are prompted.}
  \label{fig:prompt_intervention}
\end{figure}

\begin{figure}[t]
  \centering
  \includegraphics[width=\linewidth]{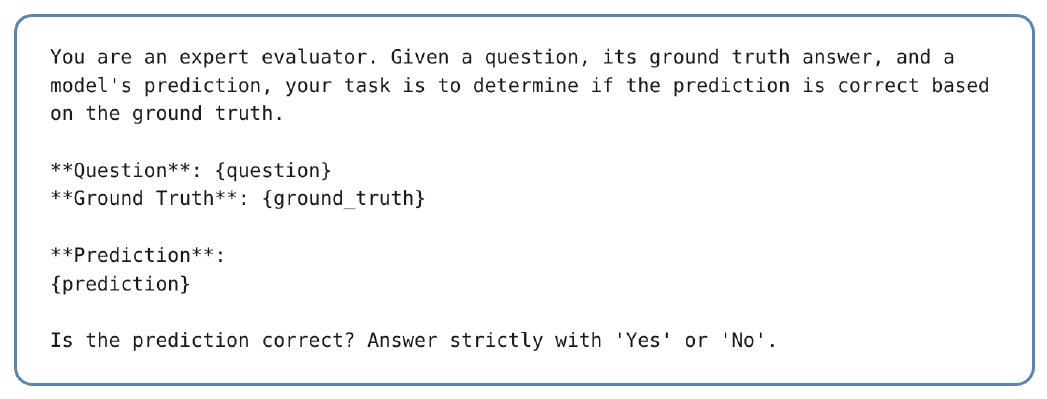}
  \caption{LLM-as-a-judge prompt template.}
  \label{fig:prompt_judge}
\end{figure}

\section{More Experiment Results}

\subsection{More Capability Diagnosis Results (NC, IC, RC)}
\label{app:more-capability-results}

\begin{figure}[t]
\centering
\includegraphics[width=0.4\linewidth]{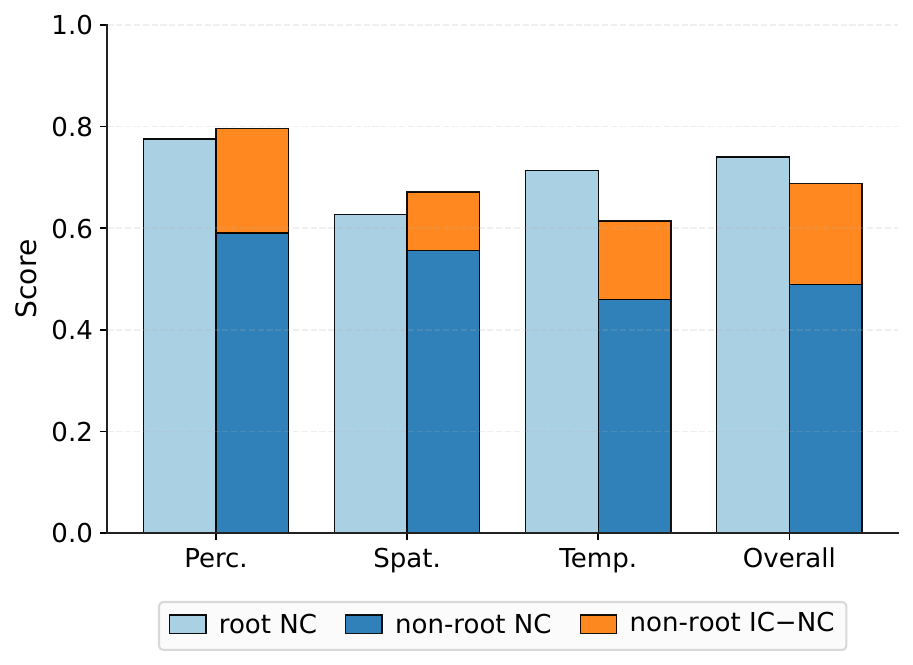}
\caption{Comparing NC on root unit tasks against NC and IC on non-root unit tasks, per category, averaged over the 12 models. Cognitive is not shown as it has no root unit task. The non-root bar stacks IC$-$NC on top of NC, so its top is IC.}
\label{fig:appd-root-vs-nonroot-category}
\end{figure}

\begin{figure}[t]
\centering
\includegraphics[width=0.55\linewidth]{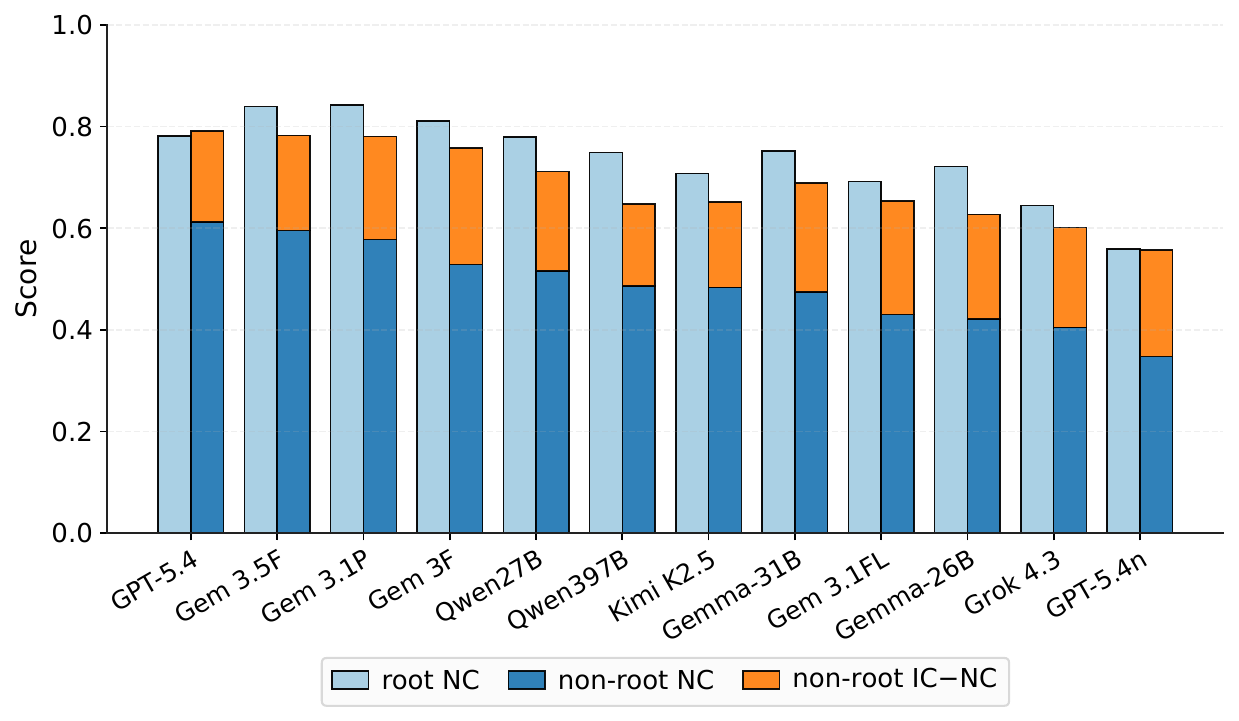}
\caption{Comparing NC on root unit tasks against NC and IC on non-root unit tasks for each of the 12 models. There are 16 root unit tasks and 30 non-root unit tasks. The non-root bar stacks IC$-$NC on top of NC, so its top is IC. Models are ordered by overall non-root NC.}
\label{fig:appd-root-vs-nonroot-model}
\end{figure}

\begin{figure}[t]
\centering
\includegraphics[width=0.3\linewidth]{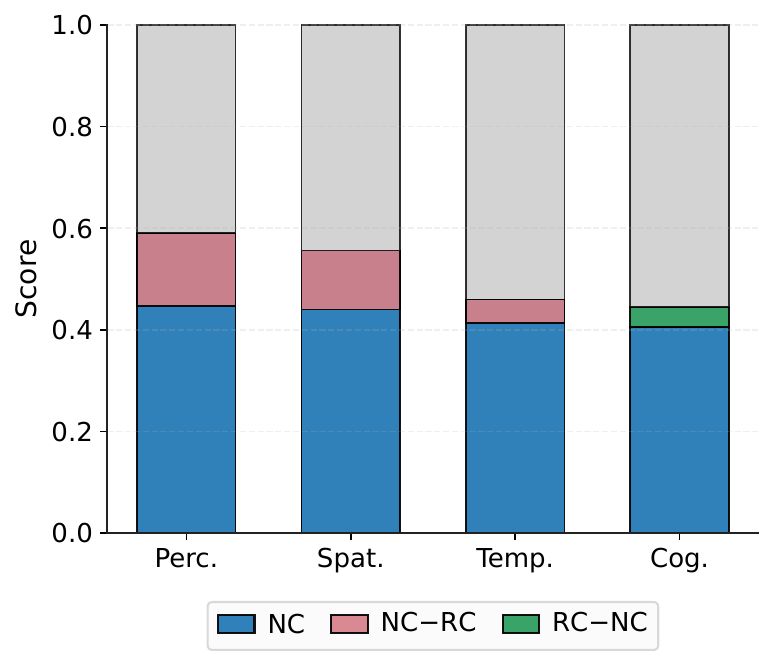}
\caption{Comparing NC and RC across the four categories on non-root unit tasks, averaged over the 12 models. Each bar shows NC with the signed NC/RC difference stacked out of it, so the boundary of the shaded band is RC.}
\label{fig:appd-nc-rc-budget}
\end{figure}
\begin{figure}[t]
\centering
\includegraphics[width=0.7\linewidth]{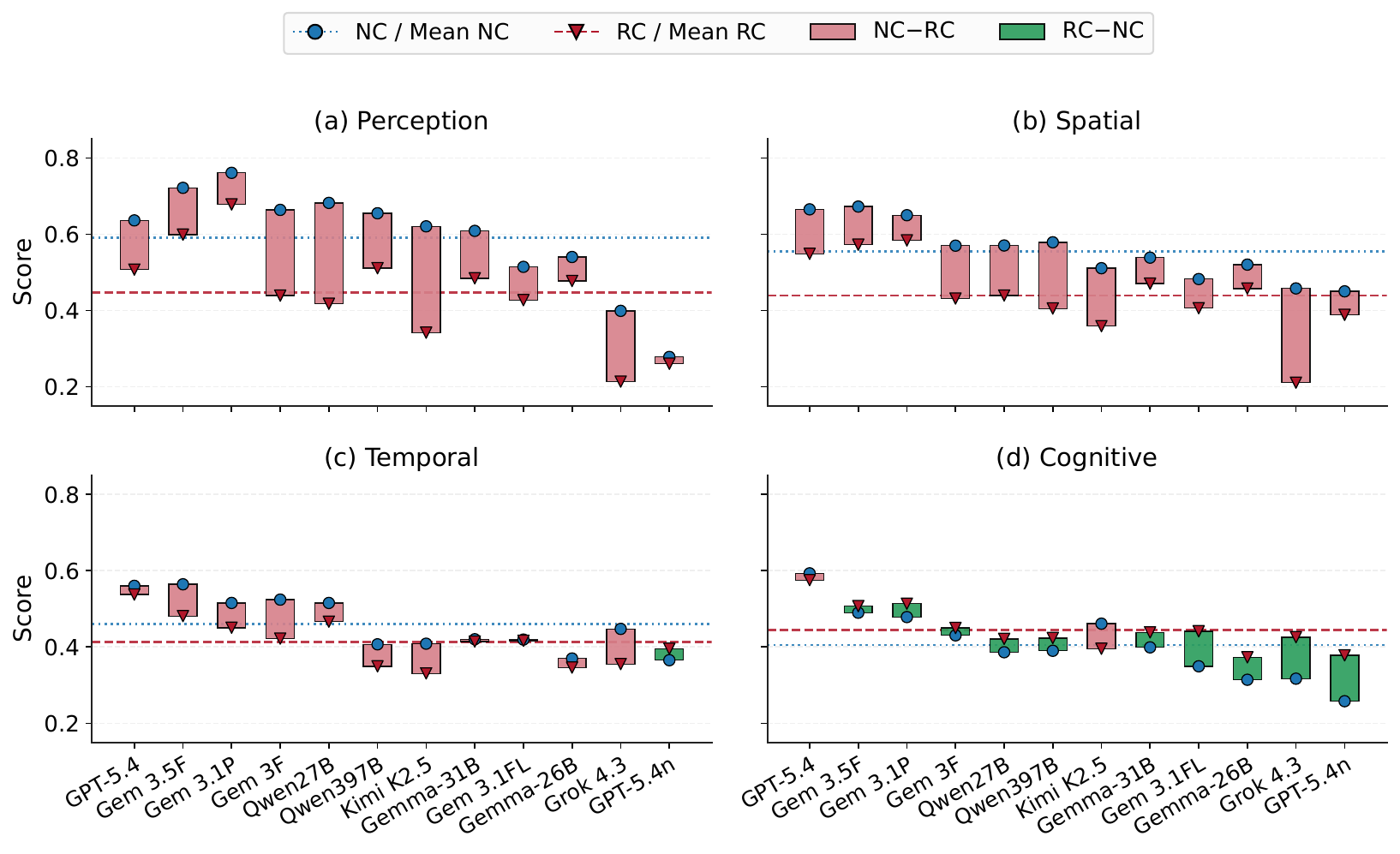}
\caption{Comparing NC and RC for each of the 12 models within each category on non-root unit tasks. Circles mark NC and triangles mark RC; the shaded band spans the two and is colored by the sign of RC $-$ NC. The horizontal lines are the means over the 12 models.}
\label{fig:appd-per-model-nc-rc}
\end{figure}

\begin{table}[t]
\centering
\caption{NC on root unit tasks and NC, IC on non-root unit tasks, per category, with the number of unit tasks in each group. Entries are the unweighted mean over unit tasks, averaged over the 12 models, $\pm$ the standard deviation across models.}
\label{tab:appd-root-vs-nonroot-category}
\small
\resizebox{0.8\linewidth}{!}{%
\begin{tabular}{lcccccc}
\toprule
& \multicolumn{2}{c}{\#unit tasks} & \multicolumn{1}{c}{Root}
& \multicolumn{3}{c}{Non-root} \\
\cmidrule(lr){2-3} \cmidrule(lr){4-4} \cmidrule(lr){5-7}
Category & root & non-root & NC & NC & IC & IC $-$ NC \\
\midrule
Perception & 11 & 4 & 0.775 $\pm$ 0.077 & 0.590 $\pm$ 0.138 & 0.796 $\pm$ 0.112 & +0.206 \\
Spatial & 3 & 9 & 0.627 $\pm$ 0.100 & 0.556 $\pm$ 0.077 & 0.671 $\pm$ 0.059 & +0.115 \\
Temporal & 2 & 8 & 0.714 $\pm$ 0.119 & 0.460 $\pm$ 0.072 & 0.614 $\pm$ 0.087 & +0.154 \\
Cognitive & 0 & 9 & -- & 0.406 $\pm$ 0.092 & 0.721 $\pm$ 0.089 & +0.316 \\
\bottomrule
\end{tabular}}
\end{table}

\begin{table}[t]
\centering
\caption{NC on root unit tasks and NC, IC on non-root unit tasks for each of the 12 models. Entries are the unweighted mean over unit tasks $\pm$ the standard deviation across unit tasks (16 root, 30 non-root). The last two columns are differences of the corresponding means.}
\label{tab:appd-root-vs-nonroot-model}
\small
\resizebox{0.9\linewidth}{!}{%
\begin{tabular}{lccccc}
\toprule
Model & Root NC & Non-root NC & Non-root IC
& NC(root)${-}$NC(non-root) & NC(root)${-}$IC(non-root) \\
\midrule
GPT-5.4 & 0.781 $\pm$ 0.108 & 0.612 $\pm$ 0.180 & 0.791 $\pm$ 0.167 & +0.169 & $-$0.010 \\
Gem 3.5F & 0.839 $\pm$ 0.099 & 0.595 $\pm$ 0.217 & 0.783 $\pm$ 0.172 & +0.244 & +0.056 \\
Gem 3.1P & 0.842 $\pm$ 0.090 & 0.577 $\pm$ 0.202 & 0.781 $\pm$ 0.173 & +0.265 & +0.062 \\
Gem 3F & 0.811 $\pm$ 0.119 & 0.528 $\pm$ 0.206 & 0.757 $\pm$ 0.180 & +0.282 & +0.053 \\
Qwen27B & 0.780 $\pm$ 0.132 & 0.515 $\pm$ 0.247 & 0.711 $\pm$ 0.219 & +0.264 & +0.069 \\
Qwen397B & 0.749 $\pm$ 0.144 & 0.487 $\pm$ 0.240 & 0.647 $\pm$ 0.226 & +0.262 & +0.101 \\
Kimi K2.5 & 0.708 $\pm$ 0.200 & 0.483 $\pm$ 0.199 & 0.652 $\pm$ 0.213 & +0.224 & +0.056 \\
Gemma-31B & 0.752 $\pm$ 0.161 & 0.475 $\pm$ 0.235 & 0.689 $\pm$ 0.202 & +0.278 & +0.063 \\
Gem 3.1FL & 0.692 $\pm$ 0.186 & 0.430 $\pm$ 0.217 & 0.654 $\pm$ 0.209 & +0.262 & +0.038 \\
Gemma-26B & 0.722 $\pm$ 0.196 & 0.421 $\pm$ 0.248 & 0.627 $\pm$ 0.218 & +0.301 & +0.094 \\
Grok 4.3 & 0.644 $\pm$ 0.175 & 0.405 $\pm$ 0.210 & 0.601 $\pm$ 0.205 & +0.239 & +0.043 \\
GPT-5.4n & 0.559 $\pm$ 0.239 & 0.347 $\pm$ 0.190 & 0.557 $\pm$ 0.236 & +0.212 & +0.002 \\
\bottomrule
\end{tabular}}
\end{table}

\begin{table}[t]
\centering
\caption{NC, IC and RC per category on the 30 non-root unit tasks. Entries are the unweighted mean over unit tasks, averaged over the 12 models, $\pm$ the standard deviation across models.}
\label{tab:appd-category-nc-ic-rc}
\small
\resizebox{0.8\linewidth}{!}{%
\begin{tabular}{lccccc}
\toprule
Category & NC & IC & RC & IC $-$ NC & RC $-$ NC \\
\midrule
Perception & 0.590 $\pm$ 0.138 & 0.796 $\pm$ 0.112 & 0.446 $\pm$ 0.131 & +0.206 & $-$0.144 \\
Spatial & 0.556 $\pm$ 0.077 & 0.671 $\pm$ 0.059 & 0.439 $\pm$ 0.102 & +0.115 & $-$0.116 \\
Temporal & 0.460 $\pm$ 0.072 & 0.614 $\pm$ 0.087 & 0.414 $\pm$ 0.063 & +0.154 & $-$0.046 \\
Cognitive & 0.406 $\pm$ 0.092 & 0.721 $\pm$ 0.089 & 0.445 $\pm$ 0.060 & +0.316 & +0.040 \\
\bottomrule
\end{tabular}}
\end{table}

\begin{table}[t]
\centering
\caption{NC, IC and RC per model and category on the 30 non-root unit tasks, with \emph{Overall} aggregating all 30. Entries are the unweighted mean over unit tasks $\pm$ the standard deviation across unit tasks.}
\label{tab:appd-model-category-nc-ic-rc}
\small
\providecommand{\cell}[2]{\begin{tabular}[c]{@{}c@{}}#1\\[-1pt]{\scriptsize\color{gray}\itshape$\pm$#2}\end{tabular}}
\setlength{\tabcolsep}{3.2pt}
\resizebox{\linewidth}{!}{%
\begin{tabular}{l*{15}{c}}
\toprule
& \multicolumn{3}{c}{Perception} & \multicolumn{3}{c}{Spatial} & \multicolumn{3}{c}{Temporal} & \multicolumn{3}{c}{Cognitive} & \multicolumn{3}{c}{Overall} \\
\cmidrule(lr){2-4} \cmidrule(lr){5-7} \cmidrule(lr){8-10} \cmidrule(lr){11-13} \cmidrule(lr){14-16}
Model & NC & IC & RC & NC & IC & RC & NC & IC & RC & NC & IC & RC & NC & IC & RC \\
\midrule
GPT-5.4 & \cell{0.636}{0.194} & \cell{0.913}{0.087} & \cell{0.507}{0.288} & \cell{0.665}{0.202} & \cell{0.747}{0.214} & \cell{0.549}{0.213} & \cell{0.560}{0.148} & \cell{0.730}{0.181} & \cell{0.537}{0.219} & \cell{0.593}{0.192} & \cell{0.835}{0.094} & \cell{0.575}{0.274} & \cell{0.612}{0.180} & \cell{0.791}{0.167} & \cell{0.548}{0.232} \\
Gem 3.5F & \cell{0.722}{0.192} & \cell{0.889}{0.042} & \cell{0.599}{0.194} & \cell{0.672}{0.231} & \cell{0.754}{0.213} & \cell{0.572}{0.252} & \cell{0.564}{0.154} & \cell{0.699}{0.202} & \cell{0.481}{0.187} & \cell{0.490}{0.232} & \cell{0.839}{0.091} & \cell{0.507}{0.219} & \cell{0.595}{0.217} & \cell{0.783}{0.172} & \cell{0.532}{0.212} \\
Gem 3.1P & \cell{0.761}{0.155} & \cell{0.907}{0.027} & \cell{0.678}{0.158} & \cell{0.650}{0.217} & \cell{0.740}{0.230} & \cell{0.584}{0.207} & \cell{0.515}{0.164} & \cell{0.698}{0.175} & \cell{0.450}{0.222} & \cell{0.478}{0.174} & \cell{0.839}{0.088} & \cell{0.513}{0.194} & \cell{0.577}{0.202} & \cell{0.781}{0.173} & \cell{0.540}{0.206} \\
Gem 3F & \cell{0.664}{0.250} & \cell{0.879}{0.072} & \cell{0.439}{0.304} & \cell{0.570}{0.268} & \cell{0.710}{0.253} & \cell{0.431}{0.240} & \cell{0.524}{0.122} & \cell{0.707}{0.163} & \cell{0.422}{0.205} & \cell{0.430}{0.154} & \cell{0.796}{0.121} & \cell{0.450}{0.223} & \cell{0.528}{0.206} & \cell{0.757}{0.180} & \cell{0.435}{0.222} \\
Qwen27B & \cell{0.682}{0.167} & \cell{0.873}{0.119} & \cell{0.418}{0.237} & \cell{0.571}{0.308} & \cell{0.663}{0.252} & \cell{0.439}{0.249} & \cell{0.515}{0.248} & \cell{0.672}{0.210} & \cell{0.466}{0.276} & \cell{0.386}{0.159} & \cell{0.722}{0.220} & \cell{0.421}{0.213} & \cell{0.515}{0.247} & \cell{0.711}{0.219} & \cell{0.438}{0.233} \\
Qwen397B & \cell{0.655}{0.126} & \cell{0.808}{0.167} & \cell{0.511}{0.184} & \cell{0.579}{0.287} & \cell{0.677}{0.269} & \cell{0.405}{0.217} & \cell{0.407}{0.164} & \cell{0.480}{0.150} & \cell{0.349}{0.196} & \cell{0.390}{0.238} & \cell{0.695}{0.194} & \cell{0.423}{0.246} & \cell{0.487}{0.240} & \cell{0.647}{0.226} & \cell{0.410}{0.212} \\
Kimi K2.5 & \cell{0.621}{0.112} & \cell{0.698}{0.127} & \cell{0.342}{0.161} & \cell{0.511}{0.245} & \cell{0.659}{0.300} & \cell{0.359}{0.231} & \cell{0.409}{0.156} & \cell{0.586}{0.175} & \cell{0.331}{0.227} & \cell{0.461}{0.204} & \cell{0.682}{0.187} & \cell{0.395}{0.138} & \cell{0.483}{0.199} & \cell{0.652}{0.213} & \cell{0.360}{0.189} \\
Gemma-31B & \cell{0.609}{0.291} & \cell{0.825}{0.035} & \cell{0.485}{0.272} & \cell{0.539}{0.275} & \cell{0.679}{0.279} & \cell{0.470}{0.247} & \cell{0.420}{0.182} & \cell{0.625}{0.163} & \cell{0.414}{0.236} & \cell{0.399}{0.200} & \cell{0.696}{0.182} & \cell{0.438}{0.217} & \cell{0.475}{0.235} & \cell{0.689}{0.202} & \cell{0.448}{0.227} \\
Gem 3.1FL & \cell{0.514}{0.294} & \cell{0.756}{0.117} & \cell{0.427}{0.264} & \cell{0.482}{0.221} & \cell{0.627}{0.260} & \cell{0.406}{0.207} & \cell{0.419}{0.177} & \cell{0.557}{0.230} & \cell{0.416}{0.232} & \cell{0.349}{0.220} & \cell{0.720}{0.141} & \cell{0.441}{0.212} & \cell{0.430}{0.217} & \cell{0.654}{0.209} & \cell{0.422}{0.211} \\
Gemma-26B & \cell{0.540}{0.351} & \cell{0.791}{0.115} & \cell{0.477}{0.301} & \cell{0.520}{0.287} & \cell{0.632}{0.285} & \cell{0.458}{0.260} & \cell{0.370}{0.171} & \cell{0.532}{0.212} & \cell{0.346}{0.196} & \cell{0.314}{0.186} & \cell{0.635}{0.154} & \cell{0.373}{0.216} & \cell{0.421}{0.248} & \cell{0.627}{0.218} & \cell{0.405}{0.230} \\
Grok 4.3 & \cell{0.399}{0.237} & \cell{0.659}{0.096} & \cell{0.214}{0.223} & \cell{0.458}{0.249} & \cell{0.594}{0.273} & \cell{0.211}{0.161} & \cell{0.447}{0.215} & \cell{0.585}{0.220} & \cell{0.355}{0.218} & \cell{0.317}{0.150} & \cell{0.597}{0.174} & \cell{0.426}{0.215} & \cell{0.405}{0.210} & \cell{0.601}{0.205} & \cell{0.314}{0.213} \\
GPT-5.4n & \cell{0.278}{0.250} & \cell{0.551}{0.331} & \cell{0.260}{0.234} & \cell{0.450}{0.204} & \cell{0.571}{0.232} & \cell{0.389}{0.202} & \cell{0.365}{0.176} & \cell{0.498}{0.233} & \cell{0.395}{0.222} & \cell{0.258}{0.120} & \cell{0.598}{0.233} & \cell{0.378}{0.166} & \cell{0.347}{0.190} & \cell{0.557}{0.236} & \cell{0.370}{0.196} \\
\bottomrule
\end{tabular}}
\end{table}

This section provides additional visualizations and the complete numerical results supporting the capability diagnostics in Section~5.2. First, whereas Section~\ref{subsec:exp_cap} focuses on the 30 non-root unit tasks ($\text{Pa}(X_i) \neq \emptyset$) that admit upstream interventions, Figure~\ref{fig:appd-root-vs-nonroot-category},\ref{fig:appd-root-vs-nonroot-model} and Table~\ref{tab:appd-root-vs-nonroot-model},\ref{tab:appd-root-vs-nonroot-category} incorporate the 16 foundational root unit tasks ($\text{Pa}(X_i) = \emptyset$, for which only natural capability $\text{NC}$ is defined), comparing root $\text{NC}$ against non-root $\text{NC}$ and $\text{IC}$ across individual models (Figure~\ref{fig:appd-root-vs-nonroot-model}, Table~\ref{tab:appd-root-vs-nonroot-model}) and capability categories (Figure.~\ref{fig:appd-root-vs-nonroot-category}, Table~\ref{tab:appd-root-vs-nonroot-category}). Second, Figure~\ref{fig:appd-nc-rc-budget},\ref{fig:appd-per-model-nc-rc} and Table~\ref{tab:appd-category-nc-ic-rc},\ref{tab:appd-model-category-nc-ic-rc} expand the compact $\text{RC}-\text{NC}$ summary in Figure~4 of the main text by plotting absolute $\text{NC}$ and $\text{RC}$ levels alongside their signed differences at the category level (Figure~\ref{fig:appd-nc-rc-budget}, Table~\ref{tab:appd-category-nc-ic-rc}) and across all 12 models within each category (Figure~\ref{fig:appd-per-model-nc-rc}, Table~\ref{tab:appd-model-category-nc-ic-rc}), reporting the full per-group means and standard deviations.

\noindent\textbf{Comparing Non-Root Capabilities ($\text{NC}$, $\text{IC}$) with Foundational Root Capabilities ($\text{NC}$).}
Because root unit tasks have no modeled upstream prerequisites in the task SCM ($\text{Pa}(X_i) = \emptyset$), their natural capability ($\text{NC}$) assesses foundational proficiency independently of upstream prerequisite states. Under unassisted evaluation ($\text{do}(\mathbf{A}_{\text{Pa}(X_i)} = \emptyset)$), models achieve higher accuracy on the 16 root unit tasks than on the 30 downstream non-root unit tasks: as reported in Table~\ref{tab:appd-root-vs-nonroot-model} and Figure~\ref{fig:appd-root-vs-nonroot-model}, mean $\text{NC}$ across the 12 models is $0.740$ on root nodes versus $0.490$ on non-root nodes (a difference of $+0.250$, ranging from $+0.169$ to $+0.301$ across individual models). A consistent $\text{NC}$ difference between root and non-root nodes appears within each category that contains root tasks (Table~\ref{tab:appd-root-vs-nonroot-category}, Figure~\ref{fig:appd-root-vs-nonroot-category}): Perception ($0.775$ vs.\ $0.590$), Spatial ($0.627$ vs.\ $0.556$), and Temporal ($0.714$ vs.\ $0.460$), while Cognitive unit tasks occur exclusively as non-root nodes ($\text{Pa}(X_i) \neq \emptyset$) and score lowest under unassisted evaluation ($\text{NC} = 0.406$). When correct prerequisite answers are supplied ($\text{do}(\mathbf{A}_{\text{Pa}(X_i)} = 1)$), mean non-root $\text{IC}$ across models rises to $0.688$ against the $0.740$ root $\text{NC}$ baseline 
(narrowing the overall gap from $0.250$ to $0.052$, with differences as small as $-0.010$ for GPT-5.4 and $0.002$ for GPT-5.4n). 
Across categories, this comparison reinforces the capability profile in Section~5.2: Perception maintains high accuracy on both root tasks ($\text{NC} = 0.775$) and prerequisite-supplied non-root tasks ($\text{IC} = 0.796$), and Cognitive non-root tasks rise from $\text{NC} = 0.406$ to $\text{IC} = 0.721$, approaching the average root-node baseline ($0.740$). By contrast, Spatial exhibits the lowest root-node accuracy ($\text{NC} = 0.627$) and reaches a similarly bounded ceiling on non-root tasks when prerequisites are supplied ($\text{IC} = 0.671$), while Temporal non-root tasks reach $\text{IC} = 0.614$ (vs.\ $0.714$ on root tasks), consistent with the persistent residual gaps ($1-\text{IC}$) in spatial and temporal capabilities identified in the main text.

\noindent\textbf{Expanded Breakdown of Resilience Capability ($\text{RC}$ vs.\ $\text{NC}$).}
Figure~\ref{fig:appd-nc-rc-budget} and Table~\ref{tab:appd-category-nc-ic-rc} complement Figure~4 by displaying $\text{NC}$ and $\text{RC}$ on the same absolute capability scale across categories, while Figure~\ref{fig:appd-per-model-nc-rc} and Table~\ref{tab:appd-model-category-nc-ic-rc} provide the disaggregated per-model profiles parallel to Figure~3(b--e). On average across the 12 models (Table~\ref{tab:appd-category-nc-ic-rc}), injecting incorrect prerequisite answers ($\text{do}(\mathbf{A}_{\text{Pa}(X_i)} = 0)$) yields lower accuracy than unassisted evaluation in Perception ($0.590 \to 0.446$, $\text{RC}-\text{NC} = -0.144$), Spatial ($0.556 \to 0.439$, $-0.116$), and Temporal ($0.460 \to 0.414$, $-0.046$), corresponding to an overall non-root difference of $-0.054$ ($0.490 \to 0.435$), whereas Cognitive tasks exhibit a modest positive shift ($0.406 \to 0.445$, $+0.040$). The per-model breakdown in Figure~\ref{fig:appd-per-model-nc-rc} and Table~\ref{tab:appd-model-category-nc-ic-rc} shows that these category-level trends hold consistently across model families and scales: all 12 models exhibit $\text{RC} < \text{NC}$ in Perception and Spatial, 11 of 12 decline in Temporal (with GPT-5.4n showing a $+0.030$ shift), and 10 of 12 show positive $\text{RC}-\text{NC}$ shifts in Cognitive (all except GPT-5.4 at $-0.018$ and Kimi K2.5 at $-0.066$), providing the full per-model evidence for the non-monotonicity discussed in Property~3.6 and Section~5.2.

\subsection{More Causal Contribution Diagnosis Results ($N$, $S$-Score)}
\label{app:contribution}

In this section, we provide additional experimental results complementing the causal contribution analysis in Section~5.3.
Figure~\ref{fig:app_e_quadrants} reports prerequisite $N$- and $S$-Scores under the remaining three state contrasts $(1,0)$, $(0,\varnothing)$, and $(\varnothing,0)$ (complementing $(1,\varnothing)$ in Figure~\ref{fig:finding_contribution_quadrant_1_empty}).
Figure~\ref{fig:app_e_pattern} disaggregates these three contrasts by task pattern and model group (extending Figure~\ref{fig:finding_contrib_4panel_quadrants}).
Figure~\ref{fig:app_e_similarity} shows the inter-model rank agreement across all eight contribution scores, and Figure~\ref{fig:app_e_parent} provides the per-prerequisite breakdown across the four state contrasts.
To prevent small denominators from producing numerically unstable ratios in Eqs.~(5)--(6), we set a score to $0$ whenever its baseline denominator falls below $0.10$.
Crucially, this threshold has zero effect on six of the eight metrics and affects only $3.0\%$ ($94/3{,}168$) of all scores, concentrated in $N(0,\varnothing)$ (with a small remainder in $N(\varnothing,0)$).
This is expected by design: $N(0,\varnothing)$ measures the success drop from setting a prerequisite to $\varnothing$ in the all-incorrect ($0$) environment, where baseline success $\mathrm{RC}(Y)$ is naturally near zero whenever incorrect prerequisites prevent task completion.

\begin{figure}[t]
  \centering
  \includegraphics[width=0.8\linewidth]{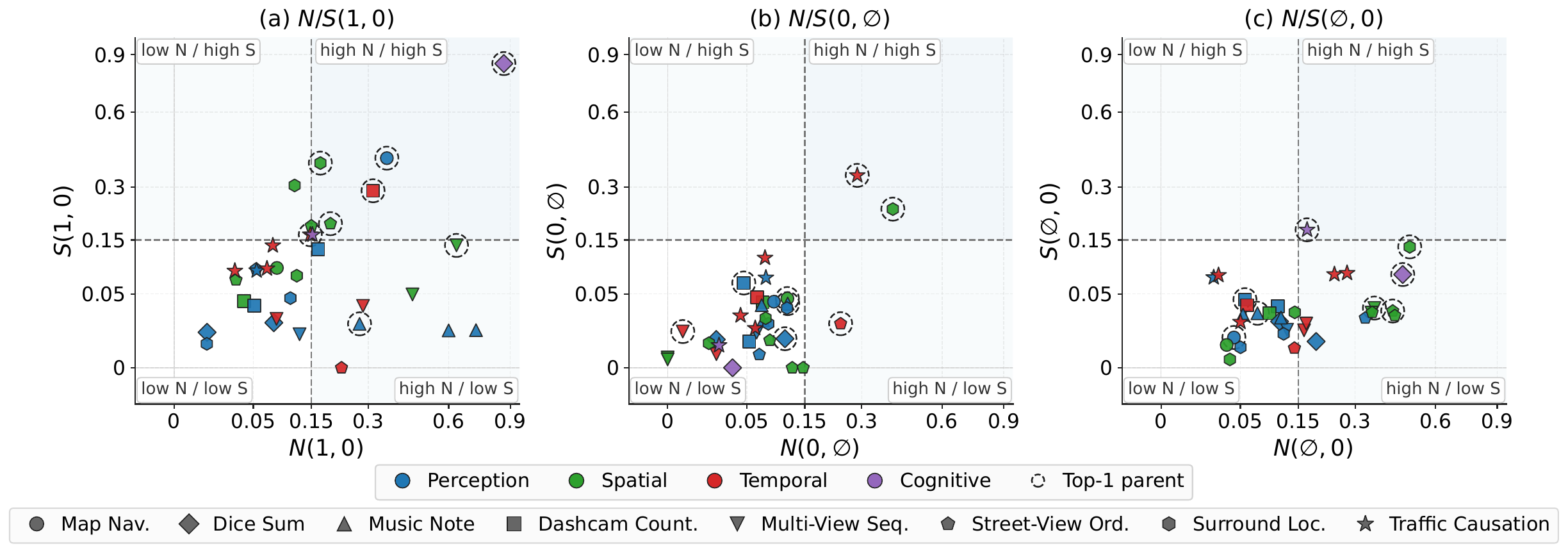}
  \caption{Prerequisite $N/S$-Scores under state contrasts (a) $(1,0)$, (b) $(0,\varnothing)$, and (c) $(\varnothing,0)$, averaged across models. Dashed circles mark the highest-$S$ prerequisite within each task. Axes and quadrant thresholds follow Figure~\ref{fig:finding_contribution_quadrant_1_empty}.}
  \label{fig:app_e_quadrants}
\end{figure}
\noindent\textbf{Corrupting a prerequisite reduces success more than leaving it unassisted.}
$N(1,0)$ and $N(1,\varnothing)$ share the same oracle baseline where all prerequisites are correctly provided, differing only in whether $X_i$ receives an incorrect answer or is left unassisted.
Across prerequisites, $N(1,0)$ consistently exceeds $N(1,\varnothing)$ (Figure~\ref{fig:app_e_quadrants}a vs.\ Figure~\ref{fig:finding_contribution_quadrant_1_empty}), shifting more prerequisites into the high-$N$ / low-$S$ quadrant and fewer into the low-$N$ / high-$S$ quadrant.
For example, on the perception prerequisites of Music Note, $N(1,\varnothing)$ is low ($0.04$--$0.16$) and $S(1,\varnothing)$ is high ($0.51$--$0.59$): omitting a single prerequisite in the oracle setting incurs minimal loss, while supplying it alone in the natural state recovers over half of the failures.
Under $(1,0)$, however, $N(1,0)$ rises to $0.27$--$0.73$ and $S(1,0)$ falls to at most $0.02$: a single incorrect prerequisite removes a substantial fraction of oracle success, while correcting one prerequisite when others remain incorrect recovers almost no failures.
Similarly, under $(\varnothing,0)$ (Figure~\ref{fig:app_e_quadrants}c), prerequisites shift along $N(\varnothing,0)$ into the high-$N$ / low-$S$ region, whereas under $(0,\varnothing)$ (Figure~\ref{fig:app_e_quadrants}b), most prerequisites remain near zero with positive values concentrated in specific prerequisites examined in Figure~\ref{fig:app_e_parent}.

\begin{figure}[t]
  \centering
  \includegraphics[width=0.9\linewidth]{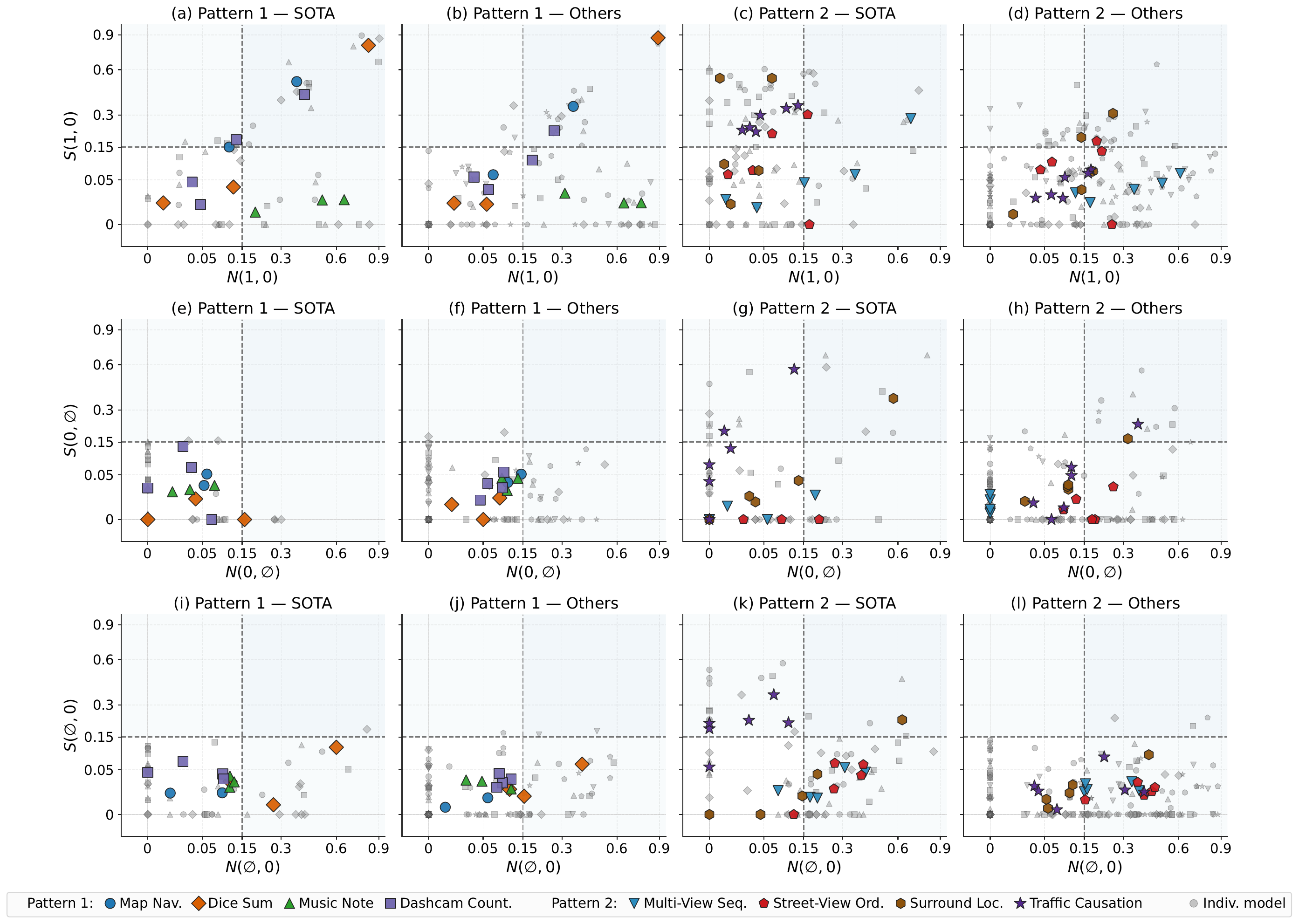}
  \caption{Prerequisite $N/S$-Scores under $(1,0)$ (a--d), $(0,\varnothing)$ (e--h), and $(\varnothing,0)$ (i--l) by task pattern and model group. Large markers denote group means; grey markers indicate individual models.}
  \label{fig:app_e_pattern}
\end{figure}
\noindent\textbf{Separation between SOTA and Others under prerequisite corruption.}
When disaggregated by task pattern and model group (Figure~\ref{fig:app_e_pattern}), SOTA and Others separate clearly in Pattern~2 across all three contrasts involving incorrect prerequisites: $S(1,0)$, $S(\varnothing,0)$, and $N(1,0)$ (Figure~\ref{fig:app_e_pattern}c,d,k,l).
Starting from the all-incorrect environment, SOTA achieves consistently higher single-prerequisite recovery than Others both when correcting a single prerequisite ($S(1,0)$: $0.143$--$0.263$ for SOTA vs.\ $0.029$--$0.105$ for Others) and when removing a single incorrect answer ($S(\varnothing,0)$: $0.044$--$0.152$ vs.\ $0.015$--$0.028$).
Symmetrically, starting from the oracle environment, corrupting a single prerequisite causes smaller drops for SOTA ($N(1,0)$: $0.097$--$0.106$) than for Others ($0.108$--$0.295$), mirroring the shift toward higher $N$ and lower $S$ for Others under $(1,\varnothing)$ in Figure~\ref{fig:finding_contrib_4panel_quadrants}.

\begin{figure}[t]
  \centering
  \includegraphics[width=0.8\linewidth]{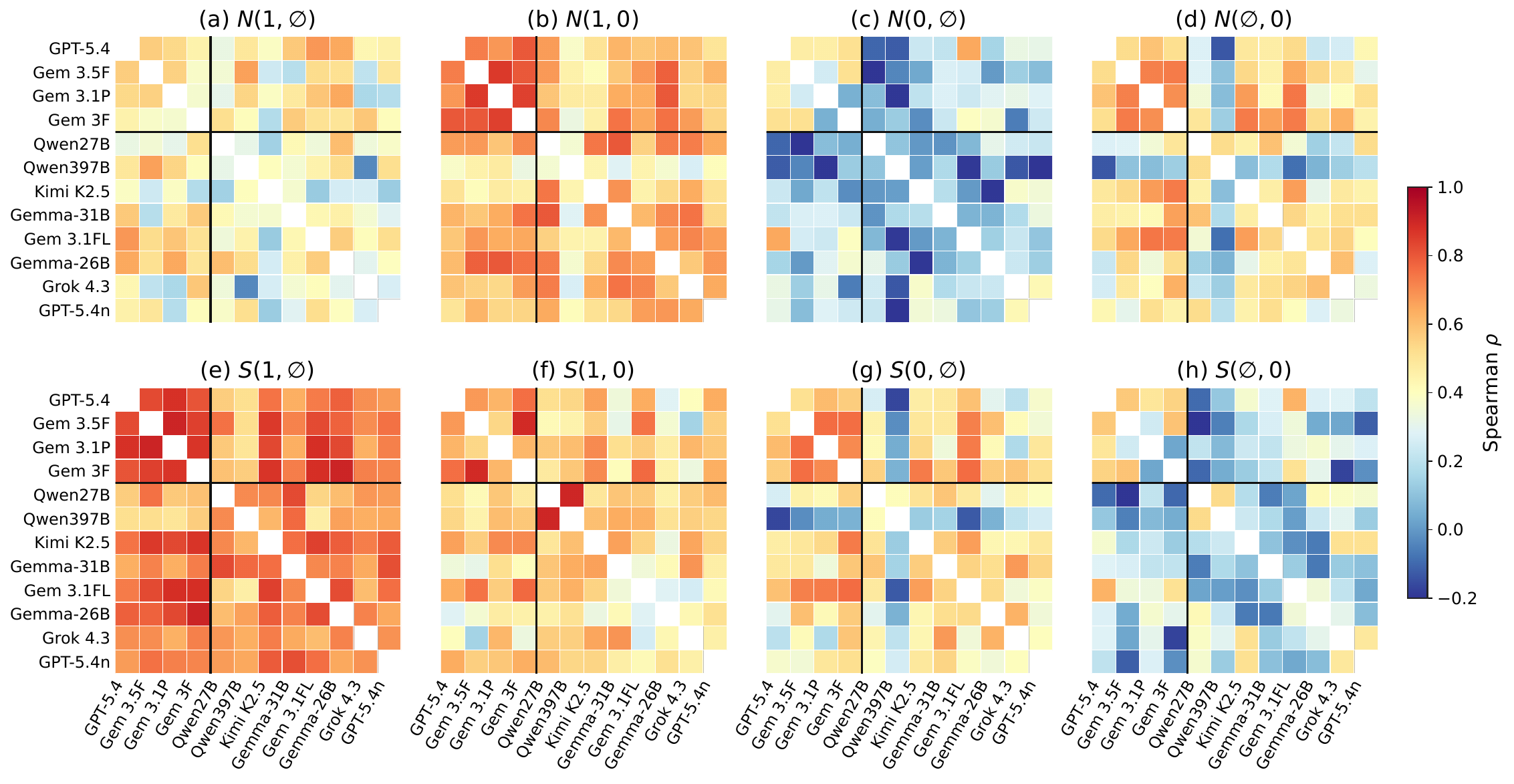}
  \caption{Pairwise Spearman rank correlation between models on $N$-Scores (a--d) and $S$-Scores (e--h) across prerequisites under the four state contrasts. Lines separate SOTA from Others.}
  \label{fig:app_e_similarity}
\end{figure}
\noindent\textbf{Inter-model agreement on high-$S(1,\varnothing)$ prerequisites.}
Among the eight contribution metrics, $S(1,\varnothing)$ exhibits the strongest inter-model agreement, with a mean pairwise Spearman correlation of $0.72$, compared with $0.41$ for $N(1,\varnothing)$ (Figure~\ref{fig:app_e_similarity}).
Thus, the high-leverage prerequisites that recover natural failures when supplied alone are largely shared across models, whereas sensitivity to single-prerequisite omission in the oracle setting varies more across models.
Agreement on $S(1,\varnothing)$ is especially strong among the four SOTA models (mean $\rho = 0.85$ vs.\ $0.70$ among Others), holding both within the Gemini series ($0.87$) and across families between GPT-5.4 and Gemini ($0.84$), and within individual tasks the top-$S(1,\varnothing)$ prerequisite is identical or shared between the same top pair across models on the majority of tasks.

\begin{figure}[t]
  \centering
  \includegraphics[width=\linewidth]{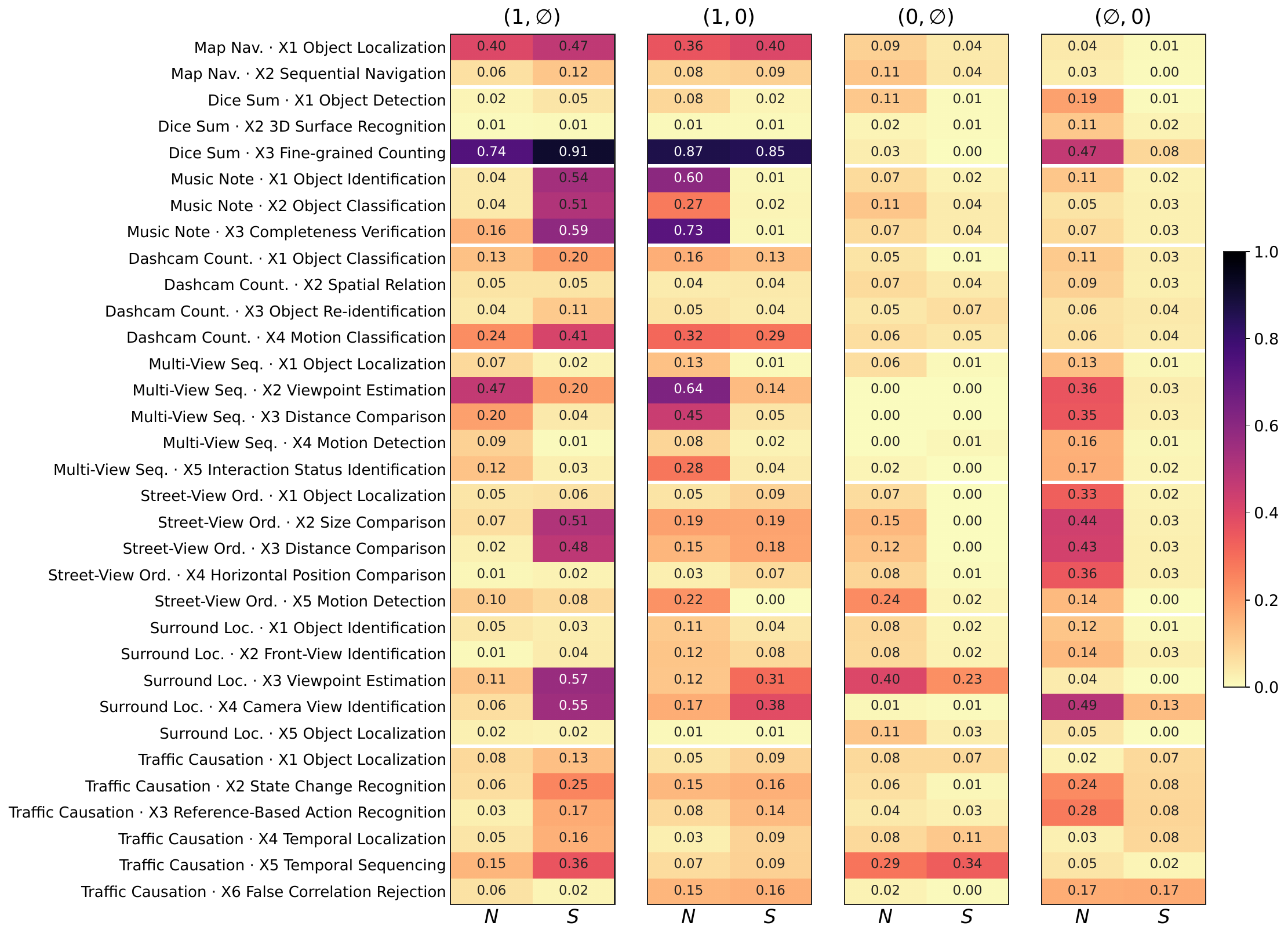}
  \caption{Prerequisite $N/S$-Scores under the four state contrasts, averaged over evaluated models.}
  \label{fig:app_e_parent}
\end{figure}
\noindent\textbf{Cross-contrast consistency and per-prerequisite breakdown.}
Across prerequisites, rankings correlate positively between $N(1,\varnothing)$ and $N(1,0)$ (Spearman $\rho = 0.64$) and between $S(1,\varnothing)$ and $S(1,0)$ ($\rho = 0.53$; Figure~\ref{fig:app_e_parent}), indicating that prerequisites whose omission most reduces oracle performance also tend to be the most destructive when corrupted, and those that best recover natural failures also drive recovery in the all-incorrect setting.
For transitions between $\varnothing$ and $0$, $(\varnothing,0)$ and $(0,\varnothing)$ measure opposite directions of the same state pair; under zero-clipping, at most one direction is active for a given model and prerequisite, separating performance degradation from incorrect prerequisites ($N(\varnothing,0), S(\varnothing,0)$) from positive recovery when an incorrect prerequisite is provided over no prerequisite ($S(0,\varnothing), N(0,\varnothing)$) under Property~3.6.
Consistent with Section~5.2 and Figure~9, positive $S(0,\varnothing)$ values appear primarily on prerequisites feeding into downstream cognitive or spatial-relational integration—such as Temporal Sequencing ($0.34$), Temporal Localization ($0.11$), and Object Localization ($0.07$) in Traffic Causation, and Viewpoint Estimation ($0.23$) in Surround Loc.—while remaining near zero on foundational perception and spatial prerequisites.

\subsection{Detail Explanation of Markov Chain Unit Task Results}
\label{app:markov_results}

Section~5.4 analyzes the three Markov chain unit tasks in CADET (Figure~\ref{fig:finding_markov_causal_profiles}): \textit{Maze} and \textit{Multistep Reasoning}, whose primary task structure is a Markov chain, together with the route-planning unit task embedded within \textit{Map Navigation}. While all three share the same Markov sequential dependency structure (Definition~\ref{def:markov_chain}), they differ in their concrete step-wise logic, and because this distinction directly manifests in their empirical behavior, we group them into the two regimes discussed in Section~5.4: \emph{goal-directed} and \emph{process-cumulative}. In \textit{Maze}, the trajectory from the start to the exit is partitioned into $T{=}10$ steps; because the overall query (identifying which candidate is the exit) cannot be further decomposed into distinct sub-questions, at each step $t$ the model is positioned at the $t$-th waypoint to answer this same exit-identification question. Similarly, in the Markov chain node of \textit{Map Navigation}, the model stands at each intermediate step along the route and looks toward a given destination to decide the next move. Both tasks are therefore goal-directed: conditioned on the current step $t$, solving the task depends only on the remaining path toward the goal rather than the history already traversed. By contrast, \textit{Multistep Reasoning} is process-cumulative: starting from an initial state without a pre-specified destination, the model follows movement rules where each step alternates between identifying the current \textit{Position} and determining the current \textit{Action}. Because the terminal state is unknown in advance, accurately resolving either node at step $t$ requires tracking the cumulative execution process from the starting point up to step $t$.

To operationalize the ternary intervention at step $t$ along an unfolded chain, we provide the full preceding trajectory context from step $1$ to $t{-}1$ in the prompt. Under $A{=}1$, we inject the ground-truth questions, answers, and corresponding visual annotations for all preceding steps. A special case arises in \textit{Maze}: because every step asks the same exit-identification question and shares the identical ground-truth answer, injecting the textual answer of any preceding step would directly reveal the answer at step $t$. For $A{=}1$ in \textit{Maze}, we therefore inject the preceding step questions without their textual answers, accompanied by the visual annotation on the image marking the current waypoint and the human-annotated path from the start up to step $t{-}1$. Under $A{=}0$, each preceding step from $1$ to $t{-}1$ is independently assigned a randomly sampled incorrect answer from the valid answer space via text only (Section~5.1), while $A{=}\varnothing$ leaves the preceding trajectory unassisted.

\subsection{Additional Experiments and Findings}

\begin{figure}[t]
  \centering
  \begin{minipage}[t]{0.6\textwidth}
    \vspace{0pt}
    \centering
    \includegraphics[width=\linewidth]{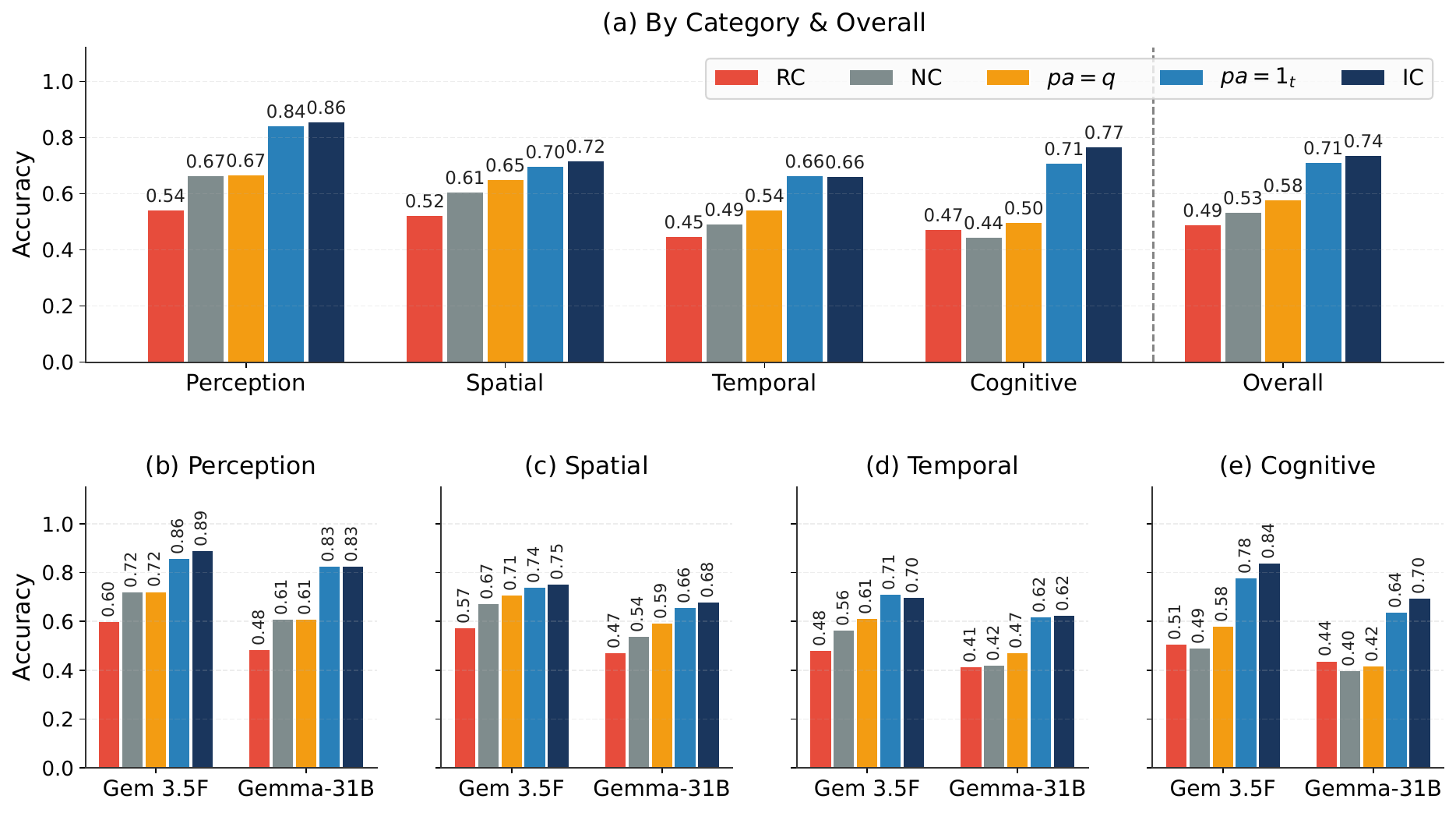}
    \caption{Capability on non-root unit tasks under five prerequisite states: RC, NC , question-only ($q$), text-answer-only ($1_t$), and IC. (a) Category-level and overall averages over 30 non-root unit tasks and two models; (b--e) per-model results within each category.}
    \label{fig:abl_capability}
  \end{minipage}\hfill
  \begin{minipage}[t]{0.38\textwidth}
    \vspace{0pt}
    \centering
    \includegraphics[width=\linewidth]{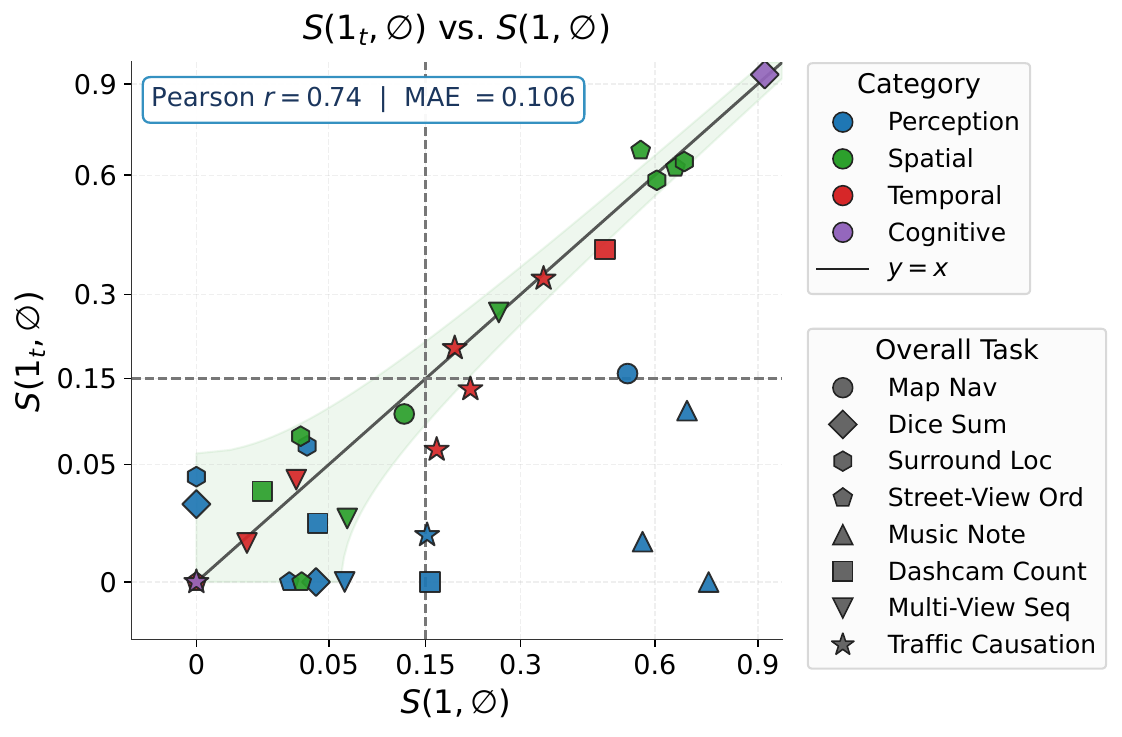}
    \caption{$S(1_t,\varnothing)$ vs.\ $S(1,\varnothing)$ for each prerequisite of an overall task, averaged over two models. Color denotes the prerequisite's capability category and marker shape the overall task. The solid line denotes $y=x$ and the shaded band $|y-x|\leq 0.06$; dashed lines mark the 0.15 threshold as in Figure~\ref{fig:finding_contribution_quadrant_1_empty} ($\mathrm{sqrt}(\cdot)$-scaled axes).} % TODO: label of main-paper Figure 5
    \label{fig:abl_s_parity}
  \end{minipage}
\end{figure}

\subsubsection{Ablation on Intervention Modality}
\label{app:abl_modality}

In the main experiments, the correct intervention ($A_i=1$) injects both a textual answer and, where applicable, visual annotations (e.g., bounding boxes, trajectories) onto the image. To examine the role of visual annotations in this intervention, we also evaluate a text-answer-only condition $A_i=1_t$, which provides the same textual answer as $A_i=1$ but keeps the original image unmodified. We run this condition on two models, Gemini 3.5 Flash from the top-4 (SOTA) group and Gemma-4-31B from the remaining (Others) group, across all 30 non-root unit tasks.

Figure~\ref{fig:abl_capability}(a) shows that $1_t$ closely approaches IC across all four capability categories. In temporal, the two conditions are virtually identical (both 0.66); in perception and spatial, they differ by only 0.02 (0.84 vs.\ 0.86; 0.70 vs.\ 0.72). The largest gap appears in cognitive (0.71 vs.\ 0.77), where $1_t$ nonetheless remains far above NC (0.44) and RC (0.47). Overall, $1_t$ reaches 0.71 against an IC of 0.74. The per-model results (Figure~\ref{fig:abl_capability}(b--e)) are consistent: for both models, $1_t$ tracks IC in every category, with a gap of at most 0.06 in any model--category pair. These results indicate that the textual answer accounts for the large majority of the capability gain under correct prerequisites, while the remaining difference between $1_t$ and IC reflects the additional contribution of visual annotations.

The same pattern holds for the causal contribution metrics. Figure~\ref{fig:abl_s_parity} compares $S(1_t,\varnothing)$ with $1_t$ in place of state $1$, against $S(1,\varnothing)$ for each prerequisite. % TODO: label of Definition 3.8
Spatial, temporal, and cognitive prerequisites lie along the diagonal, indicating that the magnitude and ranking of S-Scores are largely preserved without visual annotations. The systematic departures below the diagonal correspond to perception prerequisites, whose $S(1_t,\varnothing)$ falls well below $S(1,\varnothing)$. The corresponding N-Score comparison shows a similar agreement (Pearson $r=0.87$ between $N(1_t,\varnothing)$ and $N(1,\varnothing)$ across all prerequisites).

\subsubsection{Ablation on Question-Only Injection}
\label{app:abl_question}

In Section~\ref{subsec:exp_cap}, incorrect prerequisites improve cognitive tasks for most models (RC\,$>$\,NC), and in Section~\ref{sec:exp:contribution}, $S(0,\varnothing)$ is nonzero for some prerequisite categories, indicating that even incorrect answers occasionally help. % TODO: labels of Section 5.2 / 5.3
Since the incorrect intervention ($A_i=0$) injects both a sub-question and an incorrect answer, a natural follow-up is to ask how much of the prerequisite effect comes from the sub-question itself. We therefore evaluate a question-only condition $A_i=q$, which injects each prerequisite's sub-question without any answer, on the same two models. As shown in Figure~\ref{fig:abl_capability}(a), $q$ yields a modest improvement over NC (overall 0.58 vs.\ 0.53) but remains far below $1_t$ (0.71) and IC (0.74). Across categories, $q$ exceeds NC by at most 0.06, whereas $1_t$ exceeds NC by 0.09 (spatial) to 0.26 (cognitive). The causal contribution metrics show the same trend: $S(q,\varnothing)$ averages 0.05 across prerequisites, compared with 0.26 for $S(1,\varnothing)$ on the same two models, and $S(q,\varnothing)$ does not exceed 0.09 in any prerequisite category. 
Thus, providing the sub-question without answer does improve performance, meanwhile the improvement is limited and remains far smaller than that from providing its answer.

\begin{figure}[p]
  \centering
  \includegraphics[width=0.9\linewidth]{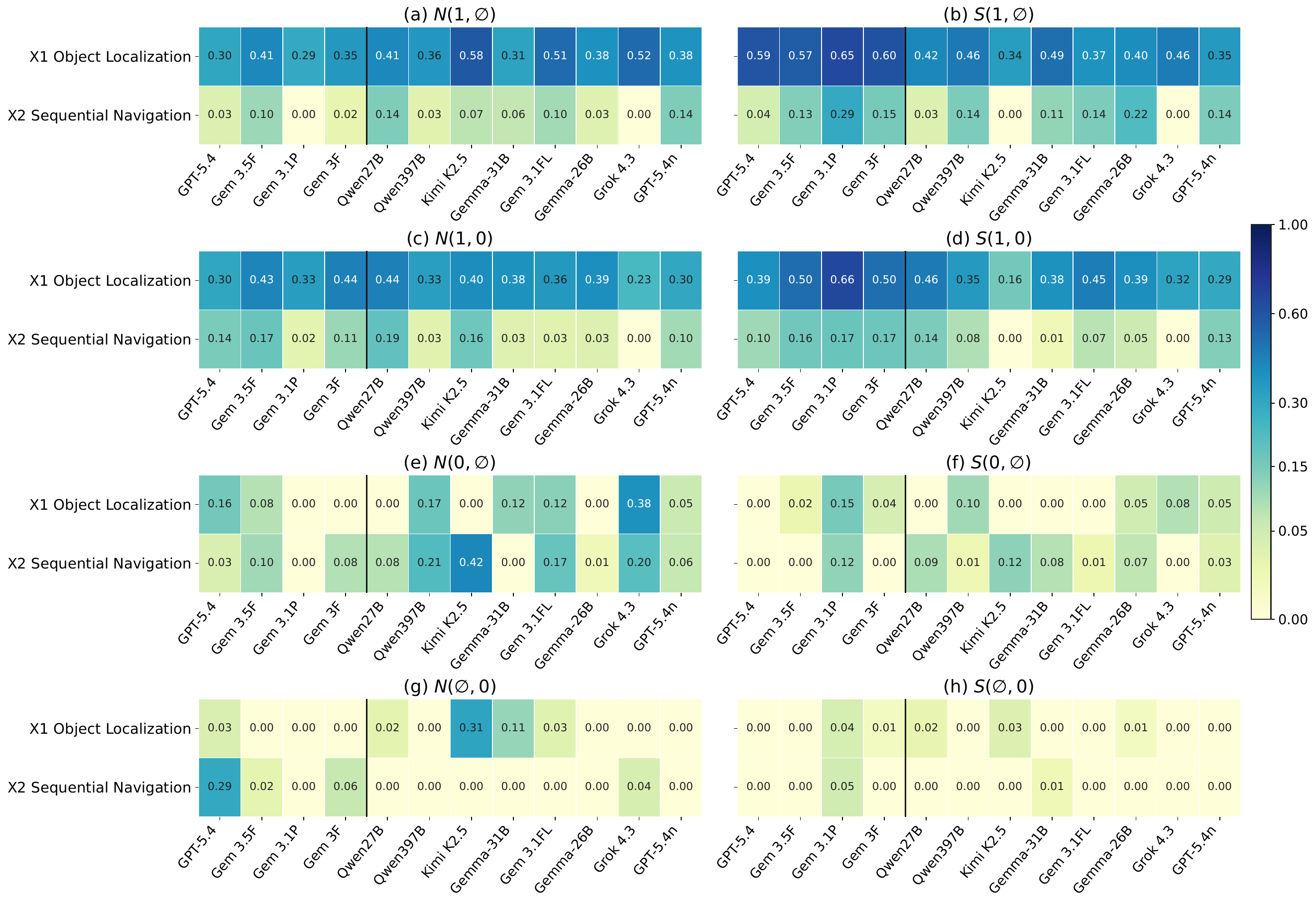}
  \caption{Per-model $N$- and $S$-Scores for the parents of Map Nav. under the four state contrasts. The vertical line separates SOTA from Others.}
  \label{fig:app_e3c_map}
\end{figure}
\begin{figure}[p]
  \centering
  \includegraphics[width=0.9\linewidth]{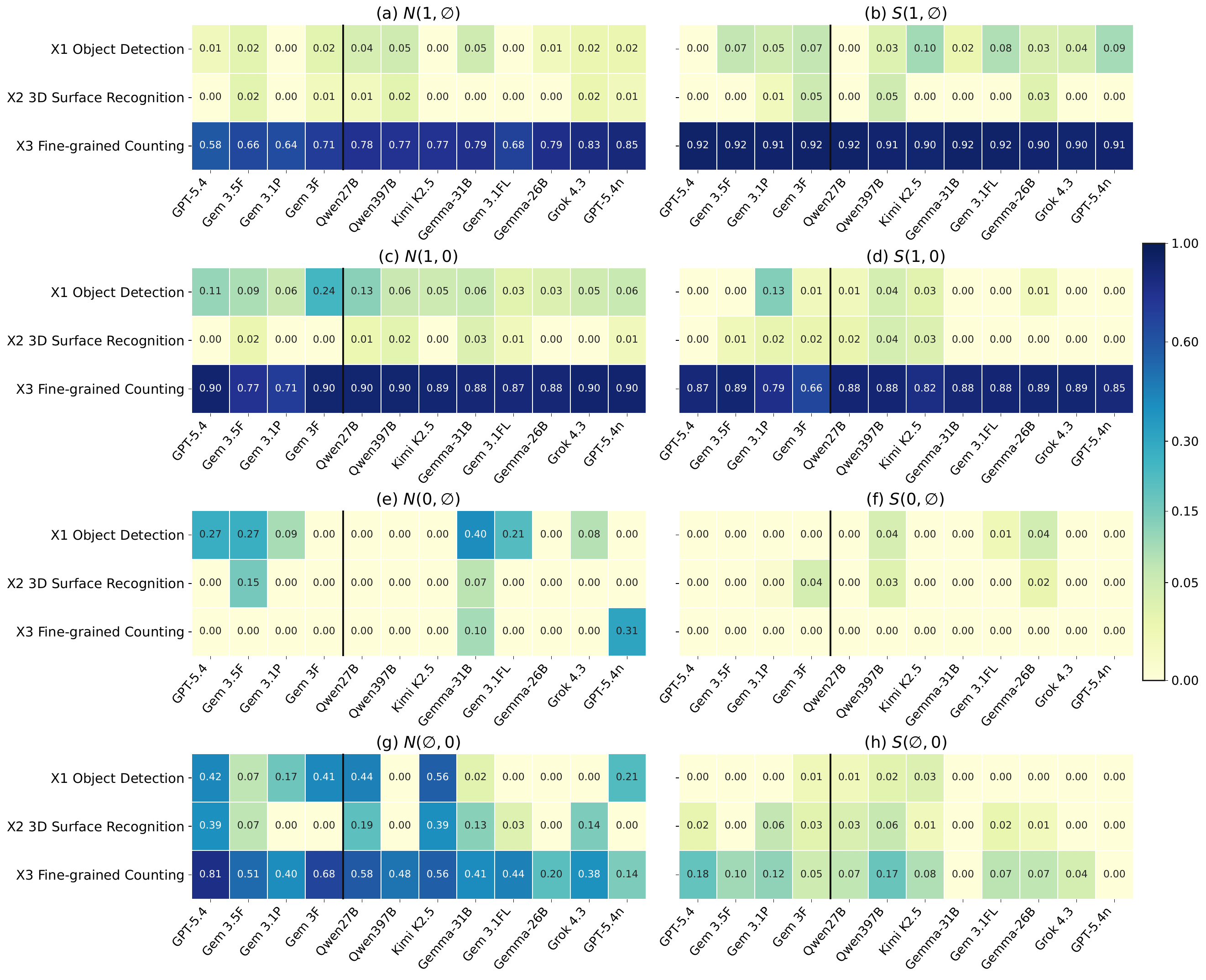}
  \caption{Same as Figure~\ref{fig:app_e3c_map}, for Dice Sum.}
  \label{fig:app_e3c_dice}
\end{figure}
\begin{figure}[p]
  \centering
  \includegraphics[width=0.9\linewidth]{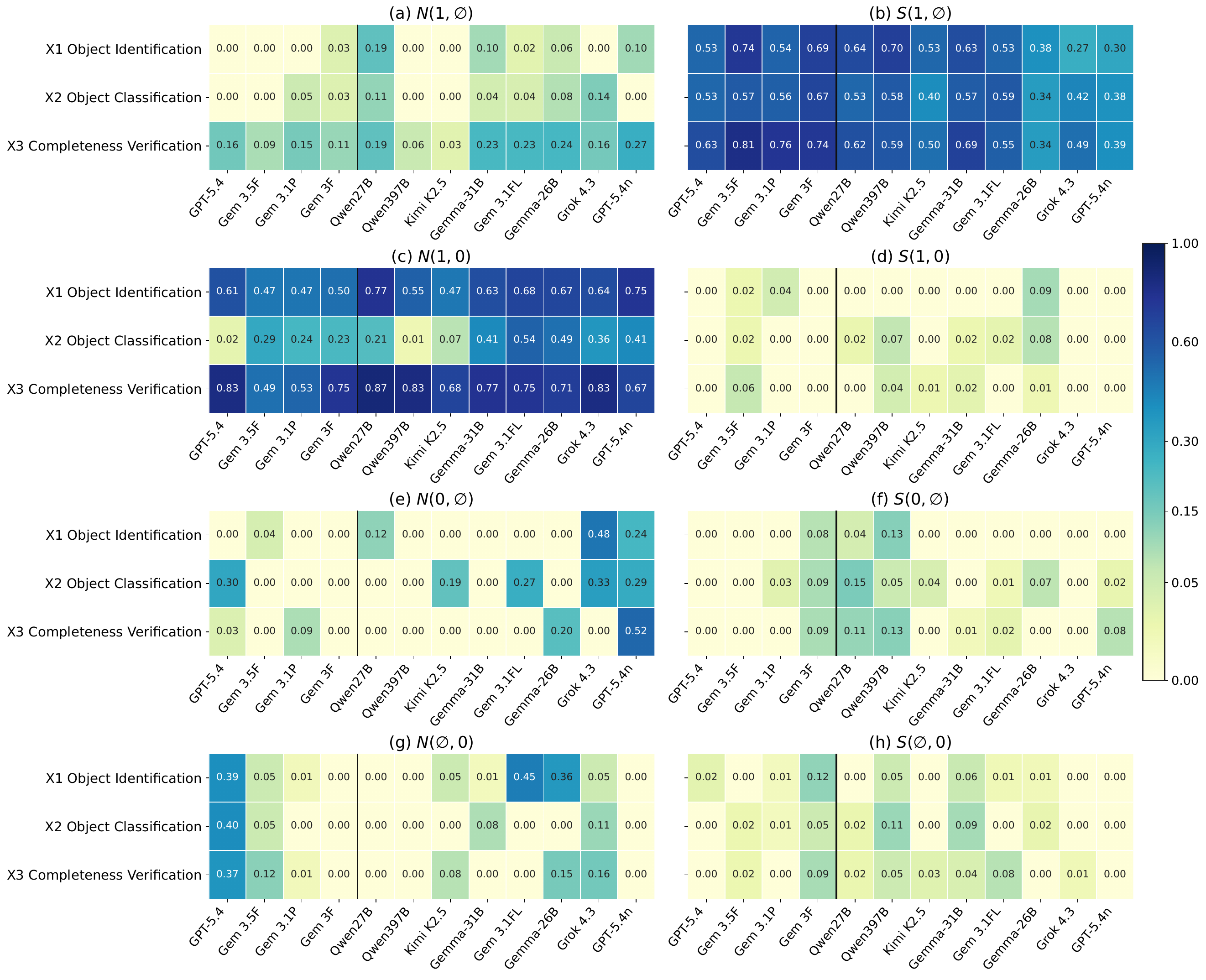}
  \caption{Same as Figure~\ref{fig:app_e3c_map}, for Music Note.}
  \label{fig:app_e3c_music}
\end{figure}
\begin{figure}[p]
  \centering
  \includegraphics[width=0.9\linewidth]{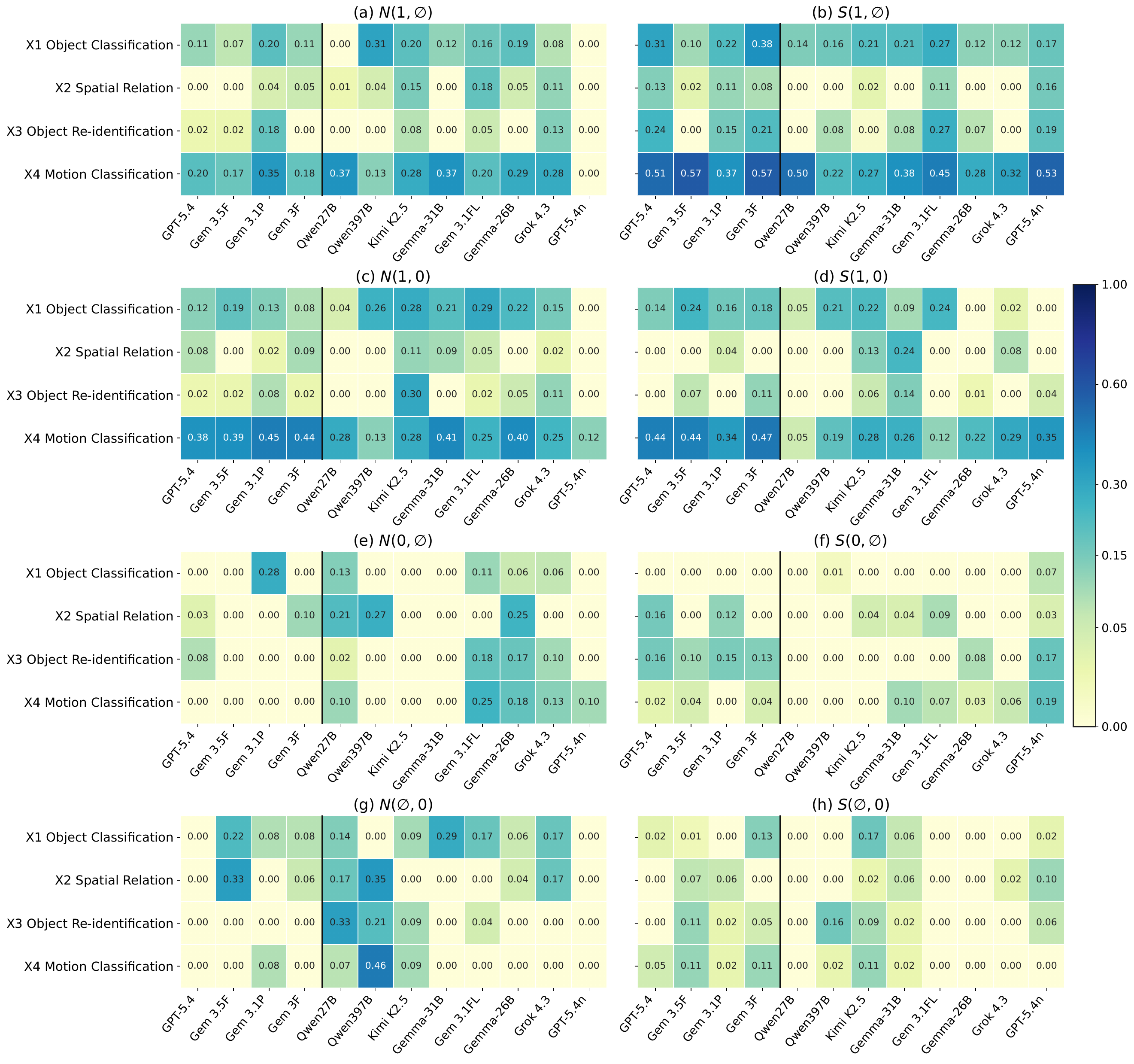}
  \caption{Same as Figure~\ref{fig:app_e3c_map}, for Dashcam Count..}
  \label{fig:app_e3c_dashcam}
\end{figure}
\begin{figure}[p]
  \centering
  \includegraphics[width=0.9\linewidth]{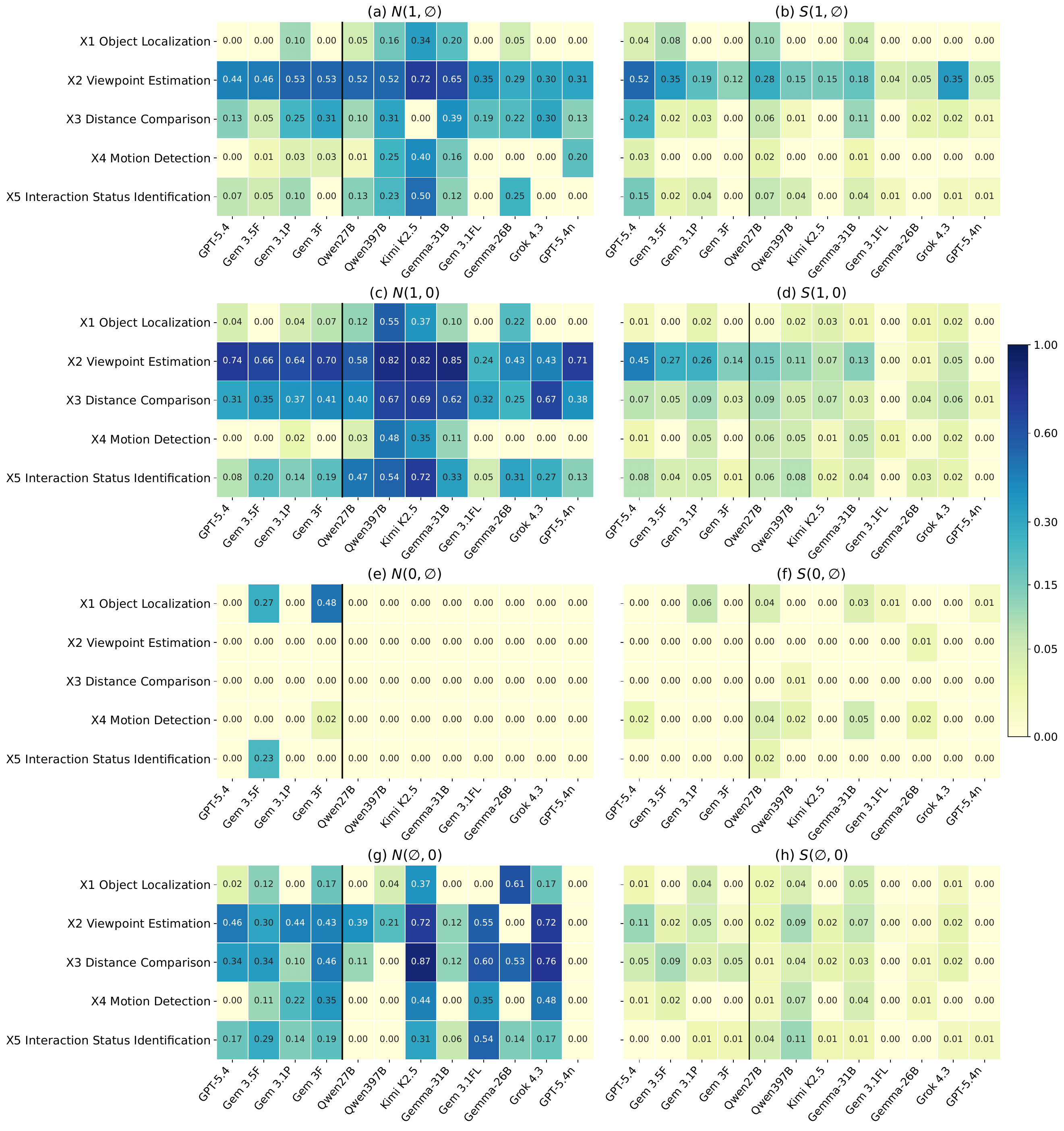}
  \caption{Same as Figure~\ref{fig:app_e3c_map}, for Multi-View Seq..}
  \label{fig:app_e3c_multiview}
\end{figure}
\begin{figure}[p]
  \centering
  \includegraphics[width=0.9\linewidth]{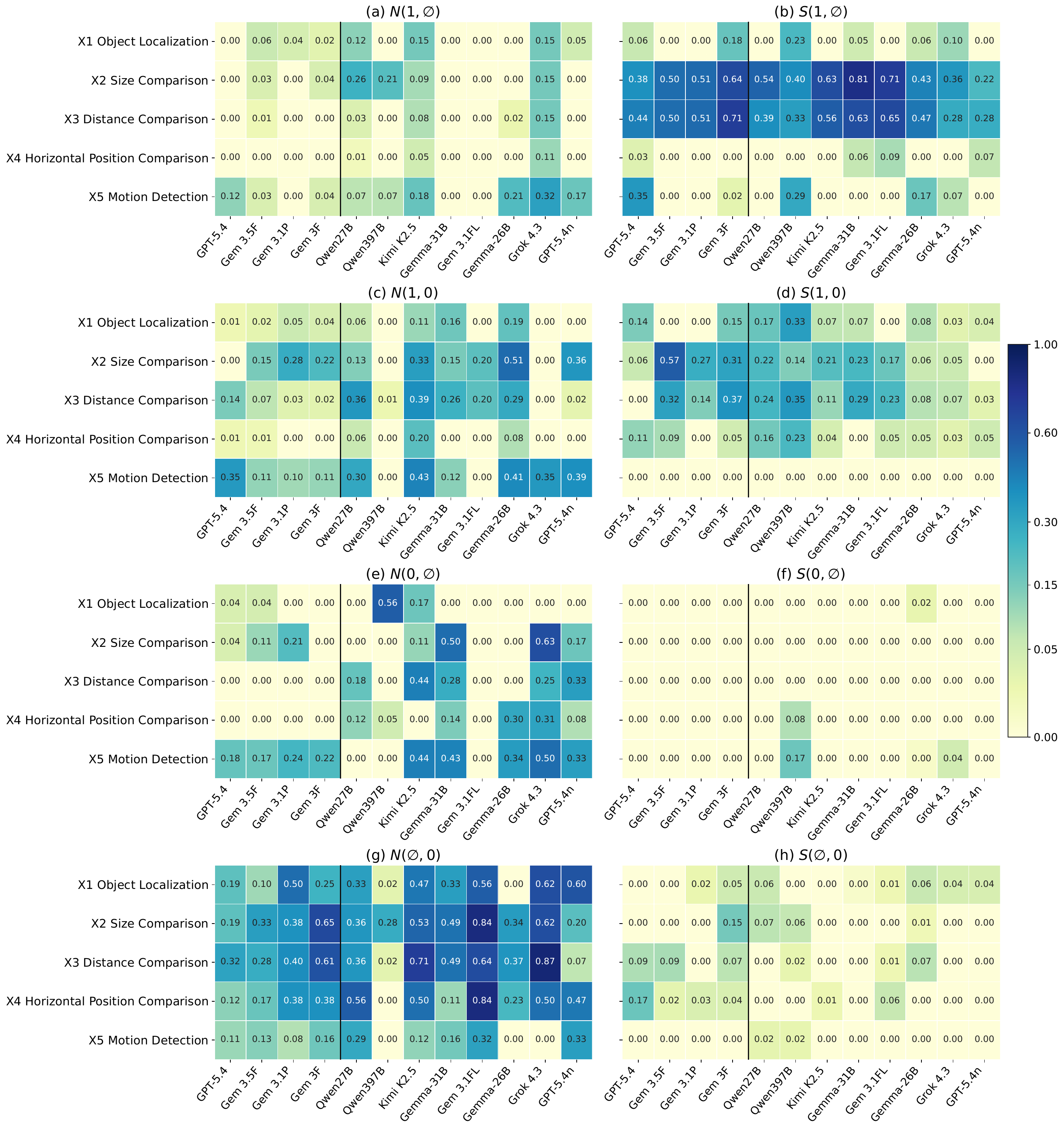}
  \caption{Same as Figure~\ref{fig:app_e3c_map}, for Street-View Ord..}
  \label{fig:app_e3c_streetview}
\end{figure}
\begin{figure}[p]
  \centering
  \includegraphics[width=0.9\linewidth]{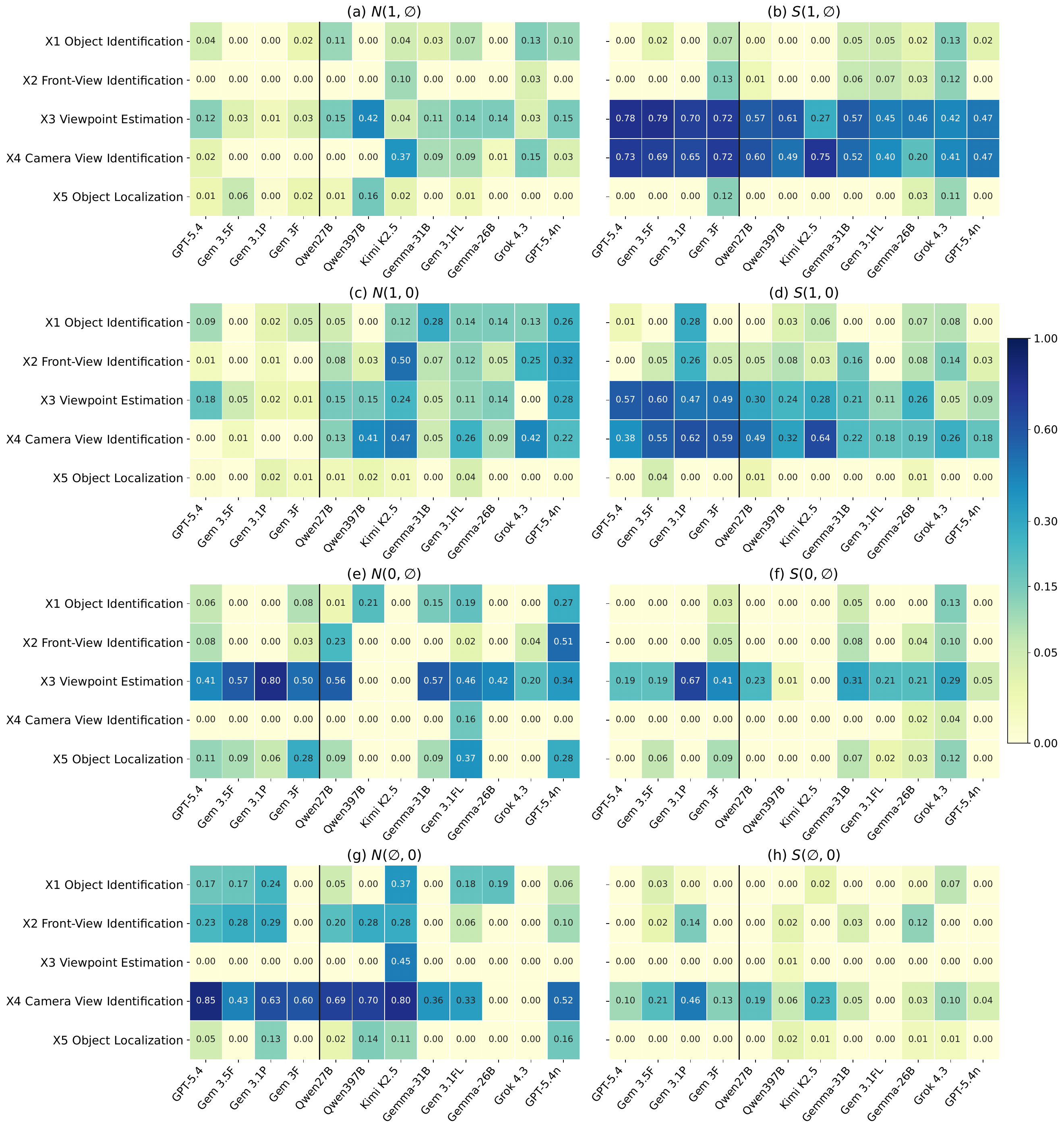}
  \caption{Same as Figure~\ref{fig:app_e3c_map}, for Surround Loc..}
  \label{fig:app_e3c_surround}
\end{figure}
\begin{figure}[p]
  \centering
  \includegraphics[width=0.9\linewidth]{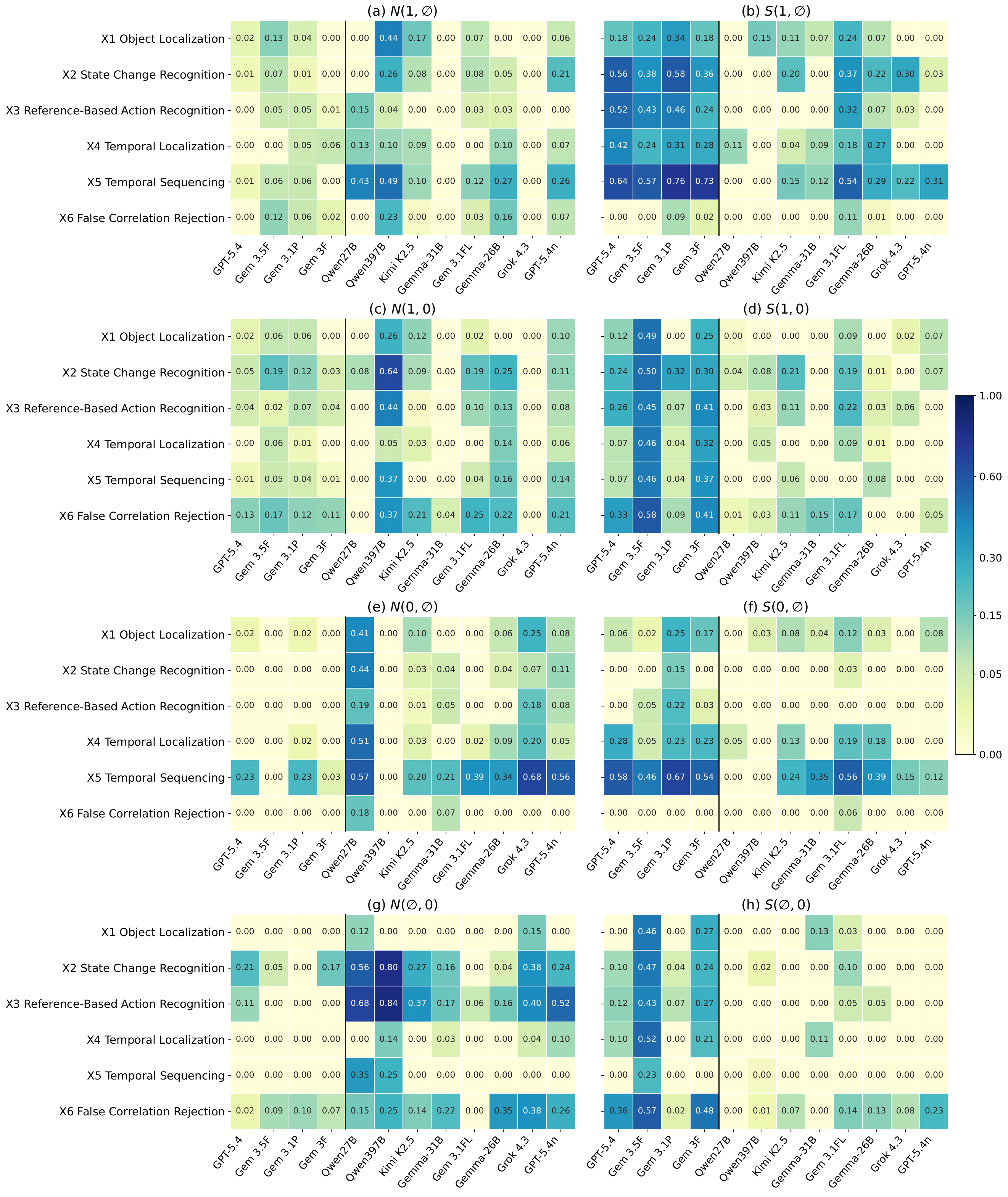}
  \caption{Same as Figure~\ref{fig:app_e3c_map}, for Traffic Causation.}
  \label{fig:app_e3c_traffic}
\end{figure}

%%%%%%%%%%%%%%%%%%%%%%%%%%%%%%%%%%%%%%%%%%%%%%%%%%%%%%%%%%%%%%%%%%%%%%%%
\section{Discussion}
\label{sec:discussion}

\textbf{On the Scope and Completeness of Task Decompositions.}

The SCM of each task declares, for every target task, which unit tasks are placed under controlled intervention. It is not intended as an exhaustive characterization of the task's causal structure or of the model's internal computation, and we do not claim that it is complete: the set of factors that may influence a model's answer on a real-world task is open-ended. A unit task is declared as a prerequisite of a target when solving the target requires the information it represents (Appendix.~\ref{appdx:subsec:bench_design_retionale}), and the same declared structure is used for all evaluated models. Any factor outside the declared parent sets belongs to the unmodeled factors captured by $N_i$ (Definition~\ref{def:scm}); it is not separately intervened on and remains in its natural regime.

Completeness is not required for the validity of the metrics. Every quantity entering the capability and contribution metrics is an interventional probability, estimated by directly running the model under the corresponding regime, including the natural regime $\varnothing$, rather than inferred from co-occurrences between unit-task outcomes. The lower-bound guarantees (Theorems~\ref{thm:general_n_score_bound} and~\ref{thm:general_s_score_bound}) hold for any configuration $\mathbf{c}$ of the remaining declared prerequisites and rest on prerequisite regimes being assigned through injection (Assumption~\ref{assum:interv}), not on the declared parent sets capturing every factor that influences the outcome. N- and S-Scores therefore remain valid lower bounds on PN and PS for the context actually realized, in which the target's declared prerequisites are set as specified and all other factors are left in their natural regime.

What the decomposition determines is the reference set against which each metric is read (Section~\ref{subsec:capability}): IC measures performance on $X_i$ when its declared prerequisites are supplied correctly, $\text{IC}-\text{NC}$ is the net gain from supplying them (the cascading component in Section~\ref{subsec:exp_cap}), and $1-\text{IC}$ is the residual error that persists under correct prerequisites.
The category-level findings in Section~\ref{subsec:exp_cap} are statements of this form, namely which share of the natural error is resolved by supplying the declared prerequisites and which share persists, obtained under the same inclusion criterion for every category. 

Finally, declared dependencies are task-defined, whereas the extent to which a model relies on each of them is measured rather than assumed. Averaged within each category, supplying the declared prerequisites improves performance for every model ($\text{IC}>\text{NC}$ in all 48 model-category cells; Tab~\ref{tab:appd-model-category-nc-ic-rc}). Individually, a declared prerequisite may have small N- and S-Scores under $(1,\varnothing)$ (Section~\ref{sec:exp:contribution}), meaning that neither withholding it in the oracle environment nor supplying it alone in the natural environment substantially changes the outcome. This can occur for a genuine prerequisite: both scores can be small when the model already resolves it correctly within the full task, N when its omission is compensated by the other supplied prerequisites, and S when the outcome remains limited by the other, unsupplied ones. Small scores therefore characterize a model's reliance under the given contrast.

\noindent\textbf{Scope and Limitations.}

\textit{Input-level interventions.}
In our framework, prerequisite states are controlled by injecting answers into the prompt, so all metrics characterize how a model's output responds to prerequisite information provided in its input, rather than how the prerequisite is represented inside the model. Operating at the input level allows the same controlled states to be applied across models, including closed-weight models accessible only through APIs, without relying on unverified readouts of intermediate states. 
How a model processes the injected prerequisite~(whether adopting it, checking it against the visual input, or overriding it—is) therefore part of the behavior measured by the metrics rather than an external assumption: IC measures task performance when correct prerequisite answers are available in the context, and RC measures performance when incorrect ones are supplied. Empirically, models consistently respond to the injected content: under the same prompt template, supplying correct answers improves performance in all 48 model-category cells, whereas supplying incorrect answers degrades all 12 models in perception and spatial tasks (Section~\ref{subsec:exp_cap}). Investigating the internal mechanisms by which models attend to and integrate injected prerequisites is an important direction for future work.

\textit{Design of the incorrect intervention.}
For the incorrect state $A_i=0$, we inject an answer randomly sampled from the valid answer space of unit task $X_i$ (Section~\ref{sec:benchmark}). This provides a standardized, model-independent perturbation that is identical across all evaluated models, ensuring that RC and the contribution scores involving state $0$ are directly comparable across models. It is intended to test how models behave under an explicitly false premise, rather than to simulate the natural errors that a specific model would make on its own, which are already captured under the unassisted state $A_i=\varnothing$.

\textit{Evaluation cost.}
Evaluating a target task $Y$ with $k=|\mathrm{Pa}(Y)|$ prerequisites across the full ternary state space would in principle require $3^k$ prerequisite configurations. By setting the remaining prerequisites $A_{-i}$ to homogeneous states (Appendix Section~\ref{appdx:subsec:remaining_parent}), our framework reduces the required runs per instance to linear in $k$: for each prerequisite $X_i$, pairing $A_i \in \{1,0,\varnothing\}$ with $A_{-i} \in \{1,0,\varnothing\}$ yields nine state pairs, where the three all-same states ($\text{NC}, \text{IC}, \text{RC}$) are shared across all prerequisites, resulting in $3+6k$ runs per instance across all four contrasts. In this paper, we evaluate all four contrasts to provide a complete characterization of model behavior across the ternary intervention space. In practical applications, the $(1,\varnothing)$ contrast is the most directly informative, as it diagnoses how much each prerequisite limits and can improve performance relative to the model's natural state; evaluating $(1,\varnothing)$ alone requires only $2+2k$ runs per instance (the standard end-to-end evaluation plus $1+2k$ interventional runs). Exploring automated task decomposition is also a promising direction for scaling this diagnostic pipeline to broader task domains.

\textit{Scope and future directions.}
This work focuses on diagnostic evaluation, establishing a causal framework and benchmark to identify where and why MLLMs fail on compositional tasks. Developing training or inference-time methods to address the diagnosed bottlenecks is a natural next step that we hope our work will inform. More broadly, our diagnostic framework can be extended to several related areas in future work, including localizing failure bottlenecks in multi-step agentic workflows, guiding targeted capability improvement in post-training and reinforcement learning, and complementing chain-of-thought reasoning analysis.

\end{document}